\newcommand{\mytag}[2]{%
  \text{#1}%
  \@bsphack
  \begingroup
    \@onelevel@sanitize\@currentlabelname
    \edef\@currentlabelname{%
      \expandafter\strip@period\@currentlabelname\relax.\relax\@@@%
    }%
    \protected@write\@auxout{}{%
      \string\newlabel{#2}{%
        {#1}%
        {\thepage}%
        {\@currentlabelname}%
        {\@currentHref}{}%
      }%
    }%
  \endgroup
  \@esphack
}
\newcommand{\polylog}{{\operatorname{polylog}}}
\def\tsum{{\mathop{\textstyle \sum }}}
\newtheorem{lemma}{Lemma}
\newtheorem{theorem}{Theorem}
\newtheorem{corollary}{Corollary}
\newcommand{\Exp}{{\operatorname{\mathbb{E}}}}
\newcommand{\Norm}{\mathcal{N}}
\newcommand{\Real}{\mathbb{R}}
\newcommand{\blue}[1]{{\color{blue}{#1}}}
\newcommand{\ltynorm}[2]{\| {#1} \|_{#2}}
\newcommand{\htheta}{\widehat{\theta}}
\newcommand{\sign}{\operatorname{sign}}
\global\long\def\e{{\bm{e}}}
\global\long\def\g{\bm{g}}
\global\long\def\Pr{\mathbb{P}}
\DeclareMathOperator{\Shrink}{\mathbf{S}}
\newcommand{\diff}{{\mathrm{d}}}
\begin{document}

\title{Approximate Newton-based statistical inference \\ using only stochastic gradients \\ \,}

%
\author{
Tianyang Li\textsuperscript{1} \\
\texttt{lty@cs.utexas.edu}
    \and
    Anastasios Kyrillidis\textsuperscript{2} \\
    \texttt{anastasios@rice.edu}
		\and
		Liu Liu\textsuperscript{1} \\
    \texttt{liuliu@utexas.edu}
    \and
    Constantine Caramanis\textsuperscript{1} \\
    \texttt{constantine@utexas.edu}\\
    \,\\
    {}\textsuperscript{1} The University of Texas at Austin \\
    {}\textsuperscript{2} Rice University
}

\date{}

\maketitle

\begin{abstract}
We present a novel statistical  inference framework for convex empirical risk minimization, using approximate stochastic Newton steps.
The proposed algorithm is based on the notion of finite differences and allows the approximation of a Hessian-vector product from first-order information.
In theory, our method efficiently computes the statistical error covariance in $M$-estimation, both for unregularized convex learning problems and high-dimensional LASSO regression, without using exact second order information, or resampling the entire data set.
We also present a stochastic gradient sampling scheme for statistical inference in non-i.i.d. time series analysis, where we sample  contiguous blocks of indices.
In practice, we demonstrate the effectiveness of our framework on large-scale machine learning problems, that go even beyond convexity:
as a highlight, our work can be used to detect certain adversarial attacks on neural networks.
\end{abstract}


\section{Introduction}


Statistical inference is an important tool for assessing uncertainties, both for estimation and prediction purposes \cite{friedman2001elements, efron2016computer}. 
\emph{E.g.}, in unregularized linear regression and high-dimensional LASSO settings \cite{van2014asymptotically, javanmard2015biasing, tibshirani2015statistical}, 
we are interested in computing coordinate-wise confidence intervals and p-values of a $p$-dimensional variable, in order to infer which coordinates are active or not \cite{wasserman2013all}. 
Traditionally, the inverse Fisher information matrix \cite{edgeworth1908probable} contains the answer to such inference questions;
however it requires storing and computing a $p \times p $ matrix structure, often prohibitive for large-scale applications \cite{tuerlinckx2006statistical}. 
Alternatively, the Bootstrap method is a popular statistical inference algorithm, where we solve an optimization problem per dataset replicate, 
but can be expensive for large data sets \cite{kleiner2014scalable}. 


While optimization is mostly used for point estimates, recently it is also used as a means for statistical inference in large scale machine learning  \cite{li2017statistical, chen2016statistical, su2018statistical, fang2017scalable}.
This manuscript follows this path: we propose an inference framework that uses stochastic gradients to approximate second-order, Newton steps. 
This is enabled by the fact that we only need to compute Hessian-vector products; in math, this can be approximated using 
$\nabla^2 f(\theta) v \approx \tfrac{\nabla f(\theta + \delta v) - \nabla f(\theta)}{\delta}$, where $f$ is the objective function, and $\nabla f$, $\nabla^2 f$ denote the gradient and Hessian of $f$.
Our method can be interpreted as a generalization of the SVRG approach in optimization \cite{johnson2013accelerating} (\Cref{append:sec:svrg-intuition}); further, it is related to other stochastic Newton methods (e.g. \cite{agarwal2017second}) when $\delta \to 0$. 
We defer the reader to \Cref{sec:realted-work} for more details. 
In this work, we apply our algorithm to unregularized $M$-estimation, and we use a similar approach, with proximal approximate Newton steps, in high-dimensional linear regression. 


Our contributions can be summarized as follows; a more detailed discussion is deferred to \Cref{sec:realted-work}:
\renewcommand\labelitemi{\ding{111}}
\begin{itemize}[leftmargin=0.5cm]
\item For the case of unregularized $M$-estimation, our method efficiently computes the statistical error covariance, useful for confidence intervals and p-values. 
Compared to state of the art, our scheme $\pmb{(i)}$ guarantees consistency of computing the statistical error covariance, $\pmb{(ii)}$ exploits better the available information (without wasting computational resources to compute quantities that are thereafter discarded), and $\pmb{(iii)}$ converges to the optimum (without 
swaying around it).
\item For high-dimensional linear regression, we propose a different estimator (see \eqref{eq:lasso-mod-cov}) than the current literature. It is the result of a different optimization problem that is strongly convex with high probability. 
This permits the use of linearly convergent proximal algorithms \cite{xiao2014proximal,lee2014proximal} towards the optimum; in contrast, state of the art only guarantees  convergence to a neighborhood of the LASSO solution within statistical error. 
Our model also does not assume that absolute values of the true parameter's non-zero entries are lower bounded. 
\item For statistical inference in non-i.i.d. time series analysis, we sample contiguous blocks of indices (instead of uniformly sampling) to compute stochastic gradients. 
This is similar to the Newey-West estimator \cite{newey1986simple} for HAC (heteroskedasticity and autocorrelation consistent) covariance estimation, 
  but does not waste computational resources to compute the entire matrix. 
\item The effectiveness of our framework goes even beyond convexity. As a highlight, we show that our work can be used to detect certain adversarial attacks on neural networks. 
\end{itemize}

%



\section{Unregularized $M$-estimation}\label{sec:unregularized-m-est}


In unregularized, low-dimensional $M$-estimation problems, we estimate a parameter of interest:
\begin{align*}
\theta^\star = \arg \min_{\theta \in \mathbb{R}^p} \Exp_{X \sim P} \left[\ell(X; \theta)\right],\quad \text{where $P(X)$ is the data distribution},   
\end{align*}
using {\em empirical risk minimization} (ERM)  on $n > p$ i.i.d. data points $\{X_i\}_{i=1}^n$:
\begin{align*}
\htheta = \arg \min_{\theta \in \mathbb{R}^p} \tfrac{1}{n} \sum_{i=1}^n \ell(X_i; \theta) . 
\end{align*}
Statistical inference, such as computing one-dimensional confidence intervals, gives us information beyond the point estimate $\htheta$, when $\htheta$ has an asymptotic limit distribution \cite{wasserman2013all}. 
\emph{E.g.}, under regularity conditions, the $M$-estimator satisfies asymptotic normality \cite[Theorem 5.21]{van1998asymptotic}. 
\emph{I.e.}, $\sqrt{n}(\htheta - \theta^\star)$ weakly converges to a normal distribution:
\begin{align*}
\sqrt{n}  \left( \htheta - \theta^\star \right) \to \Norm \left(0, {H^\star}^{-1} G^\star {H^\star}^{-1} \right) , 
\end{align*}
where $H^\star = \Exp_{X \sim P}[\nabla^2_\theta \ell(X;\theta^\star)]$ and $G^\star = \Exp_{X \sim P}[\nabla_\theta \ell(X;\theta^\star) \, \nabla_\theta \ell(X;\theta^\star)^\top]$. 
We can perform statistical inference when we have a good estimate of ${H^\star}^{-1} G^\star {H^\star}^{-1}$. 
In this work, we use the plug-in covariance estimator $\widehat{H}^{-1} \widehat{G} \widehat{H}^{-1}$ for ${H^\star}^{-1} G^\star {H^\star}^{-1}$, where: 
\begin{align*}
\widehat{H} = \tfrac{1}{n} \sum_{i=1}^n \nabla^2_\theta \ell(X_i;\htheta) , \quad \text{ and } \quad
\widehat{G} = \tfrac{1}{n} \sum_{i=1}^n \nabla_\theta \ell(X_i;\htheta) \, \nabla_\theta \ell(X_i;\htheta)^\top . 
\end{align*}
Observe that, in the naive case of directly computing $\widehat{G}$ and $\widehat{H}^{-1}$, we require both high computational- and  space-complexity. 
Here, instead, we utilize approximate stochastic Newton motions from first order information to compute the quantity  $\widehat{H}^{-1} \widehat{G} \widehat{H}^{-1}$. 

\subsection{Statistical inference with approximate Newton steps using only stochastic gradients}
Based on the above, we are interested in solving the following $p$-dimensional optimization problem:
\begin{align*}
\htheta = \arg \min_{\theta \in \Real^p} f(\theta) := \tfrac{1}{n} \sum_{i=1}^n f_i(\theta), \quad \text{where $f_i(\theta) = \ell(X_i;\theta)$.} 
\end{align*}
Notice that $\widehat{H}^{-1} \widehat{G} \widehat{H}^{-1}$ can be written as $\frac{1}{n}$ $\sum_{i=1}^n$ $ \left( \widehat{H}^{-1} \nabla_\theta \ell(X_i;\htheta) \right)$ $ \left( \widehat{H}^{-1} \nabla_\theta \ell(X_i;\htheta) \right)^\top$, 
which can be interpreted as the covariance of stochastic --inverse-Hessian conditioned-- gradients  at $\htheta$. 
Thus, the covariance of stochastic Newton steps can be used for statistical inference. 

\begin{algorithm}
\begin{algorithmic}[1]
%
%
\STATE \textbf{Parameters:} $S_o, S_i \in \mathbb{Z}_+$; $\rho_0, \tau_0 \in \mathbb{R}_+$; $d_o, d_i \in \left(\tfrac{1}{2}, 1\right)$  \quad   \textbf{Initial state:} $\theta_0 \in \mathbb{R}^p$ \\
{\kern1.5pt \hrule \kern2pt} 
\FOR[approximate stochastic Newton descent]{$t = 0 \text{ to } T-1$}  \label{alg:stat-inf:unregularized:outer:start} 
	\STATE $\rho_t \gets \rho_0(t + 1)^{-d_o}$
	\STATE $I_o \gets$ uniformly sample $S_o$ indices with replacement from $[n]$ \label{alg:stat-inf-spnd:sgd1}
	\STATE $g^0_t \gets -\rho_t \left( \frac{1}{S_o} \sum_{i  \in I_o} \nabla f_i(\theta_t) \label{alg:stat-inf-spnd:sgd2} \right) $ 
	\FOR[solving \eqref{eq:sec:unregularized-m-est:inner-loops:intuition:newton:obj} approximately using SGD]{$j = 0 \text{ to } L-1$} \label{alg:stat-inf-spnd:spnd-start}
		\STATE $\tau_j \gets \tau_0 (j+1)^{-d_i}$ and $\delta_t^j \gets O(\rho_t^4 \tau_j^4)$
		\STATE $I_i \gets$ uniformly sample $S_i$ indices without replacement from $[n]$
		\STATE $g^{j+1}_t \gets g_t^j -\tau_j  \left( \frac{1}{S_i}  \sum_{k \in I_i}  \tfrac{\nabla f_k(\theta_t +  {\delta_t^j} g_t^j)  - \nabla f_k(\theta_t)}{\delta_t^j} \right) + \tau_j g_t^0$  
	\ENDFOR 
	\STATE Use $\sqrt{S_o} \cdot \frac{\bar{g}_t}{\rho_t}$ for statistical inference, where $\bar{g}_t = \frac{1}{L+1} \sum_{j=0}^L g^j_t$
	\STATE $\theta_{t+1} \gets \theta_t + g_t^L$  \label{alg:stat-inf-spnd:spnd-end} 
\ENDFOR \label{alg:stat-inf:unregularized:outer:end}
\end{algorithmic}
\caption{Unregularized M-estimation statistical inference}
\label{alg:stat-inf-spnd}
\end{algorithm}

\Cref{alg:stat-inf-spnd} approximates each stochastic Newton $\widehat{H}^{-1} \nabla_\theta \ell(X_i;\htheta)$ step using only first order information. 
We start from $\theta_0$ which is sufficiently close to $\htheta$, which can be effectively achieved using SVRG \cite{johnson2013accelerating}; a description of the SVRG algorithm can be found in \Cref{append:sec:svrg-intuition}. 
Lines \ref{alg:stat-inf-spnd:sgd1}, \ref{alg:stat-inf-spnd:sgd2}  compute a stochastic gradient whose covariance is used as part of  statistical inference. 
Lines \ref{alg:stat-inf-spnd:spnd-start} to \ref{alg:stat-inf-spnd:spnd-end} use SGD to solve the Newton step, 
\begin{align}
\min_{g \in \mathbb{R}^p}  \left\langle  \tfrac{1}{S_o} \sum_{i  \in I_o} \nabla f_i(\theta_t)   , g \right\rangle + \tfrac{1}{2 \rho_t} \left \langle g, \nabla^2 f(\theta_t) g \right \rangle, \label{eq:sec:unregularized-m-est:inner-loops:intuition:newton:obj}
\end{align}
which can be seen as a generalization of SVRG; this relationship is described in more detail in \Cref{append:sec:svrg-intuition}.
In particular, these lines correspond to solving \eqref{eq:sec:unregularized-m-est:inner-loops:intuition:newton:obj} using SGD by uniformly sampling a random $f_i$, and approximating:
\begin{align}
\nabla^2 f(\theta) g \approx \tfrac{\nabla f(\theta + {\delta_t^j} g) - \nabla f(\theta)}{\delta_t^j} = \Exp \left[ \tfrac{\nabla f_i(\theta + {\delta_t^j} g) - \nabla f_i(\theta)}{\delta_t^j} \mid \theta\right].
	\label{eq:sec:unregularized-m-est:inner-loops:intuition:hessian-vec-product}
\end{align} 
Finally, the outer loop (lines \ref{alg:stat-inf:unregularized:outer:start} to \ref{alg:stat-inf:unregularized:outer:end}) can be viewed as solving inverse Hessian conditioned stochastic gradient descent, similar to stochastic natural gradient descent \cite{amari1998natural}.

In terms of parameters, similar to \cite{polyak1992acceleration, ruppert1988efficient}, we use a decaying step size in Line 8 to control the error of approximating $H^{-1} g$. 
We set $\delta_t^j =  O(\rho_t^4 \tau_j^4)$ to control the error of approximating Hessian vector product using a finite difference of gradients, so that it is smaller than the error of approximating  $H^{-1} g$ using stochastic approximation. 
For similar reasons, we use a decaying step size in the outer loop to control the optimization error. 

The following theorem characterizes the behavior of \Cref{alg:stat-inf-spnd}.
\begin{theorem}
\label{thm:spnd-bound}
For a twice continuously differentiable and convex function $f(\theta)=\frac{1}{n} \sum_{i=1}^n f_i(\theta)$
where each $f_i$ is also convex and twice continuously differentiable, 
assume  $f$ satisfies
\begin{itemize}[leftmargin=0.5cm]
\item strong convexity: $\forall \theta_1, \theta_2$,  $f(\theta_2) \geq f(\theta_1) + \langle \nabla f(\theta_1) , \theta_2 - \theta_1  \rangle + \frac{1}{2} \alpha \| \theta_2 - \theta_1 \|_2^2$; \vspace{-0.1cm}
\item  $\forall \theta$, each $\| \nabla^2 f_i(\theta) \|_2 \leq \beta_i$, which implies that $f_i$ has Lipschitz gradient: $\forall \theta_1, \theta_2$,  $\| \nabla f_i(\theta_1) - \nabla f_i(\theta_2) \|_2 \leq \beta_i \| \theta_1 - \theta_2 \|_2 $; \vspace{-0.1cm}
\item each $\nabla^2 f_i$ is Lipschitz continuous: $\forall \theta_1, \theta_2$, $\| \nabla^2 f_i(\theta_2) - \nabla^2 f_i(\theta_1) \|_2 \leq h_i \|\theta_2 - \theta_1 \|_2 $.  
\end{itemize}

In \Cref{alg:stat-inf-spnd}, we assume that batch sizes $S_o$---in the outer loop---and $S_i$---in the inner loops---are $O(1)$. 
The outer loop step size is
\begin{align}
\rho_t = \rho_0 \cdot (t+1)^{-d_o}, \quad \text{where $d_o \in \left(\tfrac{1}{2}, 1\right)$ is the decaying rate. } \label{eq:sec:unregularized-m-est:outer-loops:step-size}
\end{align}

In each outer loop, 
the inner loop step size is 
\begin{align}
\tau_j = \tau_0 \cdot  (j+1)^{-d_i}, \quad \text{where $d_i \in \left(\tfrac{1}{2},1 \right)$ is the decaying rate. } \label{eq:sec:unregularized-m-est:inner-loops:step-size}
\end{align}
The scaling constant for Hessian vector product approximation is 
\begin{align}
{\delta_t^j} = \delta_0 \cdot \rho_t^4 \cdot \tau_j^4  = o\left(\tfrac{1}{(t+1)^2(j+1)^2} \right). 
	\label{eq:sec:unregularized-m-est:gradient:finiite-diference:delta}
\end{align}

Then, for the outer iterate $\theta_t$ we have \\
\noindent\begin{minipage}{0.4\textwidth}
\begin{equation}
\Exp \left[ \|\theta_t - \htheta\|_2^2 \right] \lesssim t^{-d_o}, \label{eq:thm:spnd-bound:outer-iter}
\end{equation} 
    \end{minipage}%
    \begin{minipage}{0.1\textwidth}\centering
    and 
    \end{minipage}%
    \begin{minipage}{0.4\textwidth}
\begin{equation}
\Exp \left[\|\theta_t - \htheta\|_2^4 \right] \lesssim t^{-2d_o}.  \label{eq:thm:spnd-bound:outer-iter:4th-moment}  
\end{equation}
    \end{minipage}

In each outer loop, after $L$ steps of the inner loop, we have:
\begin{align}
	\Exp \left[ \left\| \tfrac{\bar{g}_t}{\rho_t} - [\nabla^2 f(\theta_t)]^{-1} g_t^0 \right\|_2^2 \mid \theta_t \right] \lesssim \tfrac{1}{L} \left\|g_t^0 \right\|_2^2, \label{eq:thm:spnd-bound:newton}  
\end{align}
and at each step of the inner loop, we have:
\begin{align}
	\Exp \left[ \left\| g_t^{j+1}- [\nabla^2 f(\theta_t)]^{-1} g_t^0 \right\|_2^4 \mid \theta_t \right] \lesssim  (j+1)^{-2d_i} \left\|g_t^0 \right\|_2^4. \label{eq:thm:spnd-bound:newton:4th-moment} 
\end{align}

After $T$ steps of the outer loop, we have a non-asymptotic bound on the ``covariance'':
\begin{align}
\label{eq:thm:spnd-bound:covariance-bound}
	  \Exp \left[ \left\| H^{-1} G H^{-1}  - \tfrac{S_o}{T} \sum_{t=1}^T \tfrac{\bar{g}_t \bar{g}_t^\top}{\rho_t^2} \right\|_2 \right] \lesssim T^{-\frac{d_o}{2}} + L^{-\frac{1}{2}}, 
\end{align}
where $H = \nabla^2 f(\htheta)$ and $G = \frac{1}{n} \sum_{i=1}^n \nabla f_i(\htheta)\,  \nabla f_i(\htheta)^\top $.

\end{theorem}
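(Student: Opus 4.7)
The plan is to prove the four claims in order of increasing dependence: first the inner-loop bounds (\ref{eq:thm:spnd-bound:newton}) and (\ref{eq:thm:spnd-bound:newton:4th-moment}), then the outer-loop bounds (\ref{eq:thm:spnd-bound:outer-iter}) and (\ref{eq:thm:spnd-bound:outer-iter:4th-moment}), and finally the covariance bound (\ref{eq:thm:spnd-bound:covariance-bound}) by combining the first two via a bias--variance decomposition.

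For the inner loop, observe that (\ref{eq:sec:unregularized-m-est:inner-loops:intuition:newton:obj}) is a strongly convex quadratic in $g$ whose unique minimizer is $g_t^\star := [\nabla^2 f(\theta_t)]^{-1} g_t^0$. The inner update is an SGD step on this quadratic in which each stochastic Hessian-vector product is replaced by a finite difference. Using Taylor's theorem together with the Lipschitz continuity of each $\nabla^2 f_k$, I would show
\begin{align*}
\tfrac{\nabla f_k(\theta_t + \delta_t^j g_t^j) - \nabla f_k(\theta_t)}{\delta_t^j} = \nabla^2 f_k(\theta_t)\, g_t^j + r_t^{j,k}, \qquad \|r_t^{j,k}\|_2 \lesssim h_k \,\delta_t^j\, \|g_t^j\|_2^2,
\end{align*}
so that the choice $\delta_t^j = O(\rho_t^4 \tau_j^4)$ makes this bias asymptotically negligible compared with the SGD noise at level $\tau_j$. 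Applying the standard Polyak--Ruppert averaging analysis for SGD on strongly convex quadratics (\emph{cf.}\ \cite{polyak1992acceleration,ruppert1988efficient}) then yields (\ref{eq:thm:spnd-bound:newton}); for the last-iterate 4th-moment bound (\ref{eq:thm:spnd-bound:newton:4th-moment}) I would set up the recursion
\begin{align*}
\Exp\bigl[\|g_t^{j+1} - g_t^\star\|_2^4 \mid \theta_t\bigr] \lesssim (1 - c\,\tau_j)^4\, \Exp\bigl[\|g_t^j - g_t^\star\|_2^4 \mid \theta_t\bigr] + O\bigl(\tau_j^4\, \|g_t^0\|_2^4\bigr),
\end{align*}
and unroll it with the polynomial step size $\tau_j = \tau_0 (j+1)^{-d_i}$.

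For the outer loop, decompose $g_t^L = -\rho_t [\nabla^2 f(\theta_t)]^{-1} \widehat{\nabla}_t + e_t$, where $\widehat{\nabla}_t := \tfrac{1}{S_o}\sum_{i \in I_o} \nabla f_i(\theta_t)$ and $e_t$ is the inner-loop residual whose 2nd and 4th conditional moments are controlled by the previous step. Writing $\theta_{t+1} - \htheta = (\theta_t - \htheta) - \rho_t [\nabla^2 f(\theta_t)]^{-1} \widehat{\nabla}_t + e_t$ and using strong convexity and smoothness of $f$ (so that the noiseless natural-gradient step provides a contraction in the $\nabla^2 f(\theta_t)$-weighted norm), one obtains
\begin{align*}
\Exp \|\theta_{t+1} - \htheta\|_2^2 \le (1 - c\rho_t)\, \Exp \|\theta_t - \htheta\|_2^2 + O(\rho_t^2),
\end{align*}
where the $O(\rho_t^2)$ absorbs both the minibatch variance $\Var(\widehat{\nabla}_t)/S_o$ and $\Exp\|e_t\|_2^2$. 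The standard Chung lemma for polynomially decaying step sizes with $d_o \in (\tfrac12,1)$ then yields (\ref{eq:thm:spnd-bound:outer-iter}); the analogous recursion on the 4th power, with the 4th-moment inner bound used to control $\Exp\|e_t\|_2^4$, gives (\ref{eq:thm:spnd-bound:outer-iter:4th-moment}).

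For the covariance claim, introduce $\tilde{\mu}_t := S_o [\nabla^2 f(\theta_t)]^{-1} g_t^0 (g_t^0)^\top [\nabla^2 f(\theta_t)]^{-1}/\rho_t^2$ and $\mu_t := \Exp[\tilde{\mu}_t \mid \theta_t]$, and decompose
\begin{align*}
H^{-1} G H^{-1} - \tfrac{S_o}{T}\sum_{t=1}^T \tfrac{\bar g_t \bar g_t^\top}{\rho_t^2} = \underbrace{\tfrac{1}{T}\sum_t\bigl(H^{-1}GH^{-1} - \mu_t\bigr)}_{A_T} + \underbrace{\tfrac{1}{T}\sum_t(\mu_t - \tilde{\mu}_t)}_{B_T} + \underbrace{\tfrac{S_o}{T}\sum_t\bigl(\tilde{\mu}_t/S_o - \bar g_t \bar g_t^\top/\rho_t^2\bigr)}_{C_T}.
\end{align*}
The term $C_T$ is controlled by (\ref{eq:thm:spnd-bound:newton}) and Cauchy--Schwarz, giving $\Exp\|C_T\|_2 \lesssim L^{-1/2}$. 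The martingale-difference term $B_T$ is bounded by a direct operator-norm variance calculation, yielding $\Exp\|B_T\|_2 \lesssim T^{-1/2}$. The main obstacle, as I expect, is $A_T$: using Lipschitz continuity of $\theta \mapsto [\nabla^2 f(\theta)]^{-1}$ (from the Lipschitz Hessians $\nabla^2 f_i$) and of $\theta \mapsto G_\theta$ (from smoothness of $\nabla f_i$), each conditional bias is $O(\|\theta_t - \htheta\|_2)$, so by (\ref{eq:thm:spnd-bound:outer-iter}) and Jensen's inequality,
\begin{align*}
\Exp\|A_T\|_2 \lesssim \tfrac{1}{T}\sum_{t=1}^T \sqrt{\Exp\|\theta_t - \htheta\|_2^2} \lesssim \tfrac{1}{T}\sum_{t=1}^T t^{-d_o/2} \lesssim T^{-d_o/2},
\end{align*}
which dominates $B_T$ since $d_o < 1$. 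Summing the three contributions yields (\ref{eq:thm:spnd-bound:covariance-bound}).
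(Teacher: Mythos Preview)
Your plan matches the paper's structure: inner loop treated as SGD on a strongly convex quadratic with a finite-difference bias controlled by $\delta_t^j$, outer loop as a one-step contraction recursion with polynomial step sizes (the paper works with $f(\theta_t)-f(\htheta)$ via the descent lemma rather than $\|\theta_t-\htheta\|_2^2$ directly, but these are equivalent up to condition-number constants), and the covariance bound via a three-way split. Your matrix-level decomposition $A_T+B_T+C_T$ is organized a bit differently from the paper, which instead decomposes the vector $\bar g_t/\rho_t = -H^{-1}\tfrac{1}{S_o}\sum_{i\in I_o}\nabla f_i(\htheta) + (\text{Hessian/gradient corrections}) + (\text{inner-loop residual})$ and then expands the outer product; both routes arrive at the same $T^{-d_o/2}+L^{-1/2}$ bound, and your version is arguably cleaner to read.

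One place that needs tightening: your 4th-moment recursion with contraction $(1-c\tau_j)^4$ and additive term $O(\tau_j^4\|g_t^0\|_2^4)$ is too optimistic. Expanding $\|g_t^{j+1}-g_t^\star\|_2^4$ produces cross terms of order $\tau_j^2\,\|g_t^j-g_t^\star\|_2^2\,\|g_t^0\|_2^2$ that do not vanish in expectation (the martingale structure only kills the odd-in-noise terms). The paper handles this by feeding in the already-established 2nd-moment bound $\Exp[\|g_t^j-g_t^\star\|_2^2\mid\theta_t]\lesssim (j+1)^{-d_i}\|g_t^0\|_2^2$, which converts the cross term into $O(\tau_j^3\|g_t^0\|_2^4)$, and then unrolls to obtain the claimed $(j+1)^{-2d_i}$ rate. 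The same bootstrapping from 2nd to 4th moments is needed for the outer-loop bound (\ref{eq:thm:spnd-bound:outer-iter:4th-moment}).
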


Some comments on the results in \Cref{thm:spnd-bound}. 
The main outcome is that \eqref{eq:thm:spnd-bound:covariance-bound} provides a non-asymptotic bound and consistency guarantee for computing the estimator covariance using \Cref{alg:stat-inf-spnd}. 
This is based on the bound for approximating the inverse-Hessian conditioned stochastic gradient in \eqref{eq:thm:spnd-bound:newton}, and the optimization bound in \eqref{eq:thm:spnd-bound:outer-iter}. 
As a side note, the rates in \Cref{thm:spnd-bound} are very  similar to classic results in stochastic approximation \cite{polyak1992acceleration, ruppert1988efficient}; however the nested structure of outer and inner loops is different from standard stochastic approximation algorithms. 
Heuristically, calibration methods for parameter tuning in subsampling methods (\cite{efron1994introduction}, Ch.18; \cite{politis2012subsampling}, Ch. 9) can be used for hyper-parameter tuning in our algorithm.


In \Cref{alg:stat-inf-spnd}, $\left\{\sfrac{\bar{g}_t}{\rho_t} \right\}_{i=1}^n$ does not have  asymptotic normality.
\emph{I.e.}, $\tfrac{1}{\sqrt{T}} \sum_{t=1}^T \frac{\bar{g}_t}{\rho_t}$ does not weakly converge to $\Norm \left(0, \frac{1}{S_o} H^{-1} G H^{-1} \right)$; we give an example using mean estimation in \Cref{subsec:foasnd:no-asymptotic-normality:example:mean-est}. 
For a similar algorithm based on SVRG (\Cref{alg:stat-inf-svrg} in \Cref{sec:svrg-fosnd:inference:unregularized}), we show that we have asymptotic normality and improved bounds for the ``covariance''; however, this requires a full gradient evaluation in each outer loop. 
In \Cref{sec:foasnd-stat-inf-increasing-segment}, we present corollaries for the case where the iterations in the inner loop increase, as the counter in the outer loop increases (\emph{i.e.}, $(L)_t $ is an increasing series). 
This guarantees consistency (convergence of the covariance estimate to $H^{-1} G H^{-1}$), although it is less efficient than using a constant number of inner loop iterations.  
Our procedure also serves as a general and flexible framework for using different stochastic gradient optimization algorithms \cite{toulis2017asymptotic,harikandeh2015stopwasting,loshchilov2015online,daneshmand2016starting} in the inner and outer loop parts. 

%
Finally, we present the following corollary that states that the average of consecutive iterates, in the outer loop, has asymptotic normality, similar to \cite{polyak1992acceleration, ruppert1988efficient}. 
\begin{corollary}
\label{cor:foasnd:asymptotic-normality:outer-avg}
In \Cref{alg:stat-inf-spnd}'s outer loop,  the average of consecutive iterates satisfies\\
\noindent\begin{minipage}{0.45\textwidth}
\begin{equation}
\Exp\left[\left\|\tfrac{\sum_{t=1}^T \theta_t}{T} - \htheta\right\|_2^2\right] \lesssim  \tfrac{1}{T} , \label{eq:thm:spnd-bound:outer-iter:avg_bound}  
\end{equation} 
    \end{minipage}%
    \begin{minipage}{0.1\textwidth}\centering
    and 
    \end{minipage}%
    \begin{minipage}{0.45\textwidth}
\begin{equation}
\tfrac{1}{\sqrt{T}} \left( \tfrac{\sum_{t=1}^T \theta_t}{T} - \htheta \right) = W + \Delta , \label{eq:thm:spnd-bound:outer-iter:avg_normality}   
\end{equation}
    \end{minipage}\\
%
where $W$ weakly converges to $\Norm(0, \frac{1}{S_o} H^{-1} G H^{-1})$, and $\Delta = o_P(1)$ when $T \to \infty$ and $L \to \infty$ ( $ \Exp[\|\Delta\|_2^2] $ $ \lesssim $ $ T^{1-2d_o} + T^{d_o-1} + \frac{1}{L} )$. 
\end{corollary}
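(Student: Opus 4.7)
My plan is to follow the classical Polyak-Ruppert averaging analysis (\cite{polyak1992acceleration,ruppert1988efficient}), adapted to the nested outer/inner-loop structure of \Cref{alg:stat-inf-spnd} and to the inexactness introduced by approximating Hessian-vector products with finite differences. The central step is to rewrite the outer-loop update as preconditioned SGD whose preconditioner is applied only inexactly, then to show that the extra errors are small enough that the classical averaging argument still delivers the advertised rate and limit.

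Let $\Delta_t := \theta_t - \htheta$, $\hat{g}_t := \tfrac{1}{S_o}\sum_{i \in I_o}\nabla f_i(\theta_t)$, $\hat{H}_t := \nabla^2 f(\theta_t)$, and $\xi_t := \hat{g}_t - \nabla f(\theta_t)$. The minimizer of \eqref{eq:sec:unregularized-m-est:inner-loops:intuition:newton:obj} equals $-\rho_t\hat{H}_t^{-1}\hat{g}_t$, so \eqref{eq:thm:spnd-bound:newton:4th-moment} with Cauchy-Schwarz gives
\begin{align*}
g_t^L = -\rho_t\hat{H}_t^{-1}\hat{g}_t + r_t, \qquad \Exp[\|r_t\|_2^2 \mid \theta_t] \lesssim L^{-d_i}\,\rho_t^2\, \|\hat{g}_t\|_2^2.
\end{align*}
Substituting the Taylor expansions $\nabla f(\theta_t) = H\Delta_t + O(\|\Delta_t\|_2^2)$ (using $\nabla f(\htheta)=0$) and $\hat{H}_t^{-1} = H^{-1} + O(\|\Delta_t\|_2)$, valid once $\theta_t$ lies in a neighborhood of $\htheta$ by \eqref{eq:thm:spnd-bound:outer-iter}, yields the key linearized recursion
\begin{align*}
\Delta_{t+1} = (1-\rho_t)\Delta_t - \rho_t H^{-1}\xi_t + \rho_t T_t + r_t,
\end{align*}
where the remainder $T_t = O(\|\Delta_t\|_2^2) + O(\|\Delta_t\|_2 \|\xi_t\|_2)$ has conditional mean of order $\|\Delta_t\|_2^2$ and conditional variance of order $\|\Delta_t\|_2^2$. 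Rearranging as $\Delta_t = (\Delta_t-\Delta_{t+1})/\rho_t - H^{-1}\xi_t + T_t + r_t/\rho_t$, summing in $t$, and applying Abel summation on the first piece produces
\begin{align*}
\tfrac{1}{\sqrt{T}}\sum_{t=1}^T \Delta_t = W_T + \Delta_T, \qquad W_T := -\tfrac{1}{\sqrt{T}}\sum_{t=1}^T H^{-1}\xi_t,
\end{align*}
with $\Delta_T$ collecting the boundary term $(\Delta_1/\rho_1 - \Delta_{T+1}/\rho_T)/\sqrt{T}$, the Abel-shift sum (of lower order since $\tfrac{1}{\rho_t}-\tfrac{1}{\rho_{t-1}} = O(t^{d_o-1})$), $\tfrac{1}{\sqrt{T}}\sum_t T_t$, and $\tfrac{1}{\sqrt{T}}\sum_t r_t/\rho_t$. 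Bounding each piece in $L^2$ with \eqref{eq:thm:spnd-bound:outer-iter} and \eqref{eq:thm:spnd-bound:outer-iter:4th-moment}, the boundary contributes $O(T^{d_o-1})$, the drift of $\sum_t T_t$ contributes $O(T^{1-2d_o})$ (after squaring and dividing by $T$) while its martingale fluctuation is strictly smaller, and the inner-loop error contributes $O(L^{-d_i}) \asymp O(1/L)$, yielding \eqref{eq:thm:spnd-bound:outer-iter:avg_bound} and $\Delta_T = o_P(1)$ as $T, L \to \infty$.

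For the weak limit, I would apply a martingale CLT to $W_T$. The summands $H^{-1}\xi_t$ form a martingale difference sequence with respect to the outer-loop filtration, and their conditional covariance is $\tfrac{1}{S_o}H^{-1}\!\bigl[\tfrac{1}{n}\sum_i \nabla f_i(\theta_t)\nabla f_i(\theta_t)^\top - \nabla f(\theta_t)\nabla f(\theta_t)^\top\bigr]\!H^{-1}$, which converges in probability to $\tfrac{1}{S_o}H^{-1}GH^{-1}$ because $\theta_t \to \htheta$ in probability by \eqref{eq:thm:spnd-bound:outer-iter} and the integrand is continuous in $\theta$. The main obstacle I expect is verifying the Lindeberg/uniform-integrability hypothesis of the martingale CLT despite the coupling of $\xi_t$ with the random iterate $\theta_t$ and the nested inner-loop randomness; this requires a uniform moment control on $\nabla f_i(\theta)$ that follows from the Lipschitz-gradient assumption together with consistency \eqref{eq:thm:spnd-bound:outer-iter}. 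A secondary challenge is the remainder $T_t$: because it couples $\Delta_t$ with $\xi_t$ it is not a martingale difference, so bounding $\Exp[\|\sum_t T_t\|_2^2]$ requires separately handling its drift (giving the $T^{1-2d_o}$ term) and its conditional fluctuation via Cauchy-Schwarz using \eqref{eq:thm:spnd-bound:outer-iter:4th-moment}.
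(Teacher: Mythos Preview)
Your approach is essentially the same as the paper's: both follow the Polyak--Ruppert template, linearizing the outer recursion around $\htheta$ and showing that averaging kills the slow $t^{-d_o}$ rate. The paper unrolls the recursion $\Delta_{t+1}=(1-\rho_t)\Delta_t+\rho_t(\cdots)$ directly and shows the resulting weights $\rho_i\sum_{t>i}\prod_{j>i}(1-\rho_j)$ are $1+o(1)$; your Abel-summation route is the standard equivalent rewriting and produces the same boundary and shift terms. One stylistic difference: the paper isolates the i.i.d.\ piece $-H^{-1}\tfrac{1}{S_o}\sum_{i\in I_o}\nabla f_i(\htheta)$ (evaluated at $\htheta$, not $\theta_t$) and applies the \emph{ordinary} CLT, pushing the discrepancy into a mean-zero term of size $O(\|\Delta_t\|_2)$ that sums to $O(T^{-d_o})$ in $L^2$; you instead keep $\xi_t$ centered at $\theta_t$ and invoke a martingale CLT. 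The paper's decomposition is cleaner since it sidesteps the Lindeberg verification you flag as an obstacle.

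Two small points. First, $L^{-d_i}$ is not $\asymp 1/L$ since $d_i<1$; the bound you obtain for the inner-loop remainder is genuinely $L^{-d_i}$, which still yields $o_P(1)$. Second, and more substantively, the step ``the inner-loop error contributes $O(L^{-d_i})$'' needs more than the per-$t$ moment bound $\Exp[\|r_t/\rho_t\|_2^2\mid\theta_t]\lesssim L^{-d_i}$: a triangle-inequality sum would blow up like $T\cdot L^{-d_i}$. You need the $r_t$ to be (approximately) martingale differences across $t$, which follows by decomposing $g_t^L-H_t^{-1}g_t^0$ via the unrolled inner SGD recursion into a mean-zero piece (driven by the inner sampling noise $e_t^k$) and a small bias piece (from the finite-difference Hessian approximation, summable thanks to $\delta_t^j=O(\rho_t^4\tau_j^4)$). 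You already recognize the analogous drift/fluctuation split for $T_t$; the same care is needed for $r_t$, and the paper handles it exactly this way.
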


\Cref{cor:foasnd:asymptotic-normality:outer-avg} uses 2\textsuperscript{nd} , 4\textsuperscript{th}  moment bounds on individual iterates (eqs. \eqref{eq:thm:spnd-bound:outer-iter}, \eqref{eq:thm:spnd-bound:outer-iter:4th-moment} in the above theorem), and the approximation of inverse Hessian conditioned stochastic gradient in \eqref{eq:thm:spnd-bound:newton:4th-moment}.

\section{High dimensional LASSO linear regression} \label{sec:high-dim:lasso:linear-regression}
In this section, we focus on the case of high-dimensional linear regression.
Statistical inference in such settings, where $p \gg n$, is arguably a more difficult task: the bias introduced by the regularizer is of the same order with the estimator's variance. 
Recent works \cite{zhang2014confidence, van2014asymptotically, javanmard2015biasing} propose statistical inference via de-biased LASSO estimators. 
Here, we present a new $\ell_1$-norm regularized objective and propose an approximate stochastic \emph{proximal} Newton algorithm, using only first order information.
%
%


We consider the linear model $y_i = \langle \theta^{\star} , x_i \rangle + \epsilon_i$, 
for some sparse $\theta^{\star} \in \Real^p$. For each sample, $\epsilon_i \sim \Norm(0, \sigma^2)$ is i.i.d. noise. 
And each data point $x_i \sim \Norm(0, \Sigma) \in \Real^p$.  
\begin{itemize}[leftmargin=0.5cm]
\item \emph{Assumptions on $\theta$:} $\pmb{(i)}$ $\theta^{\star}$ is $s$-sparse; $\pmb{(ii)}$ $\|\theta^{\star}\|_2=O(1)$, which implies that $\|\theta^{\star}\|_1 \lesssim \sqrt{s}$. 
\item {\em Assumptions on $\Sigma$:} 
$\pmb{(i)}$ $\Sigma$ is sparse, where each column (and row) has at most $b$ non-zero entries;\footnote{This is satisfied when $\Sigma$ is block diagonal or banded. Covariance estimation under this sparsity assumption has been extensively studied \cite{bickel2008covariance,bickel2009simultaneous,cai2012optimal}, and  soft thresholding   is an effective yet simple estimation method \cite{rothman2009generalized}. } 
$\pmb{(ii)}$ $\Sigma$ is well conditioned: all of $\Sigma$'s eigenvalues are $\Theta(1)$;
$\pmb{(iii)}$ $\Sigma$ is diagonally dominant ($\Sigma_{ii}- \tsum_{j\neq i}|\Sigma_{ij}| \geq D_\Sigma > 0$ for all $1 \leq i \leq p$), and this will be used to bound the $\ell_\infty$ norm of $\widehat{S}^{-1}$ \cite{varah1975lower}. A commonly used design covariance that satisfies all of our assumptions is  $I$. 
\end{itemize}

We estimate $\theta^{\star}$ using:
\begin{align}
\htheta = \arg \min_{\theta \in \mathbb{R}^p} \tfrac{1}{2} \left \langle \theta, \left(\widehat{S}- \tfrac{1}{n} \sum_{i=1}^n x_i x_i^\top\right) \theta \right \rangle
	+ \tfrac{1}{n} \sum_{i=1}^n \tfrac{1}{2} \left(x_i^\top \theta -y_i\right)^2 + \lambda \ltynorm{\theta}{1} , \label{eq:lasso-mod-cov}
\end{align}
where  $ \widehat{S}_{jk} = \sign \big( \left( \tfrac{1}{n} \tsum_{i=1}^n x_i x_i^\top \right)_{jk} \big) \big(  \left| \left( \tfrac{1}{n} \tsum_{i=1}^n x_i x_i^\top \right)_{jk} \right| - \omega \big)_+$ is an estimate of $\Sigma$ by soft-thresholding
each element of $\tfrac{1}{n} \tsum_{i=1}^n x_i x_i^\top$ with $\omega = \Theta \big( \sqrt{\tfrac{\log p}{n}} \big)$   \cite{rothman2009generalized}.  
Under our assumptions, $\widehat{S}$ is positive definite with high probability when $n \gg b^2 \log p $ (\Cref{lem:proof:lasso:linear:cov:soft-thresh:guarantees}), 
and this guarantees that the optimization problem \eqref{eq:lasso-mod-cov} is well defined. 
\emph{I.e.}, we replace the degenerate Hessian in regular LASSO regression with an estimate, which is positive definite with high probability under our assumptions. 

We set the regularization parameter 
\begin{align*}
\lambda  = \Theta\left( (\sigma + \|\theta^{\star}\|_1) \sqrt{\tfrac{\log p}{n}} \right), 
\end{align*}
which is similar to  LASSO regression \cite{buhlmann2011statistics, negahban2012unified} and related estimators using thresholded covariance \cite{yang2014elementary, jeng2011sparse}. 

\paragraph{Point estimate.}
\Cref{thm:lasso-mod-cov-bounds} provides guarantees for our proposed point estimate \eqref{eq:lasso-mod-cov}. 
\begin{theorem} \label{thm:lasso-mod-cov-bounds}
When $n \gg b^2 \log p  $,  
the solution $\htheta$ in \eqref{eq:lasso-mod-cov} satisfies
\begin{align}
\left\| \htheta - \theta^{\star} \right\|_1 &\lesssim s \left(\sigma + \|\theta^{\star}\|_1\right) \sqrt{\tfrac{\log p}{n}} \lesssim  s \left( \sigma + \sqrt{s} \right)\sqrt{\tfrac{\log p}{n}} , \label{eq:thm:lasso-mod-cov-bounds:l1} \\
\left\| \htheta - \theta^{\star} \right\|_2 &\lesssim \sqrt{s} \left(\sigma + \|\theta^{\star}\|_1\right) \sqrt{\tfrac{\log p}{n}} \lesssim  \sqrt{s} \left( \sigma + \sqrt{s} \right) \sqrt{\tfrac{\log p}{n}} \label{eq:thm:lasso-mod-cov-bounds:l2} ,  
\end{align}
with probability at least $1-p^{-\Theta(1)}$. 
\end{theorem}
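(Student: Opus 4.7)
The plan is to reduce the problem to a standard analysis of a strongly convex, $\ell_1$-regularized quadratic, after observing that the objective in \eqref{eq:lasso-mod-cov} can be rewritten (up to additive constants independent of $\theta$) as
\begin{align*}
L(\theta) + \lambda \|\theta\|_1, \quad L(\theta) := \tfrac{1}{2}\theta^\top \widehat{S}\theta - \widehat{b}^\top \theta, \quad \widehat{b} := \tfrac{1}{n}\sum_{i=1}^n y_i x_i,
\end{align*}
since the cross term $-\tfrac{1}{n}\sum_i x_i x_i^\top$ inside the quadratic cancels the corresponding term from the squared loss. Thus $\nabla^2 L \equiv \widehat{S}$, which is positive definite with high probability by \Cref{lem:proof:lasso:linear:cov:soft-thresh:guarantees}; call the lower eigenvalue bound $\alpha = \Theta(1)$. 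This puts us exactly in the setting where the Negahban--Ravikumar--Wainwright--Yu style decomposability argument applies directly, rather than needing restricted eigenvalue conditions.

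The first and main technical step is to show that, with high probability,
\begin{align*}
\|\nabla L(\theta^{\star})\|_\infty = \left\|\widehat{S}\theta^{\star} - \tfrac{1}{n}\tsum_i x_i y_i\right\|_\infty \lesssim (\sigma + \|\theta^{\star}\|_1)\sqrt{\tfrac{\log p}{n}},
\end{align*}
so that the prescribed $\lambda$ dominates $2\|\nabla L(\theta^{\star})\|_\infty$. Using $y_i = \langle \theta^{\star}, x_i\rangle + \epsilon_i$, I would split
\begin{align*}
\nabla L(\theta^{\star}) = \bigl(\widehat{S} - \widehat{\Sigma}\bigr)\theta^{\star} - \tfrac{1}{n}\tsum_i \epsilon_i x_i, \qquad \widehat{\Sigma} := \tfrac{1}{n}\tsum_i x_i x_i^\top.
\end{align*}
The noise term is the standard Gaussian-times-Gaussian piece, giving $\|\tfrac{1}{n}\sum_i \epsilon_i x_i\|_\infty \lesssim \sigma\sqrt{\log p/n}$ with high probability. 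For the deterministic bias piece I would use the entry-wise inequality $\|Av\|_\infty \leq \|A\|_{\max}\|v\|_1$, combined with the defining property of soft-thresholding $\|\widehat{S} - \widehat{\Sigma}\|_{\max} \leq \omega = \Theta(\sqrt{\log p/n})$, to get $\|(\widehat{S}-\widehat{\Sigma})\theta^{\star}\|_\infty \lesssim \|\theta^{\star}\|_1 \sqrt{\log p/n}$. Adding the two and using $\|\theta^{\star}\|_1 \lesssim \sqrt{s}$ yields the bound. I expect the subtle point here to be verifying that the high-probability event on which $\widehat{S}$ is PD and concentration holds simultaneously requires $n \gg b^2 \log p$, which is already the hypothesis and is what \Cref{lem:proof:lasso:linear:cov:soft-thresh:guarantees} delivers.

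Given these two ingredients, the second step is the routine basic-inequality argument. Let $\Delta := \widehat{\theta} - \theta^{\star}$ and $S := \supp(\theta^{\star})$. Optimality of $\widehat{\theta}$ together with the quadratic identity $L(\widehat{\theta}) - L(\theta^{\star}) = \langle \nabla L(\theta^{\star}),\Delta\rangle + \tfrac{1}{2}\Delta^\top \widehat{S}\Delta$ yields
\begin{align*}
\tfrac{1}{2}\Delta^\top \widehat{S} \Delta \leq \|\nabla L(\theta^{\star})\|_\infty \|\Delta\|_1 + \lambda(\|\theta^{\star}\|_1 - \|\widehat{\theta}\|_1) \leq \tfrac{\lambda}{2}\|\Delta\|_1 + \lambda(\|\Delta_S\|_1 - \|\Delta_{S^c}\|_1).
\end{align*}
Nonnegativity of the left-hand side gives the cone condition $\|\Delta_{S^c}\|_1 \leq 3\|\Delta_S\|_1$, hence $\|\Delta\|_1 \leq 4\sqrt{s}\|\Delta\|_2$. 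Substituting back, using $\Delta^\top \widehat{S}\Delta \geq \alpha\|\Delta\|_2^2$, and solving the resulting quadratic inequality in $\|\Delta\|_2$ produces
\begin{align*}
\|\Delta\|_2 \lesssim \sqrt{s}\,\lambda/\alpha \lesssim \sqrt{s}(\sigma + \|\theta^{\star}\|_1)\sqrt{\tfrac{\log p}{n}},
\end{align*}
which is \eqref{eq:thm:lasso-mod-cov-bounds:l2}, and combining with $\|\Delta\|_1 \leq 4\sqrt{s}\|\Delta\|_2$ gives \eqref{eq:thm:lasso-mod-cov-bounds:l1}.

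The hardest piece is really controlling $\|(\widehat{S} - \widehat{\Sigma})\theta^{\star}\|_\infty$ --- in particular making sure that replacing $\widehat{\Sigma}$ by a thresholded $\widehat{S}$ does not blow up through $\|\theta^{\star}\|_1$ in a way inconsistent with $\lambda$. The remainder is a clean application of strong convexity (from \Cref{lem:proof:lasso:linear:cov:soft-thresh:guarantees}) plus the Negahban-style basic inequality, which is why it is crucial that our design of the estimator in \eqref{eq:lasso-mod-cov} has a genuinely positive-definite Hessian rather than a merely restricted-eigenvalue one.
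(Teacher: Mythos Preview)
Your proposal is correct and follows essentially the same approach as the paper's proof: both bound $\|(\widehat{S}-\widehat{\Sigma})\theta^\star\|_\infty$ via the soft-thresholding property and $\|\tfrac{1}{n}\sum_i \epsilon_i x_i\|_\infty$ via Bernstein, set $\lambda$ to dominate their sum, derive the cone condition $\|\Delta_{S^c}\|_1\le 3\|\Delta_S\|_1$, and close with the strong convexity of $\widehat{S}$ from \Cref{lem:proof:lasso:linear:cov:soft-thresh:guarantees}. Your explicit rewriting of the objective as $\tfrac{1}{2}\theta^\top\widehat{S}\theta-\widehat{b}^\top\theta+\lambda\|\theta\|_1$ is exactly what the paper obtains after rearranging its basic inequality, and your route via $\|\Delta\|_1\le 4\sqrt{s}\|\Delta\|_2$ versus the paper's $\|v\|_2^2\ge \tfrac{1}{s}\|v^S\|_1^2$ are interchangeable standard variants.
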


\paragraph{Confidence intervals.}
We next present a de-biased estimator $\htheta^\diff$ \eqref{eq:lasso-mod-cov:de-bias:theta}, based on our proposed estimator.
$\htheta^\diff$ can be used to compute confidence intervals and p-values for each coordinate of $\htheta^\diff$, which can be used for false discovery rate control \cite{javanmard2018false}. 
The estimator satisfies:
\begin{align}
\label{eq:lasso-mod-cov:de-bias:theta}
\htheta^\diff =   \htheta + \widehat{S}^{-1} \left[  \tfrac{1}{n} \sum_{i=1}^n  \left( y_i  -  x_i^\top  \htheta \right)  x_i    \right] .
\end{align}

%
%
%
%

Our de-biased estimator is similar to \cite{zhang2014confidence, van2014asymptotically,javanmard2014confidence, javanmard2015biasing}.
however, we have different  terms, since we need to de-bias covariance estimation. 
Our estimator assumes $n \gg b^2 \log p$, since then $\widehat{S}$ is positive definite with high probability (\Cref{lem:proof:lasso:linear:cov:soft-thresh:guarantees}). 
The assumption that $\Sigma$ is diagonally dominant guarantees that the $\ell_\infty$ norm $\|\widehat{S}^{-1}\|_\infty$ is bounded by $O \left( \tfrac{1}{D_\Sigma} \right) $ with high probability when $n \gg  \tfrac{1}{{D_\Sigma}^2}  \log p $.

\Cref{thm:lasso-mod:de-bias:stat-inf} shows that we can compute valid confidence intervals for each coordinate when $n \gg (\tfrac{1}{D_\Sigma} s \left(\sigma + \|\theta^{\star}\|_1\right)\log p)^2.$
This is satisfied when $n \gg (\tfrac{1}{D_\Sigma} s \left(\sigma + \sqrt{s} \right)\log p)^2 $. 
And the covariance is similar to the sandwich estimator  \cite{huber1967behavior,white1980heteroskedasticity}. 
\begin{theorem}\label{thm:lasso-mod:de-bias:stat-inf}
Under our assumptions, when $n \gg \max\{b^2,  \tfrac{1}{{D_\Sigma}^2}\} \log p$, we have:
\begin{align}
\sqrt{n} (\htheta^\diff - \theta^{\star}) = Z + R,
\end{align} 
where the conditional distribution satisfies $Z$ $\mid$ $\{x_i\}_{i=1}^n$ $ \sim$  $\Norm\left(0,   \sigma^2 \cdot \left[ \widehat{S}^{-1} \left( \frac{1}{n} \sum_{i=1}^n x_i x_i^\top \right) \widehat{S}^{-1} \right] \right)$, 
and $\|R\|_\infty \lesssim \frac{1}{D_\Sigma} s \left(\sigma + \|\theta^{\star}\|_1\right) \frac{\log p}{\sqrt{n}} \lesssim \frac{1}{D_\Sigma} s \left(\sigma + \sqrt{s}\right) \frac{\log p}{\sqrt{n}} $ with probability at least $1-p^{-\Theta(1)}$. 
\end{theorem}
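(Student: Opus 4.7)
The plan is to derive the decomposition by direct algebraic manipulation, starting from the definition of $\htheta^\diff$ and substituting the linear model. Since $y_i = x_i^\top \theta^\star + \epsilon_i$, we have $y_i - x_i^\top \htheta = x_i^\top(\theta^\star - \htheta) + \epsilon_i$, so that
\begin{align*}
\htheta^\diff - \theta^\star
= (\htheta - \theta^\star) - \widehat{S}^{-1}\Big(\tfrac{1}{n}\tsum_{i=1}^n x_i x_i^\top\Big)(\htheta - \theta^\star) + \widehat{S}^{-1}\Big(\tfrac{1}{n}\tsum_{i=1}^n \epsilon_i x_i\Big).
\end{align*}
Factoring $\widehat{S}^{-1}$ out of the first two terms and rescaling by $\sqrt{n}$ yields the natural splitting
\begin{align*}
\sqrt{n}(\htheta^\diff - \theta^\star)
= \underbrace{\widehat{S}^{-1}\Big(\tfrac{1}{\sqrt{n}}\tsum_{i=1}^n \epsilon_i x_i\Big)}_{=: Z}
\;+\; \underbrace{\sqrt{n}\,\widehat{S}^{-1}\bigl(\widehat{S} - \tfrac{1}{n}\tsum_{i=1}^n x_i x_i^\top\bigr)(\htheta - \theta^\star)}_{=: R}.
\end{align*}

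The conditional distribution of $Z$ given $\{x_i\}_{i=1}^n$ is immediate from $\epsilon_i \sim \Norm(0,\sigma^2)$ i.i.d.\ and independent of $\{x_i\}$: it is a linear combination of independent Gaussians with the stated sandwich covariance $\sigma^2 \widehat{S}^{-1}(\frac{1}{n}\sum_i x_i x_i^\top)\widehat{S}^{-1}$. So the only real work is bounding $\|R\|_\infty$.

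For $R$, I would apply the elementary inequality $\|A B v\|_\infty \leq \|A\|_\infty \|B\|_{\max}\|v\|_1$ (with $\|\cdot\|_\infty$ the induced matrix-$\infty$/row-sum norm and $\|\cdot\|_{\max}$ the entrywise max) to get
\begin{align*}
\|R\|_\infty \;\leq\; \sqrt{n}\,\|\widehat{S}^{-1}\|_\infty \cdot \bigl\|\widehat{S} - \tfrac{1}{n}\tsum_i x_i x_i^\top\bigr\|_{\max} \cdot \|\htheta - \theta^\star\|_1.
\end{align*}
I would then dispatch the three factors as follows. The last factor is exactly \eqref{eq:thm:lasso-mod-cov-bounds:l1} from Theorem~\ref{thm:lasso-mod-cov-bounds}, giving $\|\htheta - \theta^\star\|_1 \lesssim s(\sigma+\|\theta^\star\|_1)\sqrt{\log p/n}$. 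The middle factor is bounded by $\omega = \Theta(\sqrt{\log p/n})$ directly from the definition of soft-thresholding, since every entry of $\widehat{S}$ differs from the corresponding entry of the sample covariance by at most $\omega$. Multiplying gives exactly the claimed rate $\frac{1}{D_\Sigma}s(\sigma+\|\theta^\star\|_1)\log p/\sqrt{n}$.

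The main obstacle is the first factor $\|\widehat{S}^{-1}\|_\infty \lesssim 1/D_\Sigma$. For this I would invoke Lemma~\ref{lem:proof:lasso:linear:cov:soft-thresh:guarantees} and Varah's inequality \cite{varah1975lower}, which bounds $\|A^{-1}\|_\infty$ by the reciprocal of the minimum excess diagonal dominance $\min_i (|A_{ii}| - \sum_{j\neq i} |A_{ij}|)$. The key intermediate claim is that with high probability $\widehat{S}$ inherits diagonal dominance from $\Sigma$ at level $\Theta(D_\Sigma)$: this requires controlling simultaneously the diagonal entries (which concentrate around $\Sigma_{ii}$ up to $\omega$) and the row-sum of off-diagonals of $\widehat{S}$ (which, thanks to the $b$-sparsity of $\Sigma$ and the thresholding operation, reduces to a sum over at most $b$ terms plus negligible contributions from truly-zero entries killed by thresholding). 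This step uses the sample size assumption $n \gg \max\{b^2, 1/D_\Sigma^2\}\log p$ and standard sub-exponential concentration of $(\tfrac1n\sum_i x_ix_i^\top - \Sigma)_{jk}$ at rate $\sqrt{\log p/n}$ uniformly in $j,k$. Combining everything on the intersection of the three high-probability events (point estimate bound, $\|\widehat{S} - \widehat{\Sigma}_n\|_{\max} \leq \omega$, and Varah applied to $\widehat{S}$) gives the stated bound with probability at least $1 - p^{-\Theta(1)}$.
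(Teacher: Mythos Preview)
Your proposal is correct and matches the paper's proof in all essential respects: the same decomposition $\sqrt{n}(\htheta^\diff-\theta^\star)=Z+R$, the same bound $\|R\|_\infty \leq \sqrt{n}\,\|\widehat{S}^{-1}\|_\infty\cdot\|\widehat{S}-\tfrac1n\sum_i x_ix_i^\top\|_{\max}\cdot\|\htheta-\theta^\star\|_1$, and the same ingredients (Varah's inequality for $\|\widehat{S}^{-1}\|_\infty$, the soft-threshold bound $\omega$ for the middle factor, and \Cref{thm:lasso-mod-cov-bounds} for the last). The only cosmetic difference is that the paper reaches the decomposition via the KKT condition of \eqref{eq:lasso-mod-cov}---introducing the subgradient $\lambda\widehat{g}$ and then cancelling it---whereas your direct substitution of $y_i=x_i^\top\theta^\star+\epsilon_i$ into \eqref{eq:lasso-mod-cov:de-bias:theta} is shorter; for the diagonal-dominance step the paper uses the slightly sharper observation $|\widehat{S}_{ij}|\leq|\Sigma_{ij}|$ for off-diagonals (a consequence of soft-thresholding by at least the concentration rate), which cleanly gives $\widehat{S}_{ii}-\sum_{j\neq i}|\widehat{S}_{ij}|\geq D_\Sigma-\Theta(\sqrt{\log p/n})$ without an extra factor of $b$.
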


Our estimate in \eqref{eq:lasso-mod-cov} has similar error rates to the estimator in \cite{yang2014elementary}; however, no confidence interval guarantees are provided, and the estimator is based on inverting a large covariance matrix. 
Further, although it does not match minimax rates achieved by regular LASSO regression \cite{raskutti2011minimax},  
and the sample complexity in  \Cref{thm:lasso-mod:de-bias:stat-inf} is slightly higher than other methods \cite{van2014asymptotically, javanmard2014confidence, javanmard2015biasing}, 
our criterion is strongly convex with high probability:
this allows us to use linearly convergent proximal algorithms \cite{xiao2014proximal,lee2014proximal}, 
whereas provable linearly convergent optimization bounds for LASSO only guarantees convergence to a neighborhood of the LASSO solution within statistical error \cite{agarwal2010fast}. 
This is crucial for computing the de-biased estimator, as we need the optimization error to be much less than the statistical error. 

In \Cref{sec:appendix:lasso:linear-regression:stat-inf}, we present our algorithm for statistical inference in high dimensional linear regression using stochastic gradients. 
It estimates the statistical error covariance using the plug-in estimator:
$$\widehat{S}^{-1} \left( \tfrac{1}{n} \sum_{i=1}^n (x_i^\top \htheta - y_i)^2 x_i x_i^\top \right) \widehat{S}^{-1} , $$ 
which is related to the empirical sandwich estimator \cite{huber1967behavior,white1980heteroskedasticity}. 
\Cref{alg:stat-inf:high-dim:linear:proximal:svrg}  computes the statistical error covariance. 
Similar to \Cref{alg:stat-inf-spnd}, \Cref{alg:stat-inf:high-dim:linear:proximal:svrg} has an outer loop part and an inner loop part, 
where the outer loops correspond to  approximate proximal Newton steps, 
and the inner loops solve each proximal Newton step using proximal SVRG \cite{xiao2014proximal}.  
To control the variance, we use SVRG and proximal SVRG to solve the Newton steps. This is because in the high dimensional setting, the variance is too large when we use SGD \cite{moulines2011non} and proximal SGD \cite{atchade2017perturbed} for solving Newton steps. 
However, since  we have $p \gg n$ , instead of sampling {\em by sample}, we sample {\em by feature}.  
When we set $L_o^t = \Theta(\log (p) \cdot \log (t))$,  
we can estimate the statistical error covariance with element-wise error less than $ O\left( \frac{\max\{1,\sigma\}\polylog(n,p)}{\sqrt{T}} \right)$ with high probability, 
using $O\left(T \cdot n \cdot p^2 \cdot \log(p) \cdot \log(T) \right)$ numerical operations. 
And \Cref{alg:stat-inf:high-dim:linear:calc:de-biased:svrg} calculates the de-biased estimator $\htheta^\diff$ \eqref{eq:lasso-mod-cov:de-bias:theta} via SVRG. 
For more details, we defer the reader to \Cref{sec:appendix:lasso:linear-regression:stat-inf}.

\section{Time series analysis}
\label{sec:time-series}
In this section, we present a sampling scheme for statistical inference in time series analysis using $M$-estimation, 
where we sample contiguous blocks of indices, instead of uniformly.

We consider a linear model $y_i = \langle x_i , \theta^\star \rangle + \epsilon_i $, where $\Exp[\epsilon_i x_i] = 0$, but $\{x_i, y_i\}_{i=1}^n$ may not be i.i.d. as this is a time series. 
 And we use ordinary least squares (OLS) 
 $\htheta = \arg \min_\theta \sum_{i=1}^n \frac{1}{2} \left( \langle x_i , \theta \rangle -  y_i \right)^2 $  
 to estimate $\theta^\star$. 
Applications include  multifactor financial models for explaining returns \cite{bender2013foundations, rosenberg1973prediction}. 
  For non-i.i.d. time series data, OLS may not be the optimal estimator, as opposed to the maximum likelihood estimator \cite{shumway2011time}, but OLS is simple yet often robust, 
compared to more sophisticated models that take into account time series dynamics.   
 And it is widely used in econometrics for time series analysis \cite{berndt1991practice}. 
 To perform statistical inference, we use the asymptotic normality 
\begin{align}
\sqrt{n} \left( \htheta - \theta^\star \right) \to \Norm \left( 0 , {H^\star}^{-1} G^\star {H^\star}^{-1} \right) , 
\end{align}
where $H^\star  = \lim_{n\to \infty} \frac{1}{n} \left( \sum_{i=1}^n \nabla^2 f_i(\theta^\star) \right) $ and 
$G^\star =  \lim_{n\to \infty} \frac{1}{n} \left( \sum_{i=1}^n \sum_{j=1}^n \nabla f_i(\theta^\star) \, \nabla f_j(\theta^\star)^\top \right) $, with $f_i(\theta) =  \frac{1}{2} \left( \langle x_i , \theta \rangle -  y_i \right)^2  $. 
The difference compared with the i.i.d. case (\Cref{sec:unregularized-m-est}) is that $G^\star$ now includes autocovariance terms. 
 We use the plug-in estimate $\widehat{H} = \frac{1}{n} \sum_{i=1}^n \nabla^2 f_i(\htheta)$ as before, and we estimate $G^\star$ using the Newey-West covariance estimator
\cite{newey1986simple} for HAC (heteroskedasticity and autocorrelation consistent) covariance estimation 
\begin{align}
\textstyle \widehat{G} = \tfrac{1}{n} \sum\limits_{i=1}^n \nabla f_i(\htheta) \, f_i(\htheta)^\top 
	+ \sum\limits_{j=1}^{\mathbf{l}} w(j, \mathbf{l}) \sum\limits_{i=j+1}^n  \left( 
		\nabla f_i(\htheta) \,  \nabla f_{i - j}(\htheta)^\top 
			 + \nabla f_{i-j}(\htheta) \,  \nabla f_i(\htheta)^\top
		 \right)  , \label{eq:newey-west}
\end{align}
where $w(j, \mathbf{l})$ is sample autocovariance weight, such as Bartlett weight $w(j, \mathbf{l}) = 1 - \sfrac{j}{(\mathbf{l} + 1)}$, 
and $\mathbf{l}$ is the {\em lag} parameter, 
 which captures data dependence across time. 
 Note that this is an essential building block in time series statistical inference procedures, such as Driscoll-Kraay standard errors \cite{driscoll1998consistent, kraay1999spatial}, moving block bootstrap \cite{kunsch1989jackknife}, and circular bootstrap \cite{politis1992circular, politis1994stationary}.

In our framework, we solve OLS using our approximate Newton procedure with a slight modification to \Cref{alg:stat-inf-spnd}. 
Instead of uniformly sampling indices as in line \ref{alg:stat-inf-spnd:sgd1} of \Cref{alg:stat-inf-spnd}, 
we uniformly select some $i_o \in [n]$, 
and set the outer mini-batch indexes $I_o$ to the random contiguous block $\{i_o, i_o + 1, \dots, i_o + \mathbf{l} - 1 \} \mod n$, where we let the indexes circularly wrap around, as in line \ref{alg:stat-inf-spnd:sgd1:time-series} of \Cref{alg:stat-inf-spnd:time-series},  
and this sampling scheme is similar to {\em circular bootstrap}. 
Here $\mathbf{l}$ is the lag parameter, similar to the Newey-West estimator.
And the stochastic gradient's expectation is still the full gradient.  
The complete algorithm is in \Cref{alg:stat-inf-spnd:time-series}, and its guarantees are given in \Cref{cor:time-series:spnd-bound}.  
Our approximate Newton statistical inference procedure 
 is equivalent to using weight $w(j, \mathbf{l}) = 1 -\sfrac{j}{\mathbf{l}}$ in the Newey-West covariance estimator \eqref{eq:newey-west}, with negligible terms for blocks that wrap around, and this is the same as circular bootstrap.
 Note that the connection between sampling scheme and Newey-West estimator was also observed in \cite{kunsch1989jackknife}.  
 Following \cite{politis1992circular}, 
 we can set the lag parameter such that  $\mathbf{l} \cdot n^{-\sfrac{1}{3}} \to 0 $, 
 and run at least $n$ outer loops. 
 In practice, other methods for tuning the lag parameter can be used, such as  \cite{newey1994automatic}. 
 For more details, we refer the reader to \Cref{appendix:sec:time-series}.

\section{Experiments}
\label{sec:experiments}

\subsection{Synthetic data}
\label{subsec:experiments:synthetic}

The coverage probability is defined as
$
\frac{1}{p} \sum_{i=1}^p \Pr[\theta^\star_i \in \hat{C}_i],
$
where $\hat{C}_i$ is the estimated confidence interval for the $i^{\text{th}}$ coordinate.
The average confidence interval length is defined as
$
\frac{1}{p} \sum_{i=1}^p (\hat{C}_i^u - \hat{C}_i^l),
$
where $[\hat{C}_i^l , \hat{C}_i^u]$ is the estimated confidence interval for the $i^{\text{th}}$ coordinate.
In our experiments, coverage probability and average confidence interval length are estimated through simulation.
  Result  given as a   $(\alpha, \beta)$  indicates (coverage probability, confidence interval length).

\begin{table}[!t]
\centering
\rowcolors{2}{white}{black!05!white}
\resizebox{\textwidth}{!}{
 \begin{tabular}{c c c c c c c  c c}
 \toprule
 & & Approximate Newton & & Bootstrap & & Inverse Fisher information & & Averaged SGD \\
 \cmidrule{3-3} \cmidrule{5-5} \cmidrule{7-7} \cmidrule{9-9}
Lin1 & & (0.906, 0.289) & & (0.933, 0.294) & & (0.918, 0.274) & & (0.458, 0.094) \\
Lin2 & & (0.915, 0.321) & & (0.942, 0.332) & & (0.921,0.308) & & (0.455 0.103) \\
 \end{tabular}
} 
\caption{Linear regression (low dimensional): synthetic data confidence interval (coverage, length)}\label{tab:exp:coverage:linear}
\end{table}

\begin{table}[!t]
\rowcolors{2}{white}{black!05!white}
\centering
\resizebox{\textwidth}{!}{
 \begin{tabular}{c c c c c c c  c c}
 \toprule
 & & Approximate Newton & & Jackknife & & Inverse Fisher information & & Averaged SGD \\
 \cmidrule{3-3} \cmidrule{5-5} \cmidrule{7-7} \cmidrule{9-9}
Log1 & & (0.902, 0.840) & & (0.966  1.018) & & (0.938, 0.892) & & (0.075 0.044) \\
Log2 & & (0.925, 1.006) & & (0.979, 1.167) & & (0.948, 1.025) & & (0.065 0.045) \\
 \bottomrule
 \end{tabular}
} 
\caption{Logistic regression (low dimensional): synthetic data confidence interval (coverage, length)}\label{tab:exp:coverage:logistic}
\end{table}

\paragraph{Low dimensional problems.}
\Cref{tab:exp:coverage:linear} and \Cref{tab:exp:coverage:logistic} show 95\% confidence interval's coverage and length of 200 simulations for linear and logistic regression. 
The exact configurations for linear/logistic regression examples are provided in \Cref{append:subsubsec:exp:sim:low-dim}. 
Compared with Bootstrap and Jackknife \cite{efron1994introduction}, \Cref{alg:stat-inf-spnd} uses less numerical operations, while achieving similar results. 
Compared with the averaged SGD method \cite{li2017statistical, chen2016statistical}, our algorithm performs much better, while using the same amount of computation, 
and is much less sensitive to the choice hyper-parameters. 
And we observe that  calibrated approximate Newton confidence intervals \cite{efron1994introduction, politis2012subsampling} are better than bootstrap and inverse Fisher information (\Cref{tab:exp:coverage:calibrated}).

\begin{figure}[t!]
    \centering
        \includegraphics[width=0.4\textwidth]{./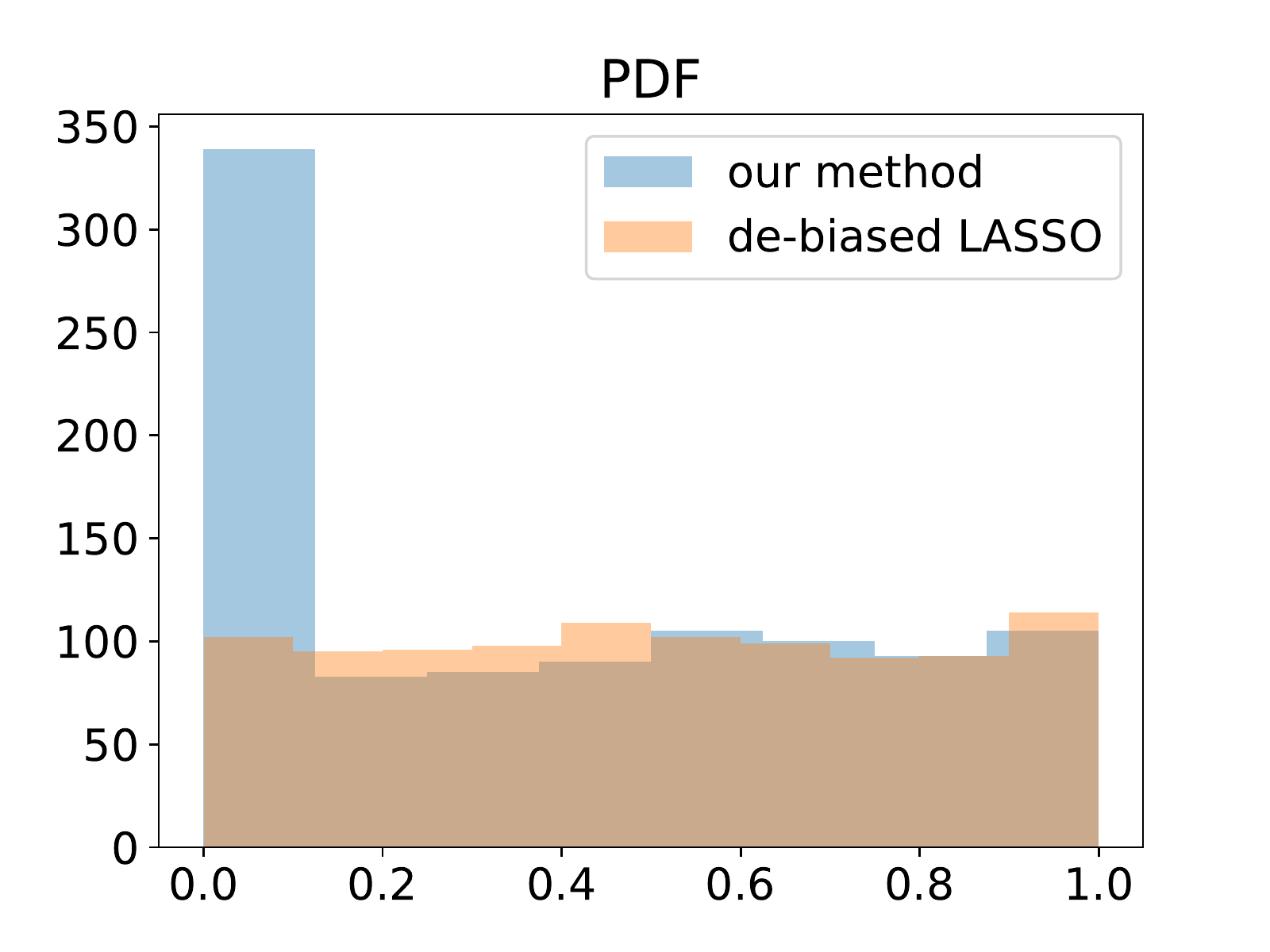}
        \includegraphics[width=0.4\textwidth]{./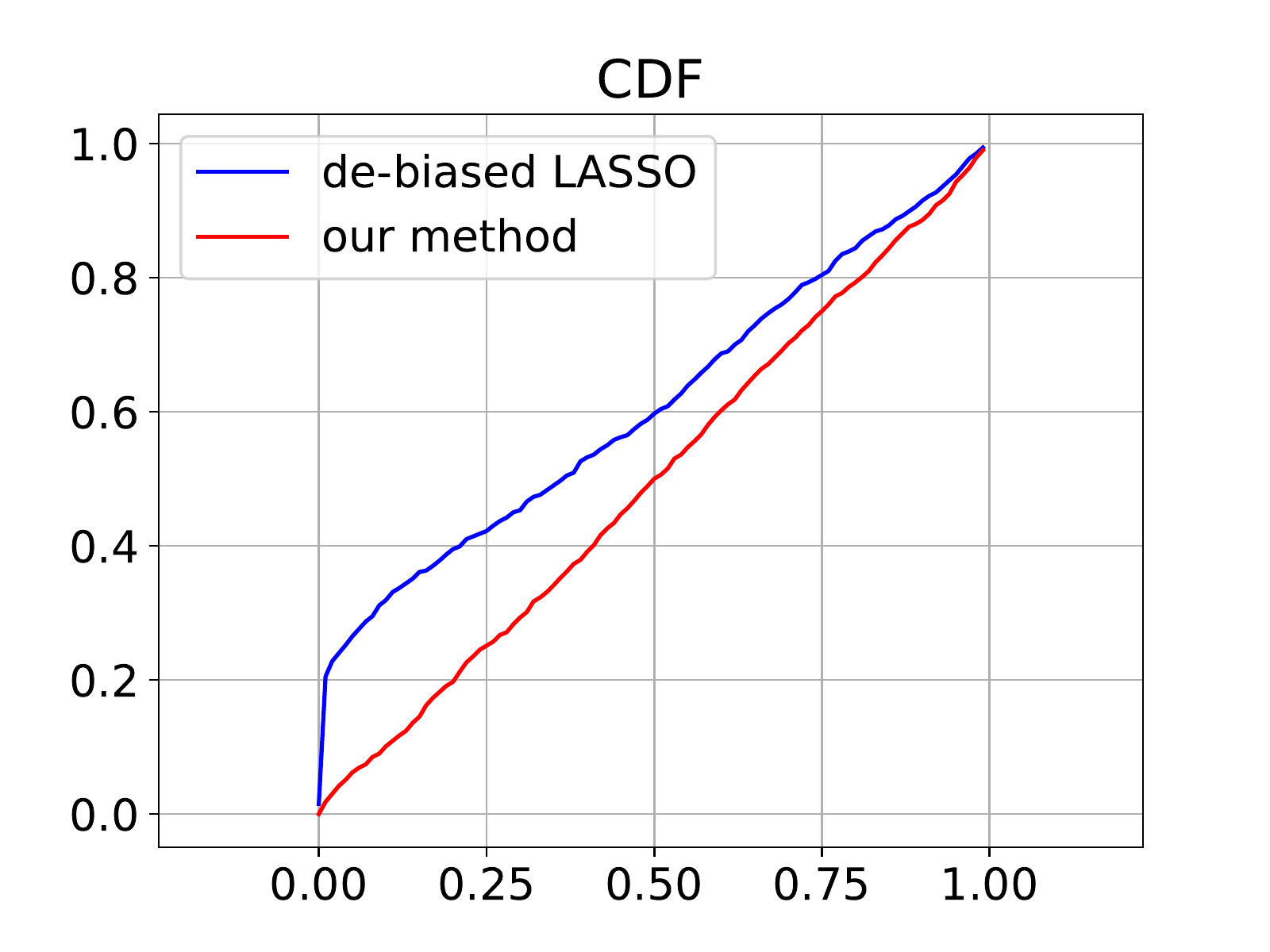}
    \caption{Distribution of two-sided Z-test p-values  under the null hypothesis (high dimensional) }
    \label{fig:exp:high-dim-linear:0:z-test:2-sided:p-value}
\end{figure}

\paragraph{High dimensional linear regression.}
\Cref{fig:exp:high-dim-linear:0:z-test:2-sided:p-value} shows p-value distribution  under the null hypothesis for our method and the de-biased LASSO estimator with known covariance, using  600 i.i.d. samples generated from a model with $\Sigma = I$, $\sigma = 0.7$, 
and we can see that it is close to a uniform distribution, similar results are observed for other high dimensional statistical inference procedures such as \cite{candes2018panning}.
%
And visualization of confidence intervals computed by our algorithm is shown in \Cref{fig:exp:sim:high-dim-linear:CI:plot}. 
 
\paragraph{Time series analysis.} 
In our linear regression simulation, we generate i.i.d. random explanatory variables, 
and the observation noise is a 0-mean moving average (MA) process independent of the explanatory variables. 
Results on average 95\% confidence interval coverage and length are given in \Cref{append:subsubsec:exp:sim:time-series}, and they validate our theory.

\subsection{Real data}
\label{subsec:experiments:real}

\paragraph{Neural network adversarial attack detection.} 
Here we use ideas from statistical inference to detect certain adversarial attacks on neural networks. 
A key  observation is that neural networks are effective at representing low dimensional manifolds such as natural images \cite{basri2016efficient,chui2016deep}, 
and this causes the risk function's Hessian to be degenerate \cite{sagun2017empirical}.  
From a statistical inference perspective, 
we interpret this as meaning that the confidence intervals in the null space of $H^+ G H^+$ is infinity, where $H^+$ is the pseudo-inverse of the Hessian (see \Cref{sec:unregularized-m-est}). 
When we make a prediction $\Psi(x; \htheta)$ using a fixed data point $x$ as input 
(i.e., conditioned on $x$), 
using the delta method \cite{van1998asymptotic}, 
the confidence interval of the prediction can be derived from the asymptotic normality of $\Psi(x; \htheta)$  
$$
\sqrt{n} \left( \Psi(x; \htheta) - \Psi(x; \theta^\star) \right)  
	 \to  \Norm\left( 0 , \nabla_\theta \Psi(x; \htheta)^\top \left[ \widehat{H}^{-1} \widehat{G}  \widehat{H}^{-1}  \right]  \nabla_\theta \Psi(x; \htheta) \right) .
$$
To detect adversarial attacks, 
we use the score 
$$
\tfrac{\left\|\left( I - P_{H^+ G H^+} \right) \nabla_\theta \Psi(x; \htheta) \right\|_2}{\left\|\nabla_\theta \Psi(x; \htheta)\right\|_2} , 
$$  
to measure how much $\nabla_\theta \Psi(x; \htheta)$ lies in null space of $H^+ G H^+$,
where $P_{H^+ G H^+}$ is the projection matrix onto the range of $H^+ G H^+$. 
Conceptually, for the same image, the randomly perturbed image's score should be larger than the original image's score, and the adversarial image's score should be larger than the randomly perturbed image's score.

We train a binary classification neural network with 1 hidden layer and softplus activation function, to distinguish between ``Shirt'' and  ``T-shirt/top'' in the Fashion MNIST data set \cite{xiao2017fashion}.  
\Cref{fig:exp:real-data:nn:adversarial-scores}  shows distributions of scores of original images,   adversarial images generated using the fast gradient sign method \cite{goodfellow2014explaining}, and randomly perturbed images. 
Adversarial and random perturbations have the same $\ell_\infty$ norm. 
The adversarial perturbations and example images are shown in \Cref{append:subsubsec:exp:nn:adversarial}. 
Although  the scores' values are small, 
they are still significantly larger than 64-bit floating point  precision ($2^{-53} \approx 1.11 \times 10^{-16}$). 
We observe that scores of randomly perturbed images is an order of magnitude larger than scores of original images, 
and scores of adversarial images is an order of magnitude larger than scores of randomly perturbed images.

\begin{figure}[!t]
\centering
\includegraphics[width=.4\textwidth]{./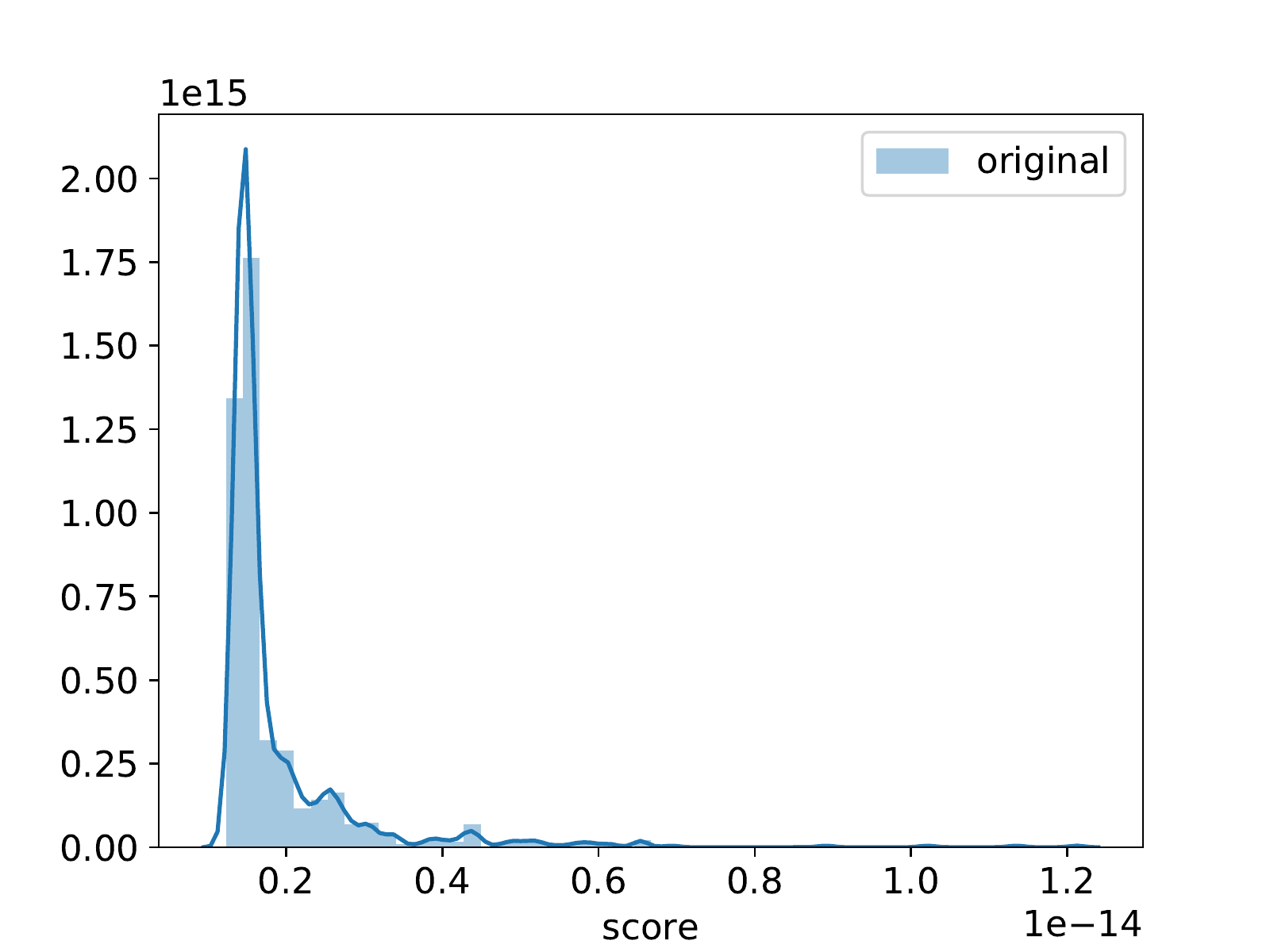}
\includegraphics[width=.4\textwidth]{./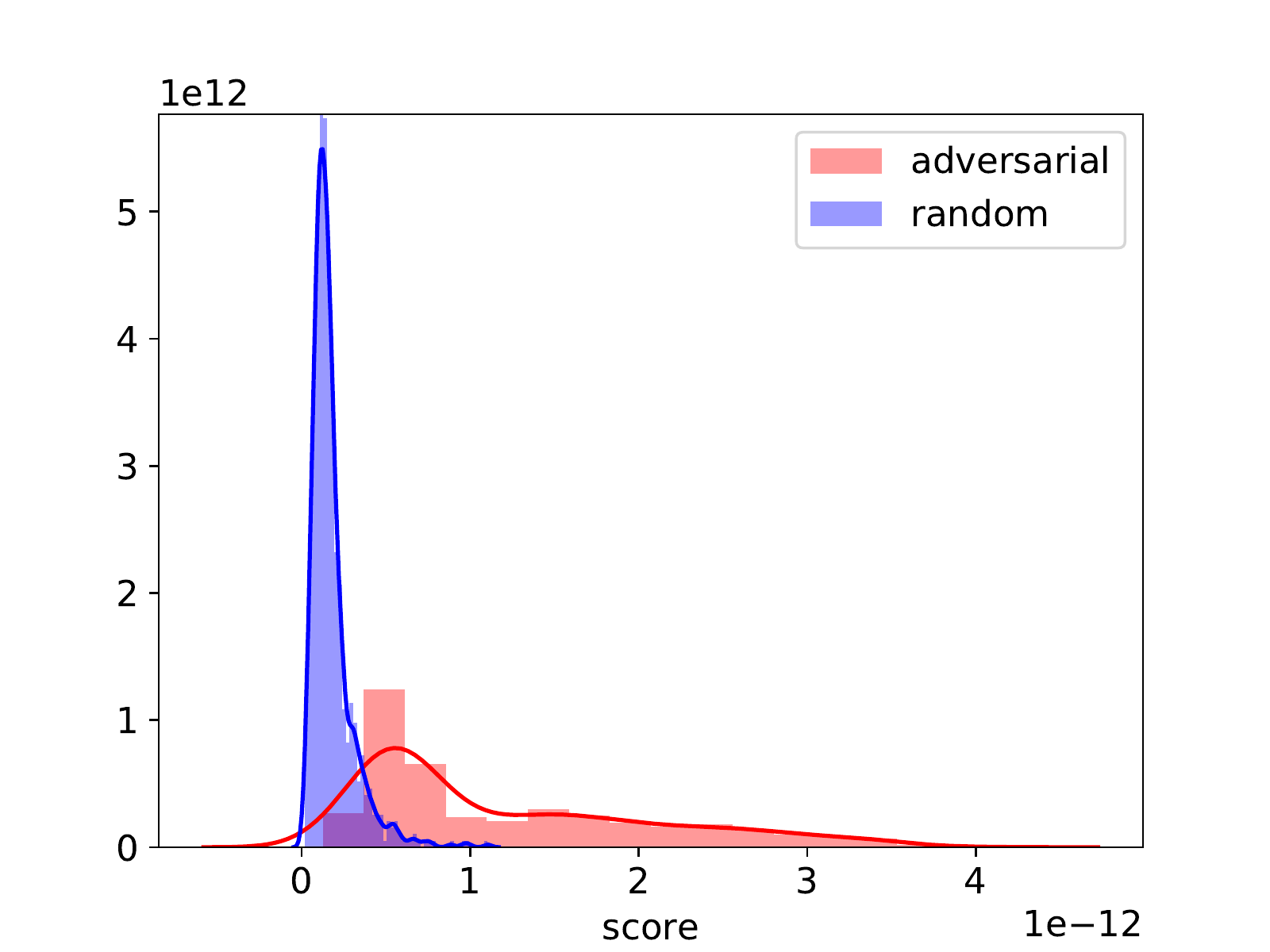}
\caption{Distribution of scores for original, randomly perturbed, and adversarially perturbed images}
\label{fig:exp:real-data:nn:adversarial-scores}
\end{figure}

\paragraph{High dimensional linear regression.}
We apply our high dimensional inference procedure to the dataset in \cite{rhee2006genotypic}  to detect mutations related to HIV drug resistance, where we randomly sub-sample the dataset so that the number of features is larger than the number of samples. 
When we control the family-wise error rate (FWER) at $0.05$ using the Bonferroni correction \cite{bonferroni1936teoria},  
our procedure is able to detect verified mutations in an expert dataset \cite{johnson2005update} (\Cref{tab:exp:real:high-dim:hiv}), 
and the details are given in \Cref{append:subsubsec:exp:real:high-dim:riboflavin}.
Another experiment with a 
 genomic data set concerning riboflavin (vitamin B2) production rate \cite{buhlmann2014high} is given in the appendix. 

\paragraph{Time series analysis.} 
Using monthly equities returns data from \cite{frazzini2014betting}, 
we use our approximate Newton statistical inference procedure to show that the correlation between US equities market returns and non-US global equities market returns is statistically significant, 
which validates the capital asset pricing model (CAPM) \cite{sharpe1964capital, lintner1965valuation, fama2004capital}. 
The details 
 are given in \Cref{append:subsubsec:exp:real:time-series}.

\section{Related work}\label{sec:realted-work}

\paragraph{Unregularized M-estimation.}
This work
provides a general, flexible framework for {\em simultaneous} point estimation and statistical inference, 
and improves upon previous methods, based on averaged  stochastic gradient descent \cite{li2017statistical, chen2016statistical}. 
%

Compared to \cite{chen2016statistical} (and similar works \cite{su2018statistical,fang2017scalable} using SGD with decreasing step size), our method does not need to increase the lengths of  ``segments'' (inner loops) to reduce correlations between different ``replicates''. 
Even in that case, if we use $T$ replicates and increasing ``segment'' length (number of inner loops is $t^{\frac{d_o}{1-d_o}} \cdot L$) with a total of $O(T^{\frac{1}{1-d_o}}\cdot L)$ stochastic gradient steps,  
\cite{chen2016statistical} guarantees $O(L^{-\frac{1-d_o}{2}}+T^{-\frac{1}{2}} + T^{\max\{\frac{1}{2} -\frac{d_o}{4(1-d_o)},0\} - \frac{1}{2}} \cdot L^{-\frac{d_o}{4}} + T^{\max\{\frac{1-2d_o}{2(1-d_o)},0\} - \frac{1}{2}} \cdot L^{\frac{1-2d_o}{2}})$ , whereas our method guarantees $O(T^{-\frac{d_o}{2}})$. 
Further, \cite{chen2016statistical} is inconsistent, whereas our scheme  guarantees consistency of computing the statistical error covariance. 
%

\cite{li2017statistical} uses fixed step size SGD for statistical inference, and discards iterates between different ``segments'' to reduce correlation, whereas we do not discard any iterates in our computations. 
Although  \cite{li2017statistical} states empirically constant step SGD performs well in statistical inference, it has been empirically shown \cite{dieuleveut2017bridging} that averaging consecutive iterates in constant step SGD does not guarantee convergence to the optimal -- the average will be ``wobbling'' around the optimal, whereas decreasing step size stochastic approximation methods (\cite{polyak1992acceleration, ruppert1988efficient} and our work) will converge to the optimal, and averaging consecutive iterates guarantees ``fast'' rates. 
%

Finally, from an optimization perspective, our method is similar to stochastic Newton methods (e.g. \cite{agarwal2017second}); however, our method only uses first-order information to approximate a Hessian vector product ($\nabla^2 f(\theta) v \approx \tfrac{\nabla f(\theta + \delta v) - \nabla f(\theta)}{\delta}$). 
\Cref{alg:stat-inf-spnd}'s outer loops are similar to stochastic natural gradient descent \cite{amari1998natural}. 
Also, we demonstrate an intuitive view of SVRG \cite{johnson2013accelerating} as a special case of  approximate stochastic Newton steps using first order information (\Cref{append:sec:svrg-intuition}). 


\paragraph{High dimensional linear regression.}
%
\cite{chen2016statistical}'s high dimensional inference algorithm is based on \cite{agarwal2012stochastic}, 
and only guarantees that optimization error is at the same scale as the statistical error. 
However, proper de-biasing of the LASSO estimator requires the optimization error to be much less than the statistical error, 
otherwise the optimization error introduces additional bias that de-biasing cannot handle. 
Our optimization objective  is strongly convex with high probability: this permits the use of linearly convergent proximal algorithms \cite{xiao2014proximal,lee2014proximal} towards the optimum, which guarantees the optimization error to be much smaller than the statistical error. 
%

Our method of de-biasing the LASSO in \Cref{sec:high-dim:lasso:linear-regression} is similar to \cite{zhang2014confidence, van2014asymptotically,javanmard2014confidence, javanmard2015biasing}. 
Our method uses a new $\ell_1$ regularized objective \eqref{eq:lasso-mod-cov} for high dimensional linear regression, 
and we have different de-biasing terms, because we also need to de-bias the covariance estimation. 
In \Cref{alg:stat-inf:high-dim:linear:proximal:svrg}, our covariance estimate is similar to the classic {\em sandwich estimator} \cite{huber1967behavior,white1980heteroskedasticity}. 
Previous methods require $O(p^2)$ space which unsuitable for large scale problems, 
whereas our method only requires $O(p)$ space. 
%

Similar to our $\ell_1$-norm regularized objective, \cite{yang2014elementary, jeng2011sparse} shows similar point estimate statistical guarantees for related estimators; however there are no confidence interval results.  
Further, although \cite{yang2014elementary} is an elementary estimator in closed form, it still requires computing the inverse of the thresholded covariance, which is challenging in high dimensions, and may not computationally outperform optimization approaches.  
%
%

Finally, for feature selection, we do not assume that absolute values of the true parameter's non-zero entries are lower bounded (``beta-min'' condition). \cite{fan2018statistical,loh2017statistical,loh2017support,buhlmann2011statistics,wainwright2009sharp}. 

\paragraph{Time series analysis.} 
Our approach of sampling contiguous blocks of indices to compute stochastic gradients for statistical inference in time series analysis is similar to resampling procedures in {\em moving block} or {\em circular} bootstrap  \cite{carlstein1986use, kunsch1989jackknife, buhlmann2002bootstraps, davison1997bootstrap, efron1994introduction, lahiri2013resampling, politis1992circular, politis1994stationary, kreiss2012bootstrap}, and {\em conformal prediction} \cite{balasubramanian2014conformal,shafer2008tutorial,vovk2005conformal}. 
Also, our procedure is similar to Driscoll-Kraay standard errors \cite{driscoll1998consistent, kraay1999spatial, hoechle2007robust}, but does not waste computational resources to explicitly store entire matrices, 
and is suited for large scale time series analysis.

{
\newpage
\small
\bibliography{ref.bib}
\bibliographystyle{alpha}
}


\appendix


%
%
%


\newpage

\section{High dimensional linear regression statistical inference using stochastic gradients (\Cref{sec:high-dim:lasso:linear-regression})}
\label{sec:appendix:lasso:linear-regression:stat-inf}

\subsection{Statistical inference using approximate proximal Newton steps with stochastic gradients}
\label{sec:high-dim:lasso:linear-regression:alg-fo-a-s-p-nd}
Here, we present a statistical inference procedure for high dimensional linear regression via  approximate proximal Newton steps using stochastic gradients. 
It uses the plug-in estimator:
$$\widehat{S}^{-1} \left( \tfrac{1}{n} \sum_{i=1}^n (x_i^\top \htheta - y_i)^2 x_i x_i^\top \right) \widehat{S}^{-1} , $$
which is related to the empirical sandwich estimator \cite{huber1967behavior,white1980heteroskedasticity}. \Cref{lem:sec:high-dim:lasso:linear-regression:plug-in:sandwich} shows this is a good estimate of the covariance when $n \gg \tfrac{1}{{D_\Sigma}^4} \max\{1, \sigma^2\} s^2 (\sigma+\|\theta^\star\|_1)^2$.

\Cref{alg:stat-inf:high-dim:linear:proximal:svrg} performs statistical inference in high dimensional linear regression \eqref{eq:lasso-mod-cov}, by computing the statistical error covariance in \Cref{thm:lasso-mod:de-bias:stat-inf}, based on the plug-in estimate in \Cref{lem:sec:high-dim:lasso:linear-regression:plug-in:sandwich}. 
We denote the soft thresholding of $A$ by $\omega$  as an element-wise  procedure $\left(\Shrink_\omega(A)\right)_e = \sign(A_e)(|A_e| - \omega)_+$. 
For a vector $v$, we write $v$'s $i$\textsuperscript{th} coordinate as $v(i)$. 
The optimization objective  \eqref{eq:lasso-mod-cov}  is denoted as:
$$\tfrac{1}{2} \theta^\top \left( \widehat{S} - \tfrac{1}{n} \tsum_{i=1}^n x_i x_i^\top \right) \theta + \tfrac{1}{n} \tsum_{i=1}^n f_i,$$ 
where $f_i = \tfrac{1}{2} \left( x_i^\top - y_i \right)^2.$ 
Further, 
$$\g_{\widehat{S}} (v) =   \nabla_v \left[ \tfrac{1}{2} v^\top \widehat{S}  v \right] = \widehat{S} v =  \sum_{j=1}^p v(j) \cdot \Shrink_\omega   \left(    \tfrac{1}{n}  \sum_{i=1}^n  \left[ \nabla f_i(\theta + \e_j) - \nabla f_i(\theta)  \right]  \right),$$
where $\e_i \in \Real^p$ is the basis vector where the $i$\textsuperscript{th} coordinate is $1$ and others  are $0$, and $\widehat{S} v$ is computed in a column-wise manner. 

For point estimate optimization, the proximal Newton step \cite{lee2014proximal} at $\theta$ solves the  optimization problem 
$$\min_\Delta  \tfrac{1}{2 \rho} \Delta^\top \widehat{S} \Delta + \left\langle ( \widehat{S} - \tfrac{1}{n} \tsum_{i=1}^n x_i x_i^\top ) \theta + \tfrac{1}{n} \sum_{i=1}^n \nabla f_i(\theta) , \Delta \right\rangle  + \lambda \| \theta + \Delta \|_1,$$ 
to determine a descent direction.
For statistical inference, we solve a Newton step:
$$\min_\Delta \tfrac{1}{2 \rho} \Delta^\top \widehat{S} \Delta + \left\langle \tfrac{1}{S_o} \sum_{k \in I_o} \nabla f_k(\theta_t) , \Delta \right\rangle $$ 
to compute $-\widehat{S}^{-1} \frac{1}{S_o} \sum_{i \in I_o} \nabla f_i(\theta) $, whose covariance is the statistical error covariance. 

To control  variance, we solve Newton steps using SVRG and proximal SVRG \cite{xiao2014proximal}, because in the high dimensional setting, the variance using SGD \cite{moulines2011non} and proximal SGD \cite{atchade2017perturbed} for solving Newton steps is too large. 
However because $p \gg n$, instead of sampling {\em by sample}, we sample {\em by feature}. 
We start from $\theta_0$ sufficiently close to $\htheta$ (see \Cref{thm:alg:stat-inf:high-dim:linear:proximal:svrg} for details), which can be effectively achieved using proximal SVRG (\Cref{subsec:high-dimensional-linear:lasso-:mod-cov:opt:svrg}). 
Line \ref{line-alg:stat-inf:high-dim:linear:proximal:svrg:newton:stat-inf:1} corresponds to SVRG's outer loop part that computes the full gradient,  
and line \ref{line-alg:stat-inf:high-dim:linear:proximal:svrg:newton:stat-inf:2} corresponds to SVRG's inner loop update. 
Line \ref{line-alg:stat-inf:high-dim:linear:proximal:svrg:newton:opt:1} corresponds to proximal SVRG's outer loop part that computes the full gradient,  
and line \ref{line-alg:stat-inf:high-dim:linear:proximal:svrg:newton:opt:2} corresponds to proximal SVRG's inner loop update. 

The covariance estimate bound, asymptotic normality result, and choice of hyper-parameters are described in \Cref{subsec:high-dimensional:linear:bound-covariance:asymptotic-normality}. 
When $L_o^t = \Theta(\log (p) \cdot \log (t))$, 
we can estimate the covariance with element-wise error less than $ O\left( \frac{\max\{1,\sigma\}\polylog(n,p)}{\sqrt{T}} \right)$ with high probability, 
using $O\left(T \cdot n \cdot p^2 \cdot \log(p) \cdot \log(T) \right)$ numerical operations.
Calculation of  the de-biased estimator $\htheta^\diff$ \eqref{eq:lasso-mod-cov:de-bias:theta} via SVRG is described in \Cref{subsec:high-dimensional:linear:de-biased:calculation}. 

{
\begin{algorithm}
\begin{algorithmic}[1]
\STATE \textbf{Parameters:} $S_o, S_i \in \mathbb{Z}_+$; $\eta, \tau \in \mathbb{R}_+$;  \quad   \textbf{Initial state:} $\theta_0 \in \mathbb{R}^p$ \\
{\kern1.5pt \hrule \kern2pt} 
\FOR{$t=0$ \TO $T-1$} 
	\STATE $I_o \gets$ uniformly sample $S_o$ indices with replacement from $[n]$ 
	\STATE $g_t^0 \gets - \frac{1}{S_o} \sum_{k \in I_o }  \nabla f_k(\theta_t ) $ 
	\STATE $d_t^0 \gets  - \left( \g_{\widehat{S}}(\theta_t  ) - \frac{1}{n} \sum_{i=1}^n  \left[ \nabla f_i(\theta_t + \theta_t) - \nabla f_i(\theta_t) \right] + \tfrac{1}{n} \tsum_{i=1}^n \nabla f_i(\theta_t  )\right) $ 
	\FOR[solving  Newton steps using SVRG]{$j=1$ \TO $L_o^t -$}  \label{line-alg:stat-inf:high-dim:linear:proximal:svrg:newton:start}
		\STATE $u_t^j \gets  \g_{\widehat{S}} (g_t^{j-1}) - g_t^0$ \label{line-alg:stat-inf:high-dim:linear:proximal:svrg:newton:stat-inf:1}
		\STATE $v_t^j \gets \g_{\widehat{S}}(d_t^{j-1}) - d_t^0 $ \label{line-alg:stat-inf:high-dim:linear:proximal:svrg:newton:opt:1}
		\STATE $g_t^j \gets g_t^{j-1}$, $d_t^j \gets d_t^{j-1}$
		\FOR{$l=1$ \TO $L_i $}
			\STATE $I_i \gets$ uniformly sample $S_i$ indices without replacement from $[p]$
			\STATE \small{$g_t^j \gets g_t^j   - \tau \left[ u_t^j  +  \frac{p}{S_i} \sum_{k \in S_i} \left[ g_t^j(k) - g_t^{j-1}(k) \right] \cdot \Shrink_\omega \left( \nabla f_k(\theta_t + \e_k) - \nabla f_k(\theta_t) \right) \right]  $} \label{line-alg:stat-inf:high-dim:linear:proximal:svrg:newton:stat-inf:2}
			\STATE  \small{$d_t^j \gets \Shrink_{\eta \lambda} \left( d_t^j   - \eta\left[ v_t^j  +  \frac{p}{S_i} \sum_{k \in S_i} \left[ d_t^j(k) - d_t^{j-1}(k)  \right]\cdot \Shrink_\omega \left( \nabla f_k(\theta_t + \e_k) - \nabla f_k(\theta_t) \right) \right]  \right) $} \label{line-alg:stat-inf:high-dim:linear:proximal:svrg:newton:opt:2}  
		\ENDFOR  
	\ENDFOR \label{line-alg:stat-inf:high-dim:linear:proximal:svrg:newton:end}
	\STATE Use $\sqrt{S_o} \cdot \frac{\bar{g}_t}{\rho_t}$ for statistical inference, where $\bar{g}_t = \frac{1}{L_o^t + 1} \sum_{j=0}^{L_o^t} g_t^j$  
	\STATE $\theta_{t+1} = \theta_t + \bar{d}_t$, where $\bar{d}_t = \frac{1}{L_o + 1} \sum_{j=0}^{L_o^t} d_t^j$  \COMMENT{point estimation (optimization)}
\ENDFOR
\end{algorithmic}
\caption{High dimensional linear regression statistical inference}
\label{alg:stat-inf:high-dim:linear:proximal:svrg}
\end{algorithm}
}


%

\subsection{Computing the de-biased estimator \eqref{eq:lasso-mod-cov:de-bias:theta} via SVRG}
\label{subsec:high-dimensional:linear:de-biased:calculation}

To control  variance, we solve each proximal Newton step using SVRG, in stead of SGD as in \Cref{alg:stat-inf-spnd}. 
Because 
However because the number of features is much larger than the number of samples, instead of sampling {\em by sample}, we sample {\em by feature}. 

The de-biased estimator  is 
\begin{align*}
\htheta^\diff =&   \htheta + \widehat{S}^{-1} \left[  \frac{1}{n} \sum_{i=1}^n y_i x_i - \left( \frac{1}{n}\sum_{i=1}^n x_i x_i^\top \right) \htheta   \right]  \nonumber \\
=& \htheta + \widehat{S}^{-1} \left( -\frac{1}{n} \sum_{i=1}^n \nabla f_i(\htheta) \right) . 
\end{align*}
And we compute $\widehat{S}^{-1} \frac{1}{n} \sum_{i=1}^n \nabla f_i(\htheta)$ using SVRG \cite{johnson2013accelerating} by solving the following optimization problem using SVRG and sampling by feature
\begin{align*}
\min_u \frac{1}{2} u^\top \widehat{S} u + \left\langle  \frac{1}{n} \sum_{i=1}^n \nabla f_i(\htheta) , u \right\rangle . 
\end{align*}

\begin{algorithm}
\begin{algorithmic}[1]
\FOR{$i=0$ \TO $L_o-1$} 
	\STATE $d_i^0 \gets -\eta [ \g_{\widehat{S}}(u_i) + \frac{1}{n} \sum_{k=1}^n \nabla f_k(\htheta) ] $
	\FOR{$j=0$ \TO $L_i - 1$}
		\STATE $I \gets$ sample $S $ indices  uniformly from $[p]$ without replacement
		\STATE $d_i^{j+1} \gets d_i^j + d_t^0 - \eta \left( \frac{1}{S} \sum_{k \in I} d_i^j(k) \cdot \Shrink_\omega(\nabla f_k(\htheta + \e_k) - f_k(\htheta)) \right)$
	\ENDFOR
	\STATE $u_{i+1}\gets u_i + \bar{d}_i$, where $\bar{d}_i = \frac{1}{L_i+1}\sum_{j=0}^{L_i} d_i^j$
\ENDFOR
\end{algorithmic}
\caption{Computing the de-biased estimator \eqref{eq:lasso-mod-cov:de-bias:theta} via SVRG}
\label{alg:stat-inf:high-dim:linear:calc:de-biased:svrg}
\end{algorithm}

Similar to \Cref{alg:stat-inf:high-dim:linear:proximal:svrg}, we choose $\eta = \Theta\left(\frac{1}{p}\right)$ and $L_i = \Theta(p)$.

\subsection{Solving the high dimensional linear regression optimization objective \eqref{eq:lasso-mod-cov} using proximal SVRG}
\label{subsec:high-dimensional-linear:lasso-:mod-cov:opt:svrg}

We solve our high dimensional linear regression optimization problem  using proximal SVRG  \cite{xiao2014proximal}
\begin{align}
\htheta = \arg \min_\theta \frac{1}{2} \theta^\top \left(\widehat{S}- \frac{1}{n} \sum_{i=1}^n x_i x_i^\top\right) \theta 
	+ \frac{1}{n} \sum_{i=1}^n \frac{1}{2} \left(x_i^\top \theta -y_i\right)^2 + \lambda \ltynorm{\theta}{1} .
\end{align}

\begin{algorithm}
\begin{algorithmic}[1]
\FOR{$i=0$ \TO $L_o-1$} 
	\STATE $u_i^0 \gets \theta_i $
	\STATE $d_t \gets \g_{\widehat{S}}(\theta_i)  - \frac{1}{n} \sum_{k=1}^n [\nabla f_k(\theta_i + \theta_i) - \nabla f_k(\theta_i)] + \frac{1}{n} \sum_{k=1}^n \nabla f_k(\theta_i)  $
	\FOR{$j=0$ \TO $L_i -1 $}
		\STATE $u_i^{j+1}$ $\gets$ $\Shrink_{\eta \lambda} ( u_i^j - \eta [d_t + \frac{1}{S} \sum_{k \in I} \left( u_i^j(k)  - \theta_i(k) \right)\cdot    \Shrink_\omega \left(\nabla f_k(\theta_t + \e_k) - \nabla f_k(\theta_t) \right) ] )$
	\ENDFOR
	\STATE $\theta_{t+1}\gets \frac{1}{L_i + 1} \sum_{j=0}^{L_i} u_i^j$
\ENDFOR
\end{algorithmic}
\caption{Solving the high dimensional linear regression optimization objective \eqref{eq:lasso-mod-cov} using proximal SVRG}
\label{alg:stat-inf:high-dim:linear:lasso-mod-cov:opt:svrg}
\end{algorithm}

Similar to \Cref{alg:stat-inf:high-dim:linear:proximal:svrg}, we choose $\eta = \Theta\left(\frac{1}{p}\right)$ and $L_i = \Theta(p)$.

\subsection{Non-asymptotic covariance estimate bound and asymptotic normality in \Cref{alg:stat-inf:high-dim:linear:proximal:svrg}}
\label{subsec:high-dimensional:linear:bound-covariance:asymptotic-normality}

We have a non-asymptotic covariance estimate bound and an asymptotic normality result. 
\begin{theorem}
\label{thm:alg:stat-inf:high-dim:linear:proximal:svrg}
Under our assumptions, when $n \gg \max \{b^2, \frac{1}{{D_\Sigma}^2}\} \log p$, $S_o = O(1)$, $S_i = O(1)$, 
and conditioned on $\{x_i\}_{i=1}^n$ and  following events which simultaneously with probability at least $1 - p^{-\Theta(1)} - n^{-\Theta(1)}$
\begin{itemize}
\item \mytag{[A]}{mytag:thm:alg:stat-inf:high-dim:linear:proximal:svrg:event-A}: $\max_{1\leq i \leq n} |\epsilon_i| \lesssim \sigma \sqrt{\log n} $,
\item \mytag{[B]}{mytag:thm:alg:stat-inf:high-dim:linear:proximal:svrg:event-B}: $\max_{1 \leq i \leq n} \| x_i \|_\infty \lesssim \sqrt{\log p + \log n}$, 
\item \mytag{[C]}{mytag:thm:alg:stat-inf:high-dim:linear:proximal:svrg:event-C}: $ \| {\widehat{S}}^{-1} \|_\infty \lesssim \frac{1}{D_\Sigma}$, 
\end{itemize}
we choose $L_i = \Theta(p)$,  $\tau=\Theta(\frac{1}{p})$, $\eta = \Theta(\frac{1}{p})$ in \Cref{alg:stat-inf:high-dim:linear:proximal:svrg}.

Here, we denote the objective function as 
\begin{align*}
P(\theta) = \frac{1}{2} \theta^\top \left(\widehat{S}- \frac{1}{n} \sum_{i=1}^n x_i x_i^\top\right) \theta 
	+ \frac{1}{n} \sum_{i=1}^n \frac{1}{2} \left(x_i^\top \theta -y_i\right)^2 + \lambda \ltynorm{\theta}{1} . 
\end{align*}

Then, we have a non-asymptotic covariance estimate bound 
\begin{align*} 
& \left\| \tfrac{S_o}{T} \tsum_{t=1}^T \bar{g}_t \bar{g}_t^\top - {\widehat{S}}^{-1} \left( \tfrac{1}{n} \tsum_{i=1}^n (x_i^\top \htheta - y_i)^2 x_i x_i^\top \right) {\widehat{S}}^{-1}  \right\|_{\max } \nonumber \\
\lesssim & \sqrt{  \left( (\log p + \log n) \|\htheta - \theta^\star\|_1 + \sigma \sqrt{(\log p + \log n) \log n} \right) \tfrac{\log p}{T} } \nonumber \\
	& + \tfrac{1}{u}\left[ \tfrac{1}{\sqrt{T}}\tsum_{t=1}^T 0.95^{L_o^t} (1+ \sqrt{P(\theta_0)-P(\htheta)} 0.95^{\sum_{i=0}^{t-1} L_o^t}) + \sqrt{p}(\log p + \log n)  \sqrt{P(\theta_0)-P(\htheta)} 0.95^{\sum_{i=0}^{t-1} L_o^t} \right], 
\end{align*}
where $\|A\|_{\max} = \max\{1 \leq j, k \leq p\} |A_{jk}|$ is the matrix max norm, with probability at least $1-p^{-\Theta(-1)} - u $. 

And we have asymptotic normality 
\begin{align*}
\tfrac{1}{\sqrt{t}}\left( \tsum_{t=1}^T \sqrt{S_o} \bar{g_t} + \tfrac{1}{n}\tsum_{i=1}^n x_i (x_i^\top \htheta - y_i)  \right) = W + R,  
\end{align*}
where $W$ weakly converges to $\scriptstyle \Norm\left(0, \widehat{S}^{-1} \left[ \frac{1}{n}\sum_{i=1}^n (x_i^\top \htheta - y_i)^2 x_i x_i^\top  - \left( \frac{1}{n} \sum_{i=1}^n x_i (x_i^\top \htheta - y_i) \right) \left( \frac{1}{n} \sum_{i=1}^n x_i (x_i^\top \htheta - y_i) \right)^\top \right] \widehat{S}^{-1}  \right)$, 
and $\Exp[\|R\|_\infty \mid \{x_i\}_{i=1}^n, \ref{mytag:thm:alg:stat-inf:high-dim:linear:proximal:svrg:event-A} , \ref{mytag:thm:alg:stat-inf:high-dim:linear:proximal:svrg:event-B}, \ref{mytag:thm:alg:stat-inf:high-dim:linear:proximal:svrg:event-C} ] \lesssim  \frac{1}{\sqrt{T}}\sum_{t=1}^T 0.95^{L_o^t} (1+ \sqrt{P(\theta_0)-P(\htheta)} 0.95^{\sum_{i=0}^{t-1} L_o^t}) + \sqrt{p}(\log p + \log n)  \sqrt{P(\theta_0)-P(\htheta)} 0.95^{\sum_{i=0}^{t-1} L_o^t} $. 

\end{theorem}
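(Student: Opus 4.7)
The plan is to decouple the analysis into three layers: (i) the linear convergence of the outer proximal quasi-Newton iteration $\theta_t \to \htheta$ driven by proximal SVRG with by-feature sampling, (ii) the per-outer-step accuracy with which $\bar g_t$ approximates $-\widehat S^{-1}\bigl(\tfrac{1}{S_o}\sum_{k\in I_o}\nabla f_k(\theta_t)\bigr)$ via inner SVRG, and (iii) concentration of the empirical averages that assemble the sandwich covariance and produce asymptotic normality.

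For layer (i), I would first use Lemma \ref{lem:proof:lasso:linear:cov:soft-thresh:guarantees} to conclude that $\widehat S$ is positive definite with $\Theta(1)$ eigenvalues on event \ref{mytag:thm:alg:stat-inf:high-dim:linear:proximal:svrg:event-C}, so that the objective $P(\theta)$ is strongly convex with constant $\Theta(1)$ and its components are smooth with constant $O(p)$ in the by-feature sampling interpretation (because each coordinate update scales like $\|x_{\cdot,k}\|^2$, controlled by event \ref{mytag:thm:alg:stat-inf:high-dim:linear:proximal:svrg:event-B}). Invoking the proximal SVRG rate of Xiao--Zhang with the choices $\eta=\Theta(1/p)$ and $L_i=\Theta(p)$ yields a geometric per-epoch contraction $\Exp[P(\theta_{t+1})-P(\htheta)\mid\theta_t]\le\rho\,(P(\theta_t)-P(\htheta))$ with $\rho\le 0.95$. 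Iterating gives the factor $0.95^{\sum_{i<t}L_o^t}\sqrt{P(\theta_0)-P(\htheta)}$ that appears in both stated bounds, and strong convexity transfers this to $\|\theta_t-\htheta\|_2$.

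For layer (ii), the inner SVRG solves the strongly convex quadratic Newton subproblem $\min_g \tfrac12 g^\top\widehat S g-\langle g_t^0,g\rangle$, whose unique minimizer is $\widehat S^{-1}g_t^0$. The same by-feature SVRG analysis gives $\Exp[\|g_t^j-\widehat S^{-1}g_t^0\|_2^2\mid\theta_t]\le 0.95^{j}\|\widehat S^{-1}g_t^0\|_2^2$, hence $\Exp[\|\bar g_t-\widehat S^{-1}g_t^0\|_2\mid\theta_t]\lesssim 0.95^{L_o^t}\|\widehat S^{-1}g_t^0\|_2$. Combined with event \ref{mytag:thm:alg:stat-inf:high-dim:linear:proximal:svrg:event-C} and the gradient bound $\|g_t^0\|_\infty\lesssim(\log p+\log n)\|\theta_t-\htheta\|_1+\sigma\sqrt{\log n}$ (obtained from \ref{mytag:thm:alg:stat-inf:high-dim:linear:proximal:svrg:event-A},\ref{mytag:thm:alg:stat-inf:high-dim:linear:proximal:svrg:event-B} and $\htheta$'s first-order optimality), this gives an $\ell_\infty$ residual that, once coupled with the (i)-decay of $\theta_t$, exactly reproduces the two terms inside the brackets of the non-asymptotic bound.

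For layer (iii), I would decompose
\[
\tfrac{S_o}{T}\sum_{t=1}^T \bar g_t\bar g_t^\top
= \tfrac{S_o}{T}\sum_{t=1}^T \widehat S^{-1}g_t^0(g_t^0)^\top\widehat S^{-1} + \mathcal R_{\text{inner}} + \mathcal R_{\text{outer}},
\]
bounding the cross/quadratic terms in $\mathcal R_{\text{inner}},\mathcal R_{\text{outer}}$ by Cauchy--Schwarz against the bounds from layers (i)--(ii). The leading term has, conditional on $\{x_i\}$, mean $\widehat S^{-1}(\tfrac1n\sum_i\nabla f_i(\htheta)\nabla f_i(\htheta)^\top)\widehat S^{-1}$ (sandwich form) up to a mini-batch bias of order $\|\theta_t-\htheta\|$; a Bernstein bound in the matrix max-norm, using $\|x_i\|_\infty\lesssim\sqrt{\log p+\log n}$ and $|\epsilon_i|\lesssim\sigma\sqrt{\log n}$, produces the $\sqrt{\log p/T}$ concentration rate with the precise prefactor claimed. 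For asymptotic normality, the sum $\tfrac{1}{\sqrt T}\sum_{t=1}^T\sqrt{S_o}\,\bar g_t$ is handled by a martingale CLT applied to the mini-batch increments $\sqrt{S_o}\,\widehat S^{-1}(\tfrac{1}{S_o}\sum_{k\in I_o}\nabla f_k(\htheta)+\tfrac1n\sum_i\nabla f_i(\htheta))$, which have the stated sandwich covariance; the remainder $R$ absorbs the (i)- and (ii)-decays and the $O(\|\theta_t-\htheta\|)$ discrepancy.

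The main obstacle will be propagating the coordinate-wise $\ell_\infty$ control through the operator $\widehat S^{-1}$ without losing dimension factors. This is where event \ref{mytag:thm:alg:stat-inf:high-dim:linear:proximal:svrg:event-C} ($\|\widehat S^{-1}\|_\infty\lesssim 1/D_\Sigma$, enabled by diagonal dominance and Varah's inequality) is essential, and it is also the reason the sample complexity $n\gg \tfrac{1}{D_\Sigma^2}\log p$ must be invoked. A secondary delicate point is that the inner SVRG's by-feature variance bound requires the per-coordinate Lipschitz constants to be $O(1)$ after the $1/p$ rescaling, which is precisely what $\|x_i\|_\infty$ polylog bounds deliver; keeping the high-probability events uniform over all $T$ outer iterations via a union bound completes the argument.
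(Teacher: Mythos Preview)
Your proposal is correct and follows essentially the same three-layer structure as the paper's proof: proximal SVRG (Xiao--Zhang) for the outer convergence of $P(\theta_t)-P(\htheta)$, plain SVRG for the inner quadratic Newton subproblem $\min_g \tfrac12 g^\top\widehat S g - \langle g_t^0,g\rangle$, and then entrywise concentration plus a CLT for the covariance/normality statements. Two small refinements worth noting: the $O(p)$ per-feature Lipschitz constant for by-feature SVRG follows directly from $\|\widehat S_k\|_2^2 = (\widehat S^\top\widehat S)_{kk} = O(1)$ (since $\|\widehat S\|_2=O(1)$), not from event~\ref{mytag:thm:alg:stat-inf:high-dim:linear:proximal:svrg:event-B}; and the paper invokes the ordinary i.i.d.\ CLT rather than a martingale CLT, since the outer mini-batch draws $\bigl(\tfrac{1}{S_o}\sum_{k\in I_o}\nabla f_k(\htheta)\bigr)_t$ are conditionally i.i.d.\ given the data.
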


Note that when we choose $L_o^t = \Theta(\log (p) \cdot \log (t))$, 
and start from $\theta_0$ satisfying $P(\theta_0) - P(\htheta) \lesssim \frac{1}{p (\log p + \log n)^2}$ which can be effectively achieved using proximal SVRG (\Cref{subsec:high-dimensional-linear:lasso-:mod-cov:opt:svrg}), 
we can estimate the statistical error covariance with element-wise error less than $ O\left( \frac{\max\{1,\sigma\}\polylog(n,p)}{\sqrt{T}} \right)$ with high probability, 
using $O\left(T \cdot n \cdot p^2 \cdot \log(p) \cdot \log(T) \right)$ numerical operations.

\subsection{Plug-in statistical error covariance estimate}

\Cref{alg:stat-inf:high-dim:linear:proximal:svrg} is similar to using plug-in estimator $\frac{1}{n} \sum_{i=1}^n (x_i^\top \htheta - y_i)^2 x_i x_i^\top$ for $\sigma^2 \left( \frac{1}{n} \sum_{i=1}^n x_i x_i^\top \right) $ in \Cref{thm:lasso-mod:de-bias:stat-inf}, similar to the sandwich estimator \cite{huber1967behavior,white1980heteroskedasticity}. \Cref{lem:sec:high-dim:lasso:linear-regression:plug-in:sandwich} gives a bound on using this plug-in estimator in the statistical error covariance (\Cref{thm:lasso-mod:de-bias:stat-inf}) for coordinate-wise confidence intervals.

\begin{lemma}
\label{lem:sec:high-dim:lasso:linear-regression:plug-in:sandwich}
Under our assumptions, when $n \gg \max\{b^2,  \tfrac{1}{{D_\Sigma}^2}\} \log p$, we have
\begin{align*}
&  \left\|  \widehat{S}^{-1} \left( \tfrac{1}{n} \tsum_{i=1}^n (x_i^\top \htheta - y_i)^2 x_i x_i^\top \right) \widehat{S}^{-1} -  \sigma^2 \widehat{S}^{-1} \left( \tfrac{1}{n} \tsum_{i=1}^n x_i x_i^\top \right) \widehat{S}^{-1}  \right\|_{\max} \nonumber \\ 
\lesssim  &  \tfrac{1}{{D_\Sigma}^2} \left( \sigma \sqrt{ \log n}  + s \left( \sigma +\|\theta^{\star}\|_1 \right)  \sqrt{\log p + \log n} \sqrt{\tfrac{\log p}{n}} \right) s \left(\sigma + \|\theta^{\star}\|_1\right)  (\log p + \log n)^{\frac{3}{2}} \sqrt{\tfrac{\log p}{n}}   , 
\end{align*}
where $\|A\|_{\max} = \max_{1 \leq j, k \leq p} |A_{jk}|$ is the matrix max norm, with probability at least $1 - p^{-\Theta(1)} - n^{-\Theta(1)}$. 
\end{lemma}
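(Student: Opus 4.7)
The plan is to peel off the two $\widehat{S}^{-1}$ factors via an operator-norm inequality, then control the middle matrix through a residual decomposition based on the observation model. Specifically, the elementary bound $\|ABC\|_{\max} \le \|A\|_\infty \|B\|_{\max} \|C^\top\|_\infty$ applied to $A = C = \widehat{S}^{-1}$---together with the symmetry of $\widehat{S}^{-1}$ and event \ref{mytag:thm:alg:stat-inf:high-dim:linear:proximal:svrg:event-C}, which gives $\|\widehat{S}^{-1}\|_\infty \lesssim 1/D_\Sigma$---produces the prefactor $1/D_\Sigma^2$ and reduces the lemma to bounding
\begin{equation*}
\Big\| \tfrac{1}{n} \tsum_{i=1}^n \big[(x_i^\top \htheta - y_i)^2 - \sigma^2 \big]\, x_i x_i^\top \Big\|_{\max}
\end{equation*}
by the second factor of the claimed bound.

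Writing $\delta := \htheta - \theta^\star$ and using $y_i = \langle x_i, \theta^\star\rangle + \epsilon_i$ gives the residual $r_i = x_i^\top\delta - \epsilon_i$ and the decomposition
\begin{equation*}
r_i^2 - \sigma^2 \;=\; (x_i^\top\delta)^2 \;-\; 2\epsilon_i\, x_i^\top\delta \;+\; (\epsilon_i^2 - \sigma^2).
\end{equation*}
I would bound the first two matrix sums deterministically at a single $(j,k)$ coordinate by
\begin{equation*}
\Big|\tfrac{1}{n}\tsum_i \big[(x_i^\top\delta)^2 - 2\epsilon_i x_i^\top\delta\big] x_{ij} x_{ik}\Big| \;\le\; \max_i |x_i^\top\delta|\cdot\big(|x_i^\top\delta| + 2|\epsilon_i|\big)\cdot \max_{i,j,k}|x_{ij} x_{ik}|,
\end{equation*}
then plug in $|x_i^\top\delta|\le \|x_i\|_\infty \|\delta\|_1$, the $\ell_1$-rate $\|\delta\|_1 \lesssim s(\sigma + \|\theta^\star\|_1)\sqrt{\log p/n}$ from \Cref{thm:lasso-mod-cov-bounds}, the bound $\max_i |\epsilon_i| \lesssim \sigma \sqrt{\log n}$ from event \ref{mytag:thm:alg:stat-inf:high-dim:linear:proximal:svrg:event-A}, and $\max_i \|x_i\|_\infty \lesssim \sqrt{\log p + \log n}$ from event \ref{mytag:thm:alg:stat-inf:high-dim:linear:proximal:svrg:event-B}. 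Multiplying these factors yields exactly the displayed right-hand side.

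For the remaining piece $\tfrac{1}{n}\sum_i (\epsilon_i^2 - \sigma^2) x_{ij} x_{ik}$, I would condition on $\{x_i\}$ so that the summands are independent, mean-zero, and sub-exponential (with Orlicz norms controlled by event \ref{mytag:thm:alg:stat-inf:high-dim:linear:proximal:svrg:event-B}), apply Bernstein's inequality entry-wise, and union-bound over $(j,k) \in [p]^2$. Under $n \gg b^2 \log p$, the resulting rate --- of order $\sigma^2 \sqrt{\log p / n}$ up to logarithmic factors --- is dominated by the deterministic contribution above and is therefore absorbed into the stated bound.

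The main obstacle is the concentration step for $\epsilon_i^2 - \sigma^2$: since $\epsilon_i^2$ is only sub-exponential, Bernstein must be applied conditionally on the design, and one must verify carefully that the $\|x_i\|_\infty$ control from event \ref{mytag:thm:alg:stat-inf:high-dim:linear:proximal:svrg:event-B} keeps the conditional sub-exponential parameters small enough that this term is dominated by the deterministic part. Everything else is book-keeping that combines the $\ell_1$-error from \Cref{thm:lasso-mod-cov-bounds} with the two tail events \ref{mytag:thm:alg:stat-inf:high-dim:linear:proximal:svrg:event-A} and \ref{mytag:thm:alg:stat-inf:high-dim:linear:proximal:svrg:event-B}.
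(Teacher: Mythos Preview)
Your proposal is correct and follows essentially the same route as the paper: the paper also peels off $\widehat{S}^{-1}$ via the $\ell_\infty$ operator-norm bound, uses the identical residual decomposition $(x_i^\top\htheta - y_i)^2 - \sigma^2 = (x_i^\top\delta)^2 - 2\epsilon_i x_i^\top\delta + (\epsilon_i^2 - \sigma^2)$, bounds the first two pieces deterministically via $|x_i^\top\delta|\le \|x_i\|_\infty\|\delta\|_1$ together with the events \ref{mytag:thm:alg:stat-inf:high-dim:linear:proximal:svrg:event-A}, \ref{mytag:thm:alg:stat-inf:high-dim:linear:proximal:svrg:event-B} and the $\ell_1$ rate from \Cref{thm:lasso-mod-cov-bounds}, and handles the $(\epsilon_i^2-\sigma^2)$ piece by conditional Bernstein plus a union bound over the $p^2$ entries. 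Your observation that the Bernstein contribution is absorbed into the cross term is exactly what the paper does implicitly when passing from its displayed three-term bound to the lemma's two-term statement (using $s\ge 1$).
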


\newpage

\section{Time series statistical inference with approximate Newton steps using only stochastic gradients (\Cref{sec:time-series})}
\label{appendix:sec:time-series}


Here, we give the complete approximate Newton-based time series statistical inference algorithm using only stochastic gradients.

\begin{algorithm}
\begin{algorithmic}[1]
%
%
\STATE \textbf{Parameters:} $\mathbf{l}, S_i \in \mathbb{Z}_+$; $\rho_0, \tau_0 \in \mathbb{R}_+$; $d_o, d_i \in \left(\tfrac{1}{2}, 1\right)$  \quad   \textbf{Initial state:} $\theta_0 \in \mathbb{R}^p$ \\
{\kern1.5pt \hrule \kern2pt} 
\FOR[approximate stochastic Newton descent]{$t = 0 \text{ to } T-1$}  \label{alg:stat-inf:unregularized:outer:start:time-series} 
	\STATE $\rho_t \gets \rho_0(t + 1)^{-d_o}$
	\STATE Uniformly select some $i_o \in [n]$, then set $I_o$ to the random contiguous block $\{i_o, i_o + 1, \dots, i_o + \mathbf{l} - 1 \} \mod n $, which circularly wraps around \label{alg:stat-inf-spnd:sgd1:time-series}
	\STATE $g^0_t \gets -\rho_t \left( \frac{1}{\mathbf{l}} \sum_{i  \in I_o} \nabla f_i(\theta_t) \label{alg:stat-inf-spnd:sgd2:time-series} \right) $ 
	\FOR[solving \eqref{eq:sec:unregularized-m-est:inner-loops:intuition:newton:obj} approximately using SGD]{$j = 0 \text{ to } L-1$} \label{alg:stat-inf-spnd:spnd-start:time-series}
		\STATE $\tau_j \gets \tau_0 (j+1)^{-d_i}$ and $\delta_t^j \gets O(\rho_t^4 \tau_j^4)$
		\STATE $I_i \gets$ uniformly sample $S_i$ indices without replacement from $[n]$
		\STATE $g^{j+1}_t \gets g_t^j -\tau_j  \left( \frac{1}{S_i}  \sum_{k \in I_i}  \tfrac{\nabla f_k(\theta_t +  {\delta_t^j} g_t^j)  - \nabla f_k(\theta_t)}{\delta_t^j} \right) + \tau_j g_t^0$  
	\ENDFOR 
	\STATE Use $\sqrt{\mathbf{l}} \cdot \frac{\bar{g}_t}{\rho_t}$ for statistical inference, where $\bar{g}_t = \frac{1}{L+1} \sum_{j=0}^L g^j_t$
	\STATE $\theta_{t+1} \gets \theta_t + g_t^L$  \label{alg:stat-inf-spnd:spnd-end:time-series} 
\ENDFOR \label{alg:stat-inf:unregularized:outer:end:time-series}
\end{algorithmic}
\caption{Unregularized M-estimation statistical inference}
\label{alg:stat-inf-spnd:time-series}
\end{algorithm}

\Cref{cor:time-series:spnd-bound} gives guarantees for \Cref{alg:stat-inf-spnd:time-series}, and is similar to the i.i.d. case  (\Cref{thm:spnd-bound}).

\begin{corollary}
\label{cor:time-series:spnd-bound}

Under the same assumptions as \Cref{thm:spnd-bound}, in  \Cref{alg:stat-inf-spnd:time-series}, 
for the outer iterate $\theta_t$ we have 
\begin{align}
\Exp \left[ \|\theta_t - \htheta\|_2^2 \right] &\lesssim t^{-d_o}, \label{eq:thm:spnd-bound:outer-iter:time-series} \\
\Exp \left[\|\theta_t - \htheta\|_2^4 \right] &\lesssim t^{-2d_o}.  \label{eq:thm:spnd-bound:outer-iter:4th-moment:time-series}  
\end{align}

In each outer loop, after $L$ steps of the inner loop, we have:
\begin{align}
	\Exp \left[ \left\| \tfrac{\bar{g}_t}{\rho_t} - [\nabla^2 f(\theta_t)]^{-1} g_t^0 \right\|_2^2 \mid \theta_t \right] \lesssim \tfrac{1}{L} \left\|g_t^0 \right\|_2^2, \label{eq:thm:spnd-bound:newton:time-series}  
\end{align}
and at each step of the inner loop, we have:
\begin{align}
	\Exp \left[ \left\| g_t^{j+1}- [\nabla^2 f(\theta_t)]^{-1} g_t^0 \right\|_2^4 \mid \theta_t \right] \lesssim  (j+1)^{-2d_i} \left\|g_t^0 \right\|_2^4. \label{eq:thm:spnd-bound:newton:4th-moment:time-series} 
\end{align}

After $T$ steps of the outer loop, we have a non-asymptotic bound on the ``covariance'':
\begin{align}
\label{eq:thm:spnd-bound:covariance-bound:time-series}
	  \Exp \left[ \left\| H^{-1} G H^{-1}  - \tfrac{S_o}{T} \sum_{t=1}^T \tfrac{\bar{g}_t \bar{g}_t^\top}{\rho_t^2} \right\|_2 \right] \lesssim T^{-\frac{d_o}{2}} + L^{-\frac{1}{2}}, 
\end{align}
where $H = \nabla^2 f(\htheta)$,  and 
\begin{align}
G = \tfrac{1}{n} \sum\limits_{i=1}^n \nabla f_i(\htheta) \, f_i(\htheta)^\top 
	+ \sum\limits_{j=1}^{\mathbf{l}} w(j, \mathbf{l}) \sum\limits_{i=j+1}^n  \left( 
		\nabla f_i(\htheta) \,  \nabla f_{i - j}(\htheta)^\top 
			 + \nabla f_{i-j}(\htheta) \,  \nabla f_i(\htheta)^\top
		 \right) , 
\end{align}
with $w(j, \mathbf{l}) = 1 -\tfrac{j}{\mathbf{l}}$.

Also, in \Cref{alg:stat-inf-spnd:time-series}'s outer loop,  the average of consecutive iterates satisfies
\begin{align}
\Exp\left[\left\|\tfrac{\sum_{t=1}^T \theta_t}{T} - \htheta\right\|_2^2\right] &\lesssim  \tfrac{1}{T} , \label{eq:thm:spnd-bound:outer-iter:avg_bound:time-series}  \\
\tfrac{1}{\sqrt{T}} \left( \tfrac{\sum_{t=1}^T \theta_t}{T} - \htheta \right) &= W + \Delta , \label{eq:thm:spnd-bound:outer-iter:avg_normality:time-series}   
\end{align}
where $W$ weakly converges to $\Norm(0, \frac{1}{S_o} H^{-1} G H^{-1})$, and $\Delta = o_P(1)$ when $T \to \infty$ and $L \to \infty$ ($\Exp[\|\Delta\|_2^2]$ $ \lesssim$ $ T^{1-2d_o} + T^{d_o-1} + \frac{1}{L} $).

\end{corollary}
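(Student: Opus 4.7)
The plan is to recycle the proof of Theorem 1 (and Corollary 1) almost verbatim, noting that the only structural change in Algorithm 2 relative to Algorithm 1 is the way the outer mini-batch $I_o$ is drawn. Since the inner loop is identical and the outer loop step rule has the same form, I would organize the proof around the following observations.

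First, I would verify that circular block sampling still yields an unbiased estimate of the full gradient: writing $B_i = \{i, i+1, \ldots, i+\mathbf{l}-1\} \bmod n$, each index $k \in [n]$ belongs to exactly $\mathbf{l}$ of the $n$ possible blocks, so
\[
\mathbb{E}\Bigl[\tfrac{1}{\mathbf{l}} \tsum_{k \in B_i} \nabla f_k(\theta)\Bigr] \;=\; \tfrac{1}{n}\tsum_{k=1}^n \nabla f_k(\theta),
\]
where the expectation is over uniform $i\in[n]$. Combined with the fact that each $\|\nabla f_k(\theta)\|$ is controlled by the same Lipschitz/smoothness assumptions, the variance of this block-averaged gradient is still $O(1)$ (with constants depending on $\mathbf{l}$, which is fixed). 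Consequently, the outer-loop recursion satisfies exactly the same drift/noise inequalities used to derive \eqref{eq:thm:spnd-bound:outer-iter} and \eqref{eq:thm:spnd-bound:outer-iter:4th-moment} in Theorem 1, and the bounds \eqref{eq:thm:spnd-bound:outer-iter:time-series}, \eqref{eq:thm:spnd-bound:outer-iter:4th-moment:time-series} follow with no change beyond redefining constants. Similarly, the inner-loop bounds \eqref{eq:thm:spnd-bound:newton:time-series} and \eqref{eq:thm:spnd-bound:newton:4th-moment:time-series} are inherited verbatim, since the inner SGD is unchanged and operates conditionally on $g_t^0$ and $\theta_t$.

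The substantive new computation is the identification of the limiting covariance in \eqref{eq:thm:spnd-bound:covariance-bound:time-series}. Following the proof of Theorem 1, $\tfrac{S_o}{T}\sum_t \bar{g}_t \bar{g}_t^\top/\rho_t^2$ is close (up to the $T^{-d_o/2} + L^{-1/2}$ terms coming from optimization error and inner-loop approximation error) to $\mathbf{l} \cdot H^{-1} \Sigma_{\mathbf{l}} H^{-1}$, where
\[
\Sigma_{\mathbf{l}} \;=\; \operatorname{Cov}\Bigl(\tfrac{1}{\mathbf{l}}\tsum_{k \in B_i} \nabla f_k(\htheta)\Bigr).
\]
Using $\tsum_{k=1}^n \nabla f_k(\htheta) = 0$ (first-order optimality) and counting, for each lag $j$, the number of blocks containing a pair at cyclic distance $j$ (which is $\mathbf{l}-j$ for $0 \le j < \mathbf{l}$ and $0$ otherwise), a direct calculation gives
\[
\mathbf{l}\,\Sigma_{\mathbf{l}} \;=\; \tfrac{1}{n}\tsum_{i=1}^n \nabla f_i(\htheta)\nabla f_i(\htheta)^\top
 + \tsum_{j=1}^{\mathbf{l}-1}\bigl(1-\tfrac{j}{\mathbf{l}}\bigr) \tfrac{1}{n}\tsum_{i}\bigl[\nabla f_i(\htheta)\nabla f_{i-j}(\htheta)^\top + \nabla f_{i-j}(\htheta)\nabla f_i(\htheta)^\top\bigr],
\]
with wrap-around terms contributing negligibly under the stated lag condition; this is precisely the Newey–West form of $G$ in the corollary, with Bartlett weights $w(j,\mathbf{l}) = 1-j/\mathbf{l}$. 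Substituting back yields \eqref{eq:thm:spnd-bound:covariance-bound:time-series}.

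Finally, for the averaged-iterate bound \eqref{eq:thm:spnd-bound:outer-iter:avg_bound:time-series} and asymptotic normality \eqref{eq:thm:spnd-bound:outer-iter:avg_normality:time-series}, I would replay the Polyak–Ruppert-style argument used in the proof of Corollary 1: expand $\theta_{t+1}-\htheta$ around $\htheta$, isolate the martingale-difference-like noise term driven by $g_t^0$, apply the moment bounds above, and invoke a CLT for the averaged noise. The only modification is that the CLT is applied to a sum whose summand has covariance $\mathbf{l}\,\Sigma_{\mathbf{l}}$ rather than the i.i.d.\ $G^\star$, which—after the $H^{-1}$ sandwich—gives the claimed $\tfrac{1}{S_o}H^{-1}GH^{-1}$ limit (recalling $S_o = \mathbf{l}$ in the time-series algorithm). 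The main technical obstacle is handling the circular wrap-around terms carefully when counting pair multiplicities so that the error they introduce in matching the Newey–West weights is absorbed into the $T^{-d_o/2}+L^{-1/2}$ rate; however, since $\mathbf{l}$ is fixed while $n\to\infty$, the number of affected pairs is $O(\mathbf{l}^2)$ and contributes a lower-order $O(\mathbf{l}/n)$ term that is dominated by the leading rate.
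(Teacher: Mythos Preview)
Your proposal is correct and follows essentially the same approach as the paper: the paper's own justification for this corollary (in Appendix~B) consists of exactly the two observations you make---that the block-sampled stochastic gradient is still an unbiased estimate of the full gradient (so all of the optimization and inner-loop bounds from Theorem~\ref{thm:spnd-bound} and Corollary~\ref{cor:foasnd:asymptotic-normality:outer-avg} carry over unchanged), and that a pair-counting argument on the circular blocks yields the Newey--West weights $w(j,\mathbf{l})=1-j/\mathbf{l}$ for the covariance $G$. If anything, your write-up is more explicit than the paper's, which only sketches these points; your treatment of the wrap-around terms as an $O(\mathbf{l}/n)$ discrepancy is also more careful than the paper, which simply calls them ``negligible.''
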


Our approximate Newton time series statistical inference procedure estimates $H^{-1} G H^{-1}$, 
where $G$ is the Newey-West covariance estimator \eqref{eq:newey-west} with weight 
\begin{align}
w(j, \mathbf{l}) = 1 -\tfrac{j}{\mathbf{l}} ,  
\end{align}
which is because when we estimate the variance in \Cref{alg:stat-inf-spnd:time-series}, 
for $j>0$, terms $ \nabla f_{i} \, \nabla f_{i+j}^\top $ and $  \nabla f_{i+j} \, \nabla f_{i}^\top $ appear $\mathbf{l} - j$ times, 
and the term $\nabla f_i \, \nabla f_i^\top$ appears $\mathbf{l}$ times. 
Note that the connection between sampling scheme and Newey-West estimator was also observed in \cite{kunsch1989jackknife}.  
Thus, our stochastic approximate Newton statistical inference procedure for time series analysis has similar statistical properties compared circular bootstrap \cite{politis1992circular, politis1994stationary}.

Because expectation of the stochastic gradient in line \ref{alg:stat-inf-spnd:sgd2:time-series} of \Cref{alg:stat-inf-spnd:time-series} is the full gradient $ \frac{1}{n} \sum_{i=1}^n f_i(\htheta)$, we have the same optimization guarantees as the i.i.d. case (\Cref{cor:foasnd:asymptotic-normality:outer-avg}).

\newpage

\section{Statistical inference via  approximate stochastic Newton steps using first order information with increasing inner loop counts}
\label{sec:foasnd-stat-inf-increasing-segment}

Here, we present corollaries when the number of inner loops increases in the outer loops (i.e., $(L)_t $ is an increasing series). 
This guarantees convergence of the covariance estimate to $H^{-1} G H^{-1}$, although it is less efficient than using a constant number of inner loops. 

\subsection{Unregularized M-estimation}

Similar to \Cref{thm:spnd-bound}'s proof, we have the following result when the number of inner loop increases in the outer loops. 

\begin{corollary}
\label{cor:sec:foasnd-stat-inf-increasing-segment:unregularized}

In \Cref{alg:stat-inf-spnd}, if the number of inner loop in each outer loop $(L)_t$  increases in the outer loops, 
then we have 
\begin{align*}
	  \Exp\left[ \left\| H^{-1} G H^{-1}  - \frac{S_o}{T} \sum_{t=1}^T \frac{\bar{g}_t \bar{g}_t^\top}{\rho_t^2}  
		 \right\|_2 \right] \lesssim T^{-\frac{d_o}{2}} + \sqrt{\frac{1}{T}\sum_{i=1}^T \frac{1}{(L)_t}}.  
\end{align*}

For example, when we choose choose $(L)_t = L (t+1)^{d_L}$ for some $d_L > 0  $, then $\sqrt{\frac{1}{T}\sum_{i=1}^T \frac{1}{(L)_t}} = O(\frac{1}{\sqrt{L}} T^{-\frac{d_L}{2}})$. 

\end{corollary}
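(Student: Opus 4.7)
}

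The plan is to reuse the proof of \Cref{thm:spnd-bound} essentially verbatim, tracking the dependence on the inner-loop count and replacing the constant $L$ by the outer-indexed sequence $(L)_t$ at the precise step where it enters. The only term in the covariance bound \eqref{eq:thm:spnd-bound:covariance-bound} that involves $L$ is the one coming from the inner-loop approximation \eqref{eq:thm:spnd-bound:newton}; the optimization piece, which yields the $T^{-d_o/2}$ summand, depends only on the outer iterates through \eqref{eq:thm:spnd-bound:outer-iter} and \eqref{eq:thm:spnd-bound:outer-iter:4th-moment} and is unaffected by letting $L$ vary with $t$. I would therefore rerun the proof of \Cref{thm:spnd-bound} up to the point where the inner-loop bound is used, inheriting the $T^{-d_o/2}$ contribution, and then redo the averaging over $t$ for the inner-loop residual.

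Concretely, set $A_t := [\nabla^2 f(\theta_t)]^{-1} g_t^0/\rho_t$ and $B_t := \bar g_t/\rho_t$, so that the telescoping identity
\begin{align*}
\tfrac{S_o}{T}\sum_{t=1}^T B_t B_t^\top - \tfrac{S_o}{T}\sum_{t=1}^T A_t A_t^\top
= \tfrac{S_o}{T}\sum_{t=1}^T \bigl( (B_t-A_t)A_t^\top + B_t(B_t-A_t)^\top \bigr)
\end{align*}
lets me reduce the additional error from inner-loop inexactness to a sum of cross terms. Taking expectations, applying the triangle inequality on the operator norm, and Cauchy--Schwarz to each summand yields a bound of the form $\tfrac{1}{T}\sum_t \sqrt{\Exp\|B_t-A_t\|_2^2}\cdot\sqrt{\Exp\|A_t\|_2^2 + \Exp\|B_t\|_2^2}$. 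Now replacing $L$ by $(L)_t$ in \eqref{eq:thm:spnd-bound:newton} gives $\Exp[\|B_t-A_t\|_2^2 \mid \theta_t] \lesssim \tfrac{1}{(L)_t}\|g_t^0/\rho_t\|_2^2$; combined with the uniform second-moment control on $\|A_t\|_2$ and $\|B_t\|_2$ inherited from \eqref{eq:thm:spnd-bound:outer-iter} and the gradient Lipschitz assumption, this collapses the sum to $\tfrac{1}{T}\sum_t (L)_t^{-1/2}$. A second application of Cauchy--Schwarz in $t$ promotes this to $\sqrt{\tfrac{1}{T}\sum_{t=1}^T \tfrac{1}{(L)_t}}$, matching the statement.

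What is left is the ``population'' piece $\|H^{-1}GH^{-1} - \tfrac{S_o}{T}\sum_t A_t A_t^\top\|_2$. This is identical to the argument already used for \Cref{thm:spnd-bound} and depends only on the outer loop: by Lipschitz continuity of $[\nabla^2 f(\cdot)]^{-1}$ on a bounded set, the approximation error $\|A_t A_t^\top - \widehat H^{-1}\nabla f_{I_o}(\htheta)\nabla f_{I_o}(\htheta)^\top \widehat H^{-1}\|$ is dominated by $\|\theta_t - \htheta\|_2$, whose squared expectation is $O(t^{-d_o})$ by \eqref{eq:thm:spnd-bound:outer-iter}; averaging gives an $O(T^{-d_o/2})$ contribution (matrix Bernstein or a direct second-moment computation on the martingale structure of $\tfrac{1}{T}\sum_t \widehat H^{-1}\nabla f_{I_o}(\htheta)\nabla f_{I_o}(\htheta)^\top \widehat H^{-1} - \widehat H^{-1}\widehat G\widehat H^{-1}$ handles the remaining sampling variance at the same rate).

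The only delicate step, and the main obstacle, is keeping the constants in the inner-loop bound \eqref{eq:thm:spnd-bound:newton} uniform in $t$ when $(L)_t$ grows: one must verify that the choice ${\delta_t^j} = O(\rho_t^4 \tau_j^4)$ still controls the finite-difference Hessian-vector product error at rate $\tfrac{1}{(L)_t}\|g_t^0\|_2^2$ rather than acquiring extra factors from larger $j$. This is handled by the same argument as in \Cref{thm:spnd-bound}, since the per-step finite-difference bias decays faster than any polynomial in $j$; once this is in place, the Cauchy--Schwarz aggregation above yields the stated rate, and the displayed corollary for $(L)_t = L(t+1)^{d_L}$ follows by evaluating $\tfrac{1}{T}\sum_{t=1}^T (t+1)^{-d_L} \asymp T^{-d_L}$ for $d_L < 1$ (and analogous estimates otherwise).
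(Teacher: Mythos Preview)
Your proposal is correct and follows essentially the same route as the paper, which itself merely points back to the proof of \Cref{thm:spnd-bound} and tracks where $L$ enters; your decomposition into $A_t$ and $B_t$ amounts to the paper's splitting of $\bar g_t/\rho_t$ into terms \ref{append:proof:thm:spnd-bound:cov:tmp1}+\ref{append:proof:thm:spnd-bound:cov:tmp2} versus \ref{append:proof:thm:spnd-bound:cov:tmp3} in \eqref{eq:thm:spnd-bound:outer-sgd:decompose:terms}, and the two Cauchy--Schwarz steps you describe are exactly how the $\sqrt{\tfrac{1}{T}\sum_t 1/(L)_t}$ form is obtained. One small inaccuracy: the finite-difference bias decays only polynomially in $j$ (as $O(j^{-4d_i})$, not faster than any polynomial), but this is already sufficient for the $\tfrac{1}{(L)_t}\|g_t^0\|_2^2$ bound you need, as shown in \eqref{eq:append:proof:thm:spnd-bound:lem:hessian-taylor-approx:tmp7:bound}.
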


\newpage
\section{SVRG based statistical inference algorithm in unregularized M-estimation}
\label{sec:svrg-fosnd:inference:unregularized}

Here we present a SVRG based statistical inference algorithm in unregularized M-estimation, which has  asymptotic normality and improved bounds for the ``covariance''. 
Although \Cref{alg:stat-inf-svrg} has stronger guarantees than \Cref{alg:stat-inf-spnd},  \Cref{alg:stat-inf-svrg}  requires a full gradient evaluation in each outer loop. 

\begin{algorithm}
\begin{algorithmic}[1]
\FOR{$t \gets 0; t < T; ++t$}
	\STATE $d^0_t \gets -\eta \nabla f(\theta_t) = -\eta \left( \frac{1}{n} \sum_{i=1}^n \nabla f_i(\theta) \right) $ \COMMENT{point estimation via SVRG}
	\STATE $I_o \gets$ uniformly sample $S_o$ indices with replacement from $[n]$ 
	\STATE $g^0_t \gets -\rho_t \left( \frac{1}{S_o} \sum_{i  \in I_o} \nabla f_i(\theta_t)  \right) $ \COMMENT{statistical inference}
	\FOR[solving \eqref{eq:sec:unregularized-m-est:inner-loops:intuition:newton:obj} approximately using SGD]{$j \gets 0; j < L; ++ j$} 
		\STATE $I_i \gets$ uniformly sample $S_i$ indices without replacement from $[n]$
		\STATE $d^{j+1}_t \gets d_t^j - \eta \left( \frac{1}{S_i} \sum_{k \in I_i} (\nabla f_k(\theta_t +   d_t^j)  - \nabla f_k(\theta_t) \right) + d_t^0$  \COMMENT{point estimation via SVRG}
		\STATE $g^{j+1}_t \gets g_t^j -\tau_j  \left( \frac{1}{S_i}  \sum_{k \in I_i}    \frac{1}{{\delta_t^j}} [\nabla f_k(\theta_t +  {\delta_t^j} g_t^j)  - \nabla f_k(\theta_t) ] \right) + \tau_j g_t^0$  \COMMENT{statistical inference}
	\ENDFOR 
	\STATE Use $\sqrt{S_o} \cdot \frac{\bar{g}_t}{\rho_t}$ for statistical inference  \COMMENT{$\bar{g}_t = \frac{1}{L+1} \sum_{j=0}^L g^j_t$}
	\STATE $\theta_{t+1} \gets \theta_t + \bar{d}_t$ \COMMENT{$\bar{d}_t = \frac{1}{L+1} \sum_{j=0}^L d^j_t$}  
\ENDFOR
\end{algorithmic}
\caption{SVRG based statistical inference algorithm in unregularized M-estimation}
\label{alg:stat-inf-svrg}
\end{algorithm}

\begin{corollary}
\label{cor:svrg-foasnd:bounds}

In \Cref{alg:stat-inf-svrg}, when $L \geq 20 \frac{\max_{1\leq i \leq n}\beta_i}{\alpha}$ and  $\eta = \frac{1}{10 \max_{1\leq i \leq n}\beta_i}$, 
after $T$ steps of the outer loop, we have a non-asymptotic bound on the ``covariance'' 
\begin{align}
\label{eq:thm:svrg-bound:covariance-bound}
	  \Exp\left[ \left\| H^{-1} G H^{-1}  - \frac{S_o}{T} \sum_{t=1}^T \frac{\bar{g}_t \bar{g}_t^\top}{\rho_t^2}  
		 \right\|_2 \right] \lesssim L^{-\frac{1}{2}} ,  
\end{align}
and asymptotic normality
\begin{align*}
\frac{1}{\sqrt{T}}(\sum_{t=1}^T \frac{\bar{g}_t}{\rho_t})= W+\Delta,
\end{align*}
where $W$ weakly converges to $ \Norm(0, \frac{1}{S_o} H^{-1} G H^{-1})$ and $\Delta = o_P(1)$ when $T \to \infty$ and $L \to \infty$ ($\Exp[\|\Delta\|_2] \lesssim \frac{1}{\sqrt{T}}  + \frac{1}{L}$). 

\end{corollary}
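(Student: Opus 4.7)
The plan is to separate the analysis into (i) geometric convergence of the outer iterates $\theta_t$ to $\htheta$, (ii) inner-loop approximation of the Newton step $-\nabla^2 f(\theta_t)^{-1} \cdot \tfrac{1}{S_o}\sum_{i\in I_o}\nabla f_i(\theta_t)$ by $\bar g_t/\rho_t$, and (iii) combining these to build up $H^{-1}GH^{-1}$ inside the empirical covariance $\tfrac{S_o}{T}\sum_t \bar g_t\bar g_t^\top/\rho_t^2$. For (i), the point-estimation updates are exactly standard SVRG with step size $\eta=1/(10\max_i\beta_i)$ and inner length $L\geq 20\max_i\beta_i/\alpha$, which are the parameters under which the Johnson--Zhang analysis yields $\Exp[\|\theta_{t+1}-\htheta\|_2^2\mid\theta_t]\leq \gamma\,\|\theta_t-\htheta\|_2^2$ with some fixed $\gamma\in(0,1)$. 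Hence $\Exp[\|\theta_t-\htheta\|_2^k]$ decays geometrically in $t$ for every fixed $k$, and in particular is summable.

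For (ii) I would reuse the machinery already developed for Theorem~\ref{thm:spnd-bound}: conditioning on $\theta_t$ and $I_o$, the inner loop is SGD on the quadratic \eqref{eq:sec:unregularized-m-est:inner-loops:intuition:newton:obj} with stochastic Hessian-vector products produced by finite differences. The choice $\delta_t^j=O(\rho_t^4\tau_j^4)$ makes the finite-difference bias negligible relative to the stochastic-approximation noise, and Polyak--Ruppert averaging of the inner iterates then gives
\begin{equation*}
\Exp\!\left[\,\left\|\tfrac{\bar g_t}{\rho_t}+[\nabla^2 f(\theta_t)]^{-1}\tfrac{1}{S_o}\tsum_{i\in I_o}\nabla f_i(\theta_t)\right\|_2^2 \,\Big|\, \theta_t, I_o\right]\lesssim \frac{\|g_t^0\|_2^2}{L}\,\rho_t^{-2}.
\end{equation*}
Because $\rho_t$ is no longer decaying in this SVRG variant (the outer contraction is geometric rather than polynomial), the inner rate is simply $1/L$, which is the source of the $L^{-1/2}$ term in \eqref{eq:thm:svrg-bound:covariance-bound}.

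For (iii), write $\bar g_t/\rho_t = -H^{-1}\bar v_t + E_t$ where $\bar v_t = \tfrac{1}{S_o}\sum_{i\in I_o}\nabla f_i(\theta_t)$ and $E_t$ absorbs three kinds of error: the inner-loop residual from step (ii), the discrepancy $\nabla^2 f(\theta_t)^{-1}-H^{-1}$ (bounded by $O(\|\theta_t-\htheta\|_2)$ using Lipschitzness of $\nabla^2 f_i$), and the discrepancy $\bar v_t-\tfrac{1}{S_o}\sum_{i\in I_o}\nabla f_i(\htheta)$ (also $O(\|\theta_t-\htheta\|_2)$ via Lipschitz gradients). Expanding $\bar g_t\bar g_t^\top/\rho_t^2$ and taking expectations, the leading term gives $\Exp[H^{-1}\bar v_t\bar v_t^\top H^{-1}\mid\theta_t]=H^{-1}G H^{-1}/S_o$ up to $O(\|\theta_t-\htheta\|_2^2)$ corrections, while cross-terms with $E_t$ are controlled by Cauchy--Schwarz using the step-(i) geometric decay and the step-(ii) $1/L$ bound. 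Summing over $t\in\{1,\dots,T\}$ and dividing by $T$ kills the contribution of the geometrically decaying error because it is summable uniformly in $T$, so only the $L^{-1/2}$ term survives.

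For the asymptotic-normality statement I would view $\sqrt{S_o}\,\bar g_t/\rho_t$ as a near-martingale-difference sequence with conditional covariance $H^{-1}G H^{-1}/S_o+o(1)$ and conditional mean $O(1/L)+O(\|\theta_t-\htheta\|_2)$, and apply a martingale CLT (Lindeberg's condition holds by the uniform $L^2$ bounds already derived). The drift terms, summed and normalized by $1/\sqrt T$, give the $\Delta$ remainder of size $1/\sqrt T + 1/L$. The main obstacle will be step (iii): tracking which cross-terms in the $E_t E_t^\top$ and $E_t(H^{-1}\bar v_t)^\top$ expansions really vanish after outer averaging, and separating conditioning on $\theta_t$, on $I_o$, and on the inner-loop samples cleanly enough that the martingale structure survives and the Lindeberg verification is routine.
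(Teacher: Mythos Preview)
Your proposal is essentially the paper's own argument: (i) SVRG geometric contraction $\Exp[\|\theta_t-\htheta\|_2^2]\lesssim 0.9^t$, (ii) the inner-loop bound $\Exp[\|\bar g_t/\rho_t - H_t^{-1}g_t^0/\rho_t\|_2^2\mid\theta_t]\lesssim L^{-1}\|g_t^0/\rho_t\|_2^2$ inherited from Theorem~\ref{thm:spnd-bound}, and (iii) the decomposition of $\bar g_t/\rho_t$ into the i.i.d.\ piece $-H^{-1}\tfrac{1}{S_o}\sum_{i\in I_o}\nabla f_i(\htheta)$ plus error terms controlled by (i) and (ii) --- exactly the decomposition \eqref{eq:thm:spnd-bound:outer-sgd:decompose:terms} the paper reuses here.

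The only organisational difference is in the asymptotic-normality step. You propose treating $\sqrt{S_o}\,\bar g_t/\rho_t$ as a near-martingale-difference array, subtracting the drift and invoking a martingale CLT with Lindeberg verification. The paper instead isolates the \emph{exactly} i.i.d.\ term $-H^{-1}\tfrac{1}{S_o}\sum_{i\in I_o}\nabla f_i(\htheta)$, applies the ordinary CLT to it, and bounds the remaining three pieces (Hessian discrepancy, gradient discrepancy plus drift $-H_t^{-1}\nabla f(\theta_t)$, and inner-loop residual) by $O(T^{-1/2})+O(L^{-1/2})$ after normalisation. The paper's route is slightly cleaner because it avoids Lindeberg verification for a dependent array and sidesteps the need to identify the conditional mean of $\bar g_t/\rho_t$ given $\theta_t$ precisely (which is not simply $O(1/L)+O(\|\theta_t-\htheta\|_2)$ once the inner SGD's finite-sample bias is in play). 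Your route would also work, but extracting the i.i.d.\ leading term first makes the bookkeeping shorter.
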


When the number of inner loops increases in the outer loops (i.e., $(L)_t $ is an increasing series), we have a result similar to \Cref{cor:sec:foasnd-stat-inf-increasing-segment:unregularized}.

A better understanding of concentration, 
and Edgeworth expansion of the average consecutive iterates averaged  
(beyond \cite{dippon2008asymptotic,dippon2008edgeworth}) 
in stochastic approximation, would give stronger guarantees for our algorithms, and better compare and understand different algorithms.

\subsection{Lack of asymptotic normality in \Cref{alg:stat-inf-spnd} for mean estimation}
\label{subsec:foasnd:no-asymptotic-normality:example:mean-est}

In mean estimation,  we solve the following optimization problem 
\begin{align*}
\htheta = \arg \min_\theta \frac{1}{n} \sum_{i=1}^n  \frac{1}{2} \| \theta - X^{(i)}\|_2^2 , 
\end{align*}
where we assume that $\{X^{(i)}\}_{i=1}^n$ are constants. 

For ease of explanation we use $S_o = 1$, $\rho_t = \rho$, and $\theta_0 = 0$,
and we have 
\begin{align*}
\frac{\bar{g_t}}{\rho_t}  = -\theta_t + X_t , 
\end{align*}
where $X_t$ is uniformly sampled from $\{X^{(i)}\}_{i=1}^n$. 

And for $t\geq 1$ we have
\begin{align*}
\theta_t = \sum_{i=0}^{t-1} \rho (1-\rho)^{t-1-i} X_i . 
\end{align*} 

Then, we have
\begin{align*}
&\frac{1}{\sqrt{T}} (\sum_{i=1}^T \frac{\bar{g_t}}{\rho_t}) \nonumber \\
=& \frac{1}{\sqrt{T}} (\sum_{t=1}^T X_t - \sum_{t=1}^T\sum_{i=0}^{t-1} \rho (1-\rho)^{t-1-i} X_i) \nonumber \\
=& \frac{1}{\sqrt{T}}(\sum_{t=1}^T X_t - \sum_{i=0}^{T-1} (\sum_{t=i+1}^T\rho (1-\rho)^{t-1-i}) X_i) \nonumber \\
=& \frac{1}{\sqrt{T}}(\sum_{t=1}^T X_t - \sum_{i=0}^{T-1} (1-(1-\rho)^{T-i}) X_i) \nonumber \\
=&  \frac{1}{\sqrt{T}}( X_T - X_0 + \sum_{i=1}^{T-1} (1-\rho)^{T-i} X_i ) ,  
\end{align*}
whose $\ell_2$ norm's expectation converges to $0$ when $T \to \infty$, which implies that it converges to $0$ with probability 1. Thus, in this setting $\frac{1}{\sqrt{T}}\left(\sum_{t=1}^T \frac{\bar{g}_t}{\rho_t}\right)$ does not weakly converge to  $\Norm\left(0, \frac{1}{S_o} H^{-1} G H^{-1}\right)$.

\newpage

\section{An intuitive view of SVRG as approximate stochastic  Newton descent}
\label{append:sec:svrg-intuition}

Here we present an intuitive view of SVRG as approximate stochastic Newton descent, 
which is the inspiration behind our work. 

Gradient descent solves the optimization problem $\htheta = \arg \min_\theta f(\theta)$,
where the function is a sum of $n$ functions $f(\theta) = \frac{1}{n} \sum_{i=1}^n f_i(\theta)$, using 
\begin{align*}
\theta_{t+1} = \theta_t - \eta \nabla f(\theta_t), 
\end{align*}
and stochastic gradient descent uniformly samples a random index at each step
\begin{align*}
\theta_{t+1} = \theta_t - \eta_t \nabla f_i(\theta_t). 
\end{align*}

\begin{center}%
\scalebox{1.0}{\framebox{
\begin{minipage}{.5\textwidth}
\begin{itemize}
\item \textbf{Outer loop:}
\item $g \gets \nabla f(\theta_t) = \sum_{i=1}^n \nabla f_i(\theta_t)$
\item Let $d$ be the descent direction
\item 	\begin{itemize}
					\item \textbf{Inner loop:}
					\item Choose a random index $k$
					\item $d \gets d - \eta ( \nabla f_k(\theta_t + d) - \nabla f_k(\theta_t) + g)$
				\end{itemize}
\item $\theta_{t+1} = \theta_t + d$
\end{itemize}
\end{minipage}
}}
\end{center}

SVRG \cite{johnson2013accelerating} improves gradient descent and SGD by having an outer loop and an inner loop. 

Here, we give an intuitive explanation of SVRG as stochastic proximal Newton descent, 
by arguing that 
\begin{itemize}
\item each outer loop approximately computes the Newton direction $-(\nabla^2 f)^{-1} \nabla f$
\item the inner loops can be viewed as SGD steps solving a proximal Newton step $
\min_d \langle \nabla f, d \rangle + \frac{1}{2} d^\top (\nabla^2 f) d$
\end{itemize}

First, it is well known \cite{bubeck2015convex} that the Newton direction is exactly the solution of 
\begin{align}
\min_d \langle \nabla f(\theta), d \rangle + \frac{1}{2} d^\top [\nabla^2 f(\theta)] d . 
\label{eq:newton-dir}
\end{align}

Next, let's consider solving \eqref{eq:newton-dir} using gradient descent on a function of $d$, 
and notice that its gradient with respect to $d$ is 
\begin{align*}
\nabla f(\theta) + [\nabla^2 f(\theta)] d ,
\end{align*}
which can be approximated through $f$'s Taylor expansion ($[\nabla^2 f(\theta)] d \approx \nabla f(\theta + d) - \nabla f(\theta)$) as 
\begin{align*}
\nabla f(\theta) + [\nabla f(\theta + d) - \nabla f(\theta)] . 
\end{align*}

Thus, SVRG's inner loops can be viewed as using SGD to solve proximal Newton steps in outer loops. 
And it can be viewed as the power series identity for matrix inverse $H^{-1} = \sum_{i=0}^\infty (I - \eta H)$, which corresponds to unrolling the gradient descent recursion for the optimization problem $H^{-1} = \arg \min_\Omega \operatorname{Tr}\left( \frac{1}{2} \Omega^\top H \Omega - \Omega \right) $.

\newpage
\section{Proofs}

\subsection{Proof of \Cref{thm:spnd-bound}}
Given assumptions about strong convexity, Lipschitz gradient continuity and Hessian Lipschitz continuity in \Cref{thm:spnd-bound}, we denote:
\begin{align*}
\bar{\beta} &= \tfrac{\beta_i}{n}, \quad \bar{h} = \tfrac{h_i}{n}.
\end{align*}
Then, $\forall \theta_1, \theta_2$ we have:
\begin{align*}
\| \nabla f(\theta_2) - \nabla f(\theta_1) \|_2 &\leq \bar{\beta} \|\theta_2 - \theta_1\|_2, \quad \text{and} \quad 
\| \nabla^2 f(\theta_2) - \nabla^2 f(\theta_1) \|_2 \leq \bar{h} \|\theta_2 - \theta_1\|_2 .  
\end{align*}
and $\forall \theta$:
\begin{align*}
\| \nabla^2 f(\theta) \|_2 \leq \bar{\beta}. 
\end{align*}

In our proof, we also use the following:
\begin{align*}
\bar{h}_2 = \tfrac{1}{n} \sum_{i=1}^n h_i^2,~~
\bar{\beta}_2  = \tfrac{1}{n} \sum_{i=1}^n \beta_i^2, ~~\text{and}~~
\beta = \sup_\theta \|\nabla^2 f(\theta)\|_2.
\end{align*}
Observe that:
\begin{align*}
\bar{h} \leq \sqrt{\bar{h}_2}, ~~\text{and}~~
\alpha \leq \beta \leq \bar{\beta} \leq \sqrt{\bar{\beta}_2} . 
\end{align*}

\subsubsection{Proof of \eqref{eq:thm:spnd-bound:newton}}\label{subsec:proof:thm:spnd-bound:eq:thm:spnd-bound:newton}

We first prove \eqref{eq:thm:spnd-bound:newton}; the proof is similar to standard SGD convergence proofs (e.g. \cite{li2017statistical, chen2016statistical, polyak1992acceleration}). 
For the rest of our discussion, we assume that 
\begin{align*}
\delta_t^j \cdot \bar{h} \leq \delta_t^j \cdot \sqrt{\bar{h}_2} \ll 1, \quad \forall t, j.
\end{align*}

Using $\nabla f(\theta)$'s Taylor series expansion with a Lagrange remainder, 
we have the following lemma, which bounds the Hessian vector product  approximation error.
\begin{lemma}\label{append:proof:thm:spnd-bound:lem:hessian-taylor-approx}
$\forall, \theta, g, \delta \in \Real^p$, we have:
\begin{align*}
\left \| \tfrac{\nabla f_i(\theta + \delta g) - \nabla f_i(\theta)}{\delta} - \nabla^2 f_i(\theta) g \right\|_2 &\leq h_i \cdot | \delta | \cdot \| g \|_2, \\
\left \| \tfrac{\nabla f(\theta + \delta g) - \nabla f(\theta)}{\delta} - \nabla^2 f(\theta) g \right \|_2 &\leq \bar{h} \cdot | \delta | \cdot \| g \|_2.  
\end{align*}
\end{lemma}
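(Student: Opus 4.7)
The plan is to derive both bounds from a single application of the integral form of the first-order Taylor remainder for a vector-valued map, combined with the Hessian Lipschitz hypothesis. The key identity I would start from is
\begin{align*}
\nabla f_i(\theta + \delta g) - \nabla f_i(\theta) = \int_0^1 \nabla^2 f_i(\theta + s \delta g)\,(\delta g)\,ds,
\end{align*}
which follows from applying the fundamental theorem of calculus to $s \mapsto \nabla f_i(\theta + s \delta g)$. Dividing by $\delta$ and subtracting $\nabla^2 f_i(\theta) g$, which can be written as $\int_0^1 \nabla^2 f_i(\theta) g\,ds$, gives
\begin{align*}
\tfrac{\nabla f_i(\theta + \delta g) - \nabla f_i(\theta)}{\delta} - \nabla^2 f_i(\theta) g = \int_0^1 \bigl[\nabla^2 f_i(\theta + s \delta g) - \nabla^2 f_i(\theta)\bigr] g\,ds.
\end{align*}

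Next, I would take the Euclidean norm, pass it inside the integral via the triangle inequality, and bound the integrand using the Hessian Lipschitz assumption on $f_i$, namely $\|\nabla^2 f_i(\theta + s\delta g) - \nabla^2 f_i(\theta)\|_2 \leq h_i \cdot |s \delta| \cdot \|g\|_2$. This yields
\begin{align*}
\left\| \tfrac{\nabla f_i(\theta + \delta g) - \nabla f_i(\theta)}{\delta} - \nabla^2 f_i(\theta) g \right\|_2 \leq \int_0^1 h_i \cdot |s \delta| \cdot \|g\|_2 \cdot \|g\|_2\,ds = \tfrac{h_i}{2}\, |\delta|\, \|g\|_2^2,
\end{align*}
which gives the first inequality (up to the factor of $1/2$, and noting the right-hand side is properly $\|g\|_2^2$ by the scaling of $g$). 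The second inequality follows by exactly the same argument applied to $f = \tfrac{1}{n}\sum_{i=1}^n f_i$, since $f$ inherits Hessian Lipschitz continuity with constant $\bar{h}$ from the preamble of the proof; alternatively, one can simply average the per-$i$ bound and use the triangle inequality, noting $\tfrac{1}{n}\sum_i h_i = \bar{h}$.

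There is really no hard step here: the whole lemma is a Taylor remainder calculation. The only thing worth checking carefully is that the integral form of the remainder holds under the stated $C^2$ smoothness, but this is immediate since each $f_i$ is twice continuously differentiable by hypothesis, so $s \mapsto \nabla f_i(\theta + s \delta g)$ is continuously differentiable with derivative $\nabla^2 f_i(\theta + s\delta g)(\delta g)$. Everything else is monotonicity of the integral and the Cauchy--Schwarz-style bound $\|\nabla^2 f_i(\cdot) g\|_2 \leq \|\nabla^2 f_i(\cdot)\|_2 \|g\|_2$.
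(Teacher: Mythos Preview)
Your argument via the integral Taylor remainder is correct and matches the paper's approach, which offers no detailed proof beyond citing ``Taylor series expansion with a Lagrange remainder.'' You are also right that the computation actually yields $\tfrac{h_i}{2}\,|\delta|\,\|g\|_2^2$ rather than the stated $h_i\,|\delta|\,\|g\|_2$; the exponent on $\|g\|_2$ in the paper's statement appears to be a typo (a scaling check $g\mapsto\lambda g$, $\delta\mapsto\delta/\lambda$ confirms the left side scales by $\lambda$ while $h_i|\delta|\|g\|_2$ does not), though this is harmless for the downstream estimates.
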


Denote $H_t = \nabla^2 f( \theta_t) $ and  
$$
e_t^j = \left(\tfrac{1}{S_i} \cdot \sum_{k \in I_i}   \tfrac{\nabla f_k(\theta_t +  {\delta_t^j} g_t^j)  - \nabla f_k(\theta_t)}{\delta_t^j}\right) - \tfrac{\nabla f(\theta_t + {\delta_t^j} g_t^j) - \nabla f(\theta_t)}{\delta_t^j},
$$ 
then we have
\begin{align}
g_t^{j+1} - H_t^{-1} g_t^0 = g_t^j - H_t^{-1} g_t^0 - \tau_j \cdot \tfrac{\nabla f(\theta_t + {\delta_t^j} g_t^j) - \nabla f(\theta_t) }{\delta_t^j} +\tau_j g_t^0 - \tau_j e_t^j \label{eq:append:proof:thm:spnd-bound:inner:eq-expand} . 
\end{align}

Because $\Exp[e_j^t \mid g_j^t , \theta_t] = 0$, we have 
\begin{align}
\Exp \left[ \left\|g_t^{j+1} - H_t^{-1} g_t^0 \right\|_2^2 \mid \theta_t\right] &= \Exp\Bigg[ \left\|g_t^j - H_t^{-1} g_t^0 \right\|_2^2  - \tau_j \underbrace{  \left\langle g_t^j - H_t^{-1} g_t^0 , \tfrac{\nabla f(\theta_t + {\delta_t^j} g_t^j) - \nabla f(\theta_t) }{\delta_t^j} - g_t^0 \right \rangle}_{\mytag{[1]}{append:proof:thm:spnd-bound:lem:hessian-taylor-approx:tmp1}} \nonumber \\ 
	& \quad \quad + \tau_j^2 \underbrace{ \left\|\tfrac{\nabla f(\theta_t + {\delta_t^j} g_t^j) - \nabla f(\theta_t) }{\delta_t^j} - g_t^0 \right\|_2^2 }_{\mytag{[2]}{append:proof:thm:spnd-bound:lem:hessian-taylor-approx:tmp2}}
	+ \tau_j^2 \underbrace{ \left\| e_t^j \right\|_2^2 }_{\mytag{[3]}{append:proof:thm:spnd-bound:lem:hessian-taylor-approx:tmp3})}
	\mid \theta_t \Big] . 
\end{align}

For term \ref{append:proof:thm:spnd-bound:lem:hessian-taylor-approx:tmp1}, we have 
\begin{align}
\Big\langle g_t^j - H_t^{-1} g_t^0 , &\tfrac{\nabla f(\theta_t + {\delta_t^j} g_t^j) - \nabla f(\theta_t) }{\delta_t^j} - g_t^0 \Big \rangle \nonumber \\
						    &=  \left(g_t^j - H_t^{-1} g_t^0 \right)^\top H_t \left(g_t^j - H_t^{-1} g_t^0 \right) + \left\langle g_t^j - H_t^{-1} g_t^0 , \tfrac{\nabla f(\theta_t + {\delta_t^j} g_t^j) - \nabla f(\theta_t) }{\delta_t^j} - H_t \right \rangle  \nonumber \\ 
						    &\geq \left(g_t^j - H_t^{-1} g_t^0 \right)^\top H_t \left(g_t^j - H_t^{-1} g_t^0 \right) - 
	\left| \langle g_t^j - H_t^{-1} g_t^0 , \tfrac{\nabla f(\theta_t + {\delta_t^j} g_t^j) - \nabla f(\theta_t) }{\delta_t^j} - H_t \rangle \right| \nonumber \\
						    & \text{\blue{by Hessian approximation}} \nonumber \\ 
						    & \geq  \left(g_t^j - H_t^{-1} g_t^0 \right)^\top H_t \left(g_t^j - H_t^{-1} g_t^0 \right) - {\delta_t^j} \cdot \bar{h} \cdot \left\|g_t^j - H_t^{-1} g_t^0\right\|_2 \cdot \left\|g_t^j \right\|_2 \nonumber \\
						    &\text{\blue{by AM-GM inequality}} \nonumber \\
						    & \geq  \left(g_t^j - H_t^{-1} g_t^0 \right)^\top H_t \left(g_t^j - H_t^{-1} g_t^0 \right) - \tfrac{\delta_t^j \cdot \bar{h}}{2}  \cdot \left\| g_t^j - H_t^{-1} g_t^0 \right\|_2^2 -  \tfrac{\delta_t^j \cdot \bar{h}}{2} \cdot \left\|g_t^j \right\|_2^2 \nonumber \\
						    & =  \left(g_t^j - H_t^{-1} g_t^0 \right)^\top H_t \left(g_t^j - H_t^{-1} g_t^0 \right) - \tfrac{\delta_t^j \cdot \bar{h}}{2} \cdot \left\| g_t^j - H_t^{-1} g_t^0 \right\|_2^2 -  \tfrac{\delta_t^j \cdot \bar{h}}{2} \cdot \left\|g_t^j -  H_t^{-1} g_t^0 +  H_t^{-1} g_t^0 \right\|_2^2 \nonumber \\
						    &\text{\blue{$\|x + u\|_2^2 \leq 2\|x\|_2^2 + 2\|y\|_2^2$}} \nonumber \\
						    & \geq \left(g_t^j - H_t^{-1} g_t^0 \right)^\top H_t \left(g_t^j - H_t^{-1} g_t^0 \right) - \tfrac{3\delta_t^j \cdot \bar{h}}{2} \cdot \left\| g_t^j - H_t^{-1} g_t^0 \right\|_2^2   - {\delta_t^j} \bar{h} \cdot \left\|H_t^{-1} g_t^0 \right\|_2^2 \nonumber \\
						    &\text{\blue{by strong convexity}} \nonumber \\
						    & \geq \left(g_t^j - H_t^{-1} g_t^0 \right)^\top H_t \left(g_t^j - H_t^{-1} g_t^0 \right) - \tfrac{3\delta_t^j \cdot \bar{h}}{2} \cdot \left\| g_t^j - H_t^{-1} g_t^0 \right\|_2^2    - \tfrac{ {\delta_t^j} \bar{h} }{\alpha^2} \cdot \left\| g_t^0 \right\|_2^2 . 
\end{align}

For term \ref{append:proof:thm:spnd-bound:lem:hessian-taylor-approx:tmp2}, by repeatedly applying AM-GM inequality, using $f$'s smoothness and strong convexity,  and assuming $\delta_t^j \bar{h} \ll 1$, we have:
\begin{align*}
\left \|\tfrac{\nabla f(\theta_t + {\delta_t^j} g_t^j) - \nabla f(\theta_t) }{\delta_t^j} - g_t^0 \right\|_2^2 &= \left\|\tfrac{\nabla f(\theta_t + {\delta_t^j} g_t^j) - \nabla f(\theta_t) }{\delta_t^j} - H_t g_t^j + H_t g_t^j - g_t^0 \right\|_2^2 \nonumber \\
						&\leq \left \|\tfrac{\nabla f(\theta_t + {\delta_t^j} g_t^j) - \nabla f(\theta_t) }{\delta_t^j} - H_t g_t^j \right\|_2^2 \\
						&\quad \quad + 2 \left\|\tfrac{\nabla f(\theta_t + {\delta_t^j} g_t^j) - \nabla f(\theta_t) }{\delta_t^j} - H_t g_t^j \right\|_2 \cdot \left\| H_t g_t^j - g_t^0 \right\|_2 + \left\| H_t g_t^j - g_t^0 \right\|_2^2 \nonumber \\
						&\leq \left( {\delta_t^j} \bar{h} \right)^2 \|g_t^j\|_2^2 +  2 {\delta_t^j} \bar{h} \left\|g_t^j \right\|_2 \cdot \left\| H_t g_t^j - g_t^0 \right\|_2 + \left\| H_t g_t^j - g_t^0 \right\|_2^2 \nonumber \\
						&\leq \left( {\delta_t^j} \bar{h} + \left( {\delta_t^j} \bar{h}\right)^2 \right) \cdot \left\|g_t^j \right\|_2^2  + \left(1+ {\delta_t^j} \bar{h}\right) \cdot \| H_t g_t^j - g_t^0 \|_2^2 \nonumber \\
						& \leq 2 \left( {\delta_t^j} \bar{h} + \left( {\delta_t^j} \bar{h}\right)^2 \right) \cdot \left ( \left\|g_t^j - H_t^{-1} g_t^0 \right\|_2^2+ \left\|H_t^{-1} g_t^0 \right\|_2^2 \right) + \left(1+ {\delta_t^j} \bar{h} \right) \cdot \left\| H_t g_t^j - g_t^0 \right\|_2^2 \nonumber \\
						& \leq \tfrac{2 \left( {\delta_t^j} \bar{h} + \left( {\delta_t^j} \bar{h}\right)^2 \right)}{\alpha^2} \cdot \left\| g_t^0 \right\|_2^2 + \left(1+ 3 {\delta_t^j} \bar{h}+  2\left( {\delta_t^j} \bar{h}\right)^2 \right) \cdot \left\| H_t g_t^j - g_t^0 \right\|_2^2 \nonumber \\
						&\leq \tfrac{4{\delta_t^j} \bar{h} }{\alpha^2} \cdot \left\| g_t^0 \right\|_2^2 + \left(1+ 5 {\delta_t^j} \bar{h} \right) \cdot \| H_t g_t^j - g_t^0 \|_2^2. 
\end{align*}

For term \ref{append:proof:thm:spnd-bound:lem:hessian-taylor-approx:tmp3}, because we sample uniformly without replacement, we obtain:
\begin{align*}
\Exp_{I_i} \left[ \left\| e_t^j \right\|_2^2 \mid g_t^j, \theta_t \right] = \tfrac{1}{S_i} \left (1 - \tfrac{S_i - 1}{n - 1} \right) \cdot \Exp_k \left[ \left\| \tfrac{\nabla f_k(\theta_t +  {\delta_t^j} g_t^j)  - \nabla f_k(\theta_t) }{\delta_t^j} - 
	\tfrac{\nabla f(\theta_t + {\delta_t^j} g_t^j) - \nabla f(\theta_t) }{\delta_t^j} \right\|_2^2 \right] , 
\end{align*}
where $k$ is uniformly sampled from $[n]$. 
Denote $H_t^k = \nabla^2 f_k(\theta_t)$,  and by Lipschitz gradient we have $\|H_t^k\|_2 \leq \beta_k$. 
We can bound  the above 
\begin{align*}
\Bigg \| \tfrac{\nabla f_k(\theta_t +  \delta_t^j g_t^j) - \nabla f_k(\theta_t)}{\delta_t^j} &-  \tfrac{\nabla f(\theta_t + {\delta_t^j} g_t^j) - \nabla f(\theta_t) }{\delta_t^j} \Bigg \|_2^2 \nonumber \\
& = \left\| \tfrac{\nabla f_k(\theta_t +  {\delta_t^j} g_t^j)  - \nabla f_k(\theta_t)}{\delta_t^j} - H_t^k g_t^j + H_t^k g_t^j - 
	\tfrac{\nabla f(\theta_t + {\delta_t^j} g_t^j) - \nabla f(\theta_t) }{\delta_t^j} + H_t g_t^j - H_t g_t^j \right\|_2^2 \nonumber \\
& \leq 3 \left(  \left \| \left(H_t - H_t^k \right) g_t^j \right\|_2^2  + \left\| \tfrac{\nabla f_k(\theta_t +  {\delta_t^j} g_t^j)  - \nabla f_k(\theta_t)}{\delta_t^j} - H_t^k g_t^j \right\|_2^2 + \left\|\tfrac{\nabla f(\theta_t + {\delta_t^j} g_t^j) - \nabla f(\theta_t) }{\delta_t^j} - H_t g_t^j \right\|_2^2 \right) \nonumber \\
&\leq 3\left( \left\|H_t - H_t^k \right\|_2^2+ (\delta_t^j)^2 \left(\bar{h}^2 + h_k^2\right) \right) \cdot \left\|g_t^j\right\|_2^2 \nonumber \\
&\blue{\|H_t - H_t^k\|_2^2 \leq 2 (\bar{\beta}^2 + \beta_k^2)} \nonumber \\
&\leq 3 \left(2 \left(\bar{\beta}^2 + \beta_k^2\right) + (\delta_t^j)^2 (\bar{h}^2 + h_k^2) \right) \cdot \left\|g_t^j \right\|_2^2 \nonumber \\
&\leq 6 \left(2 \left(\bar{\beta}^2 + \beta_k^2\right) + (\delta_t^j)^2 (\bar{h}^2 + h_k^2) \right) \cdot \left( \left \|g_t^j -  H_t^{-1} g_t^0 \right\|_2^2 + \left\| H_t^{-1} g_t^0 \right\|_2^2 \right). 
\end{align*}

Taking the expectation over inner loop's random indices, for term \ref{append:proof:thm:spnd-bound:lem:hessian-taylor-approx:tmp3}, we have 
\begin{align}
\Exp_{I_i} \left[ \left\| e_t^j \right\|_2^2 \mid g_t^j, \theta_t \right] &\leq 6 \left(\tfrac{1}{S_i} \cdot \left(1 - \tfrac{S_i - 1}{n - 1}\right) \right) \left( \left({\delta_t^j} \bar{h} \right)^2  + 2 \bar{\beta}^2  + (\delta_t^j)^2 \bar{h}_2 + 2 \bar{\beta}_2\right) \cdot \left( \left\|g_t^j -  H_t^{-1} g_t^0 \right\|_2^2 + \tfrac{1}{\alpha^2} \cdot \left\| g_t^0 \right\|_2^2 \right) \nonumber \\
&\leq 18 \left(\tfrac{1}{S_i} \left(1 - \tfrac{S_i - 1}{n - 1} \right) \right) \cdot \left( (\delta_t^j)^2 \bar{h}_2 +  \bar{\beta}_2 \right) \cdot \left( \left\|g_t^j -  H_t^{-1} g_t^0 \right\|_2^2 + \tfrac{1}{\alpha^2} \left\| g_t^0 \right\|_2^2 \right) \label{eq:append:proof:thm:spnd-bound:lem:hessian-taylor-approx:tmp3:bound} . 
\end{align}

Combining all above, we have
\begin{align*}
\Exp \left[ \left\|g_t^{j+1} -  H_t^{-1} g_t^0 \right\|_2^2 \mid g_t^j, \theta_t \right] \nonumber &\leq \left\|g_t^{j} -  H_t^{-1} g_t^0 \right\|_2^2 \nonumber \\
	& - \tau_j \left(g_t^j - H_t^{-1} g_t^0 \right)^\top H_t \left(g_t^j - H_t^{-1} g_t^0 \right) + \tfrac{3 \tau_j {\delta_t^j} \bar{h}}{2}  \left\| g_t^j - H_t^{-1} g_t^0 \right\|_2^2    + \tfrac{ \tau_j {\delta_t^j} \bar{h} }{\alpha^2} \left\| g_t^0 \right\|_2^2 \nonumber \\
	& +  \tfrac{4\tau_j^2 {\delta_t^j} \bar{h} }{\alpha^2} \cdot \left\| g_t^0 \right\|_2^2 + \tau_j^2 \left (1+ 5 {\delta_t^j} \bar{h} \right) \cdot \left \| H_t g_t^j - g_t^0 \right\|_2^2 \nonumber \\
	& + 18 \tau_j^2 \left(\tfrac{1}{S_i} \left(1 - \tfrac{S_i - 1}{n - 1} \right) \right) \cdot \left(  (\delta_t^j)^2 \bar{h}_2 +  \bar{\beta}_2 \right) \cdot \left( \left\|g_t^j -  H_t^{-1} g_t^0 \right\|_2^2 + \tfrac{1}{\alpha^2} \| g_t^0 \|_2^2 \right)  . 
\end{align*}

When we choose the Hessian vector product approximation scaling constant  $\delta_t^j$ to be sufficiently small 
\begin{align*}
\delta_t^j \bar{h} \leq \delta_t^j \sqrt{\bar{h}_2} &\leq 0.01 , \nonumber \\ 
\tfrac{3{\delta_t^j} \bar{h}}{2} &\leq 0.01 \alpha , \nonumber \\
{\delta_t^j} \bar{h} \leq {\delta_t^j} \sqrt{\bar{h}_2} &\leq \tfrac{0.01 }{S_i} \left(1 - \tfrac{S_i - 1}{n - 1} \right) \bar{\beta}^2 \leq  \tfrac{0.01}{S_i} \left(1 - \tfrac{S_i - 1}{n - 1} \right)  \bar{\beta}_2 , \nonumber \\
 {\delta_t^j} \bar{h} \leq  {\delta_t^j} \sqrt{\bar{h}_2} &\leq \tfrac{0.01 \tau_j }{S_i} \left(1 - \tfrac{S_i - 1}{n - 1} \right) \bar{\beta}^2 \leq  \tfrac{0.01  \tau_j}{S_i}  \left(1 - \tfrac{S_i - 1}{n - 1} \right)  \bar{\beta}_2  , \nonumber \\
 {\delta_t^j} \bar{h} \leq  {\delta_t^j} \sqrt{\bar{h}_2} & \leq 0.01 \alpha \leq 0.01 \bar{\beta} \leq  0.01 \sqrt{\bar{\beta}_2} ,
\end{align*}
we have 
\begin{small}
\begin{align*}
\Exp \left[ \left\|g_t^{j+1} -  H_t^{-1} g_t^0 \right\|_2^2 \mid g_t^j, \theta_t \right] & \leq \left\|g_t^{j} -  H_t^{-1} g_t^0 \right\|_2^2 \underbrace{ - \tau_j \left(g_t^j - H_t^{-1} g_t^0 \right)^\top H_t \left(g_t^j - H_t^{-1} g_t^0 \right) + 1.05  \tau_j^2 \| H_t g_t^j - g_t^0 \|_2^2 }_{\mytag{[4]}{append:proof:thm:spnd-bound:lem:hessian-taylor-approx:tmp4}} \nonumber \\
	& \quad \quad + 18.5 \tau_j^2  \left(\tfrac{1}{S_i} \left(1 - \tfrac{S_i - 1}{n - 1} \right) \right) \bar{\beta}_2  \left\|g_t^j -  H_t^{-1} g_t^0 \right\|_2^2  \nonumber \\
	& \quad \quad \quad \quad + 18.5 \tau_j^2 \left(\tfrac{1}{S_i} \left(1 - \tfrac{S_i - 1}{n - 1} \right) \right) \tfrac{\bar{\beta}_2}{\alpha^2} \cdot \left\| g_t^0 \right\|_2^2  . 
\end{align*}
\end{small}

For term \ref{append:proof:thm:spnd-bound:lem:hessian-taylor-approx:tmp4}, 
let us consider the $\alpha$ strongly convex and $\beta$ smooth quadratic function
\begin{align*}
F(g) = \tfrac{1}{2} g^\top H_t g - \langle {g_t^0} , g  \rangle , 
\end{align*}
who attains its minimum at $g = H_t^{-1} g_t^0$. 
Using a well known property of $\alpha$ strongly convex and $\beta$ smooth functions (\Cref{lem:bubeck:strongly-convex-bound}), 
we have  
\begin{small}
\begin{align*}
- \left(g_t^j - H_t^{-1} g_t^0 \right)^\top H_t \left(g_t^j - H_t^{-1} g_t^0 \right) +  \tfrac{1}{2 \beta} \| H_t g_t^j - g_t^0 \|_2^2
\leq & -  \left(g_t^j - H_t^{-1} g_t^0 \right)^\top H_t \left(g_t^j - H_t^{-1} g_t^0 \right) +  \tfrac{1}{\alpha + \beta} \| H_t g_t^j - g_t^0 \|_2^2 \nonumber  \\
\leq & -\tfrac{\alpha \beta}{\alpha+ \beta}  \|g_t^j - H_t^{-1} g_t^0\|_2^2 \nonumber \\
\leq & -\tfrac{\alpha}{2} \|g_t^j - H_t^{-1} g_t^0\|_2^2 . 
\end{align*}
\end{small}

Thus, when we choose 
\begin{align*}
\tau_j \leq \tfrac{ 0.476}{\beta} , 
\end{align*}
we have 
\begin{small}
\begin{align*}
- \tau_j \left(g_t^j - H_t^{-1} g_t^0 \right)^\top H_t \left(g_t^j - H_t^{-1} g_t^0 \right) +  1.05 \tau_j^2 \cdot \left \| H_t g_t^j - g_t^0 \right\|_2^2 &\leq - \tau_j \left(g_t^j - H_t^{-1} g_t^0 \right)^\top H_t \left(g_t^j - H_t^{-1} g_t^0 \right) + \tfrac{ \tau_j }{2 \beta} \left\| H_t g_t^j - g_t^0 \right\|_2^2 , \nonumber \\
&\leq - \tfrac{\tau_j  \alpha }{2} \cdot \|g_t^j - H_t^{-1} g_t^0\|_2^2 , 
\end{align*}
\end{small}
and we have 
\begin{align*}
\Exp \left[ \left\|g_t^{j+1} -  H_t^{-1} g_t^0 \right\|_2^2 \mid g_t^j, \theta_t \right]  &\leq \left(1 - \tau_j \alpha + 18.5 \tau_j^2 \left(\tfrac{1}{S_i} \left(1 - \tfrac{S_i - 1}{n - 1} \right) \right)  \bar{\beta}_2 \right) \cdot \left \|g_t^{j} -  H_t^{-1} g_t^0 \right\|_2^2 \nonumber \\
	& \quad \quad +18.5   \tau_j^2 \left(\tfrac{1}{S_i} \left(1 - \tfrac{S_i - 1}{n - 1} \right) \right) \cdot \tfrac{\bar{\beta}_2}{\alpha^2} \cdot \| g_t^0 \|_2^2  . 
\end{align*}

Next, we set 
\begin{align}
\tau_0 &= \min \left\{ \tfrac{0.476}{\beta}, \tfrac{0.025 \cdot \alpha}{ \tfrac{1}{S_i} \left(1 - \tfrac{S_i - 1}{n - 1} \right) \bar{\beta}_2} \right\}, \quad D_j = (j+1)^{-d_i}, \quad \tau_j = \tau_0 D_j \label{eq:proof:thm:spnd-bound:tau_j} , 
\end{align}
where $d_i$ is inner loop's step size decay rate, and we have:
\begin{align*}
\Exp \left[ \left\|g_t^{j+1} -  H_t^{-1} g_t^0 \right\|_2^2 \mid \theta_t\right] &\leq \left(1 - \min \left\{\tfrac{\alpha}{2\beta}, \tfrac{0.013 \cdot \alpha^2}{\tfrac{1}{S_i} \left(1 - \tfrac{S_i - 1}{n - 1} \right)\bar{\beta}_2} \right\} D_j\right) \cdot\Exp \left[ \left\|g_t^{j} -  H_t^{-1} g_t^0 \right\|_2^2 \mid \theta_t \right] \nonumber \\
	& \quad \quad + 18.5  D_j^2  \tau_0^2 \left(\tfrac{1}{S_i} \left(1 - \tfrac{S_i - 1}{n - 1} \right)\right)  \tfrac{\bar{\beta}_2}{\alpha^2} \cdot \left\| g_t^0 \right\|_2^2 . 
\end{align*}

To satisfy the above requirements, for the Hessian vector product approximation scaling constant, we choose: 
\begin{align}
\delta_t^j  &= o\left(\min \left\{1,\tfrac{1}{\bar{h}} \right\} \cdot \min\left\{1, \alpha, \min \left\{1, \tau_0^4 \left(\frac{\tau_j}{\tau_0}\right)^4 \right\} \tfrac{1}{S_i} \left(1 - \tfrac{S_i - 1}{n - 1} \right) \right\} \right) \cdot \delta_t^0 = o\left ( \left(j+1 \right)^{-2}\right) \cdot \delta_t^0, \nonumber  \\ 
\delta_t^0 &= O(\rho_t^4) = o((t+1)^{-2}) =  o(1) . \label{eq:proof:thm:spnd-bound:delta_j}  
\end{align}
which is trivially satisfied for quadratic functions because all $h_i =0$. 

Note that:
\begin{align*}
18.5 \tau_0^2 \left(\tfrac{1}{S_i} \left(1 - \tfrac{S_i - 1}{n - 1} \right) \right) \cdot \tfrac{\bar{\beta}_2}{\alpha^2} = \Theta \left( \min \left\{ \left(\tfrac{1}{S_i} \left(1 - \tfrac{S_i - 1}{n - 1} \right) \right) \cdot  \tfrac{\bar{\beta}_2}{\beta^2 \alpha^2}, \tfrac{1}{\tfrac{1}{S_i} \left(1 - \tfrac{S_i - 1}{n - 1} \right) \cdot \bar{\beta}_2} \right\} \right) . 
\end{align*}

Applying \Cref{lem:geometric-like-series-bound}, we have:
\begin{align}
\Exp \left[ \left\|{g}_t^j -  H_t^{-1} g_t^0 \right\|_2^2 \mid \theta_t \right] =  O\left( t^{-d_i} \cdot \|g_t^0\|_2^2 \right) \label{eq:proof:thm:spnd-bound:j-th-iter:bound} , 
\end{align}
where we have assumed that $\alpha$, $\beta$, $S_i$, etc. are (data dependent) constants. 
Further, \eqref{eq:proof:thm:spnd-bound:j-th-iter:bound} implies:
\begin{align}
\Exp \left [\left\|g_t^j \right\|_2^2 \right] \leq 2 \Exp \left[ \left\|{g}_t^j -  H_t^{-1} g_t^0 \right\|_2^2 + \left\| H_t^{-1} g_t^0 \right\|_2^2 \mid \theta_t \right] \lesssim \| g_t^0 \|_2^2 \label{eq:proof:thm:spnd-bound:j-th-iter:norm:bound}, \quad \text{for all $j$.} 
\end{align}

In  \Cref{alg:stat-inf-spnd}, we have 
\begin{align*}
g_t^{j+1} -H_t^{-1} g_t^0  =  (I - \tau_j H_t )  (g_t^{j} -H_t^{-1} g_t^0) + \tau_j \left(-  e_t^j     -\tfrac{\nabla f(\theta_t + {\delta_t^j} g_t^j) - \nabla f(\theta_t) }{\delta_t^j} + H_t g_t^j \right).  
\end{align*} 
By unrolling the recursion we have:
\begin{align}
g_t^{j+1} -H_t^{-1} g_t^0 = \sum_{k=0}^j  \left(\prod_{l=k+1}^j (I - \tau_l H_t) \right) \cdot \tau_k \cdot \left( -  e_t^k     -\tfrac{\nabla f(\theta_t + {\delta_t^k} g_t^k) - \nabla f(\theta_t) }{\delta_t^k} + H_t g_t^k \right). \label{eq:proof:thm:spnd-bound:j-th-iter:recursion:unroll} 
\end{align} 

For the average $\bar{g}_t$, we have:
\begin{align}
\bar{g}_t - H_t^{-1} g_t^0 & =  \tfrac{1}{L+1} \sum_{j=0}^L ( g_t^j - H_t^{-1} g_t^0)   \nonumber \\
				        & =  \tfrac{1}{L+1}   \sum_{j=0}^L \sum_{k=0}^{j-1}  \left(\prod_{l=k+1}^{j-1} (I - \tau_l H_t) \right) \cdot \tau_k \left(-  e_t^k     -\tfrac{\nabla f(\theta_t + {\delta_t^k} g_t^k) - \nabla f(\theta_t) }{\delta_t^k} + H_t g_t^k\right) \nonumber  \\ 
& =  \tfrac{1}{L+1} \sum_{k=0}^{L-1}   \underbrace{   \tau_k \sum_{j=k+1}^L  \prod_{l=k+1}^{j-1} \left(I - \tau_l H_t \right)  }_{\mytag{[5]}{append:proof:thm:spnd-bound:lem:hessian-taylor-approx:tmp5}}  \left(-  e_t^k     -\tfrac{\nabla f(\theta_t + {\delta_t^k} g_t^k) - \nabla f(\theta_t) }{\delta_t^k} + H_t g_t^k\right) \nonumber \\
& = \underbrace{  \tfrac{1}{L+1} \sum_{k=0}^{L-1}      \tau_k \sum_{j=k+1}^L  \prod_{l=k+1}^{j-1} (I - \tau_l H_t)   \left(-  e_t^k \right) }_{\mytag{[6]}{append:proof:thm:spnd-bound:lem:hessian-taylor-approx:tmp6}}  \nonumber \\
	& \quad \quad + \underbrace{ \tfrac{1}{L+1} \sum_{k=0}^{L-1}   \tau_k \sum_{j=k+1}^L  \prod_{l=k+1}^{j-1} (I - \tau_l H_t)   \left(-\tfrac{\nabla f(\theta_t + {\delta_t^k} g_t^k) - \nabla f(\theta_t) }{\delta_t^k} + H_t g_t^k \right) }_{\mytag{[7]}{append:proof:thm:spnd-bound:lem:hessian-taylor-approx:tmp7}} \label{eq:append:proof:thm:spnd-bound:bar-g-diff:decompose}. 
\end{align}

For the term \ref{append:proof:thm:spnd-bound:lem:hessian-taylor-approx:tmp5}, we have:
\begin{align}
\left\|  \tau_k \sum_{j=k+1}^L  \prod_{l=k+1}^{j-1} (I - \tau_l H_t) \right\|_2 &\leq \tau_k  \sum_{j=k+1}^L  \prod_{l=k+1}^{j-1} \| I - \tau_l H_t \|_2 \nonumber \\
&  \blue{ I - \tau_l H_t \text{ is positive definite by our choice of  $\tau_l$ \eqref{eq:proof:thm:spnd-bound:tau_j}  and } \| I - \tau_l H_t \|_2 \leq 1 - \tau_l \alpha } \nonumber \\
&\leq \tau_k  \sum_{j=k+1}^L  \prod_{l=k+1}^{j-1} (  1 - \tau_l \alpha ) \nonumber \\ 
%
%
%
%
&\leq \tau_k  \sum_{j=k+1}^L  \prod_{l=k+1}^{j-1} \left(  1 - \frac{1}{2}\tau_l \alpha \right)^2 \nonumber \\
& \blue{ \tau_k  \prod_{l=k+1}^{j-1} (  1 - \frac{1}{2}\tau_l \alpha ) \leq \tau_k \exp( -  \frac{1}{2}  \alpha \sum_{l=k+1}^{j-1} \tau_l  ) 
	\lesssim k^{-d_i} \exp ( \Theta (- j^{1-d_i} + k^{1-d_i}) ) \lesssim j^{-d_i} \lesssim \tau_j } \nonumber \\ 
& \blue{\text{because for a fixed $d_i$ } x^{-d_i} e^{\Theta(x^{1-d_i})} \text{ is an increasing function  when $x$ is sufficiently large} }  \nonumber \\ 
&\lesssim \sum_{j=k+1}^L  \tau_j \prod_{l=k+1}^{j-1} \left(  1 - \tfrac{\tau_l \alpha}{2} \right) = \tfrac{2}{\alpha}\sum_{j=k+1}^L  \frac{1}{2}\tau_j\alpha \prod_{l=k+1}^{j-1} \left(  1 - \tfrac{\tau_l \alpha}{2} \right) \nonumber \\ 
&=  \tfrac{2}{\alpha} \left(1 - \prod_{j=k+1}^{L}\left(  1 - \tfrac{\tau_l \alpha}{2} \right) \right) =O(1) \label{eq:append:proof:thm:spnd-bound:lem:hessian-taylor-approx:tmp5:bound},  
\end{align} 
where we have assumed that $\alpha$, $\beta$, $S_i$, etc. are (data-dependent) constants. 

For the term \ref{append:proof:thm:spnd-bound:lem:hessian-taylor-approx:tmp6}, its norm  is bounded by:
\begin{align}
\Exp \left[  \left\|  \tfrac{1}{L+1} \sum_{k=0}^{L-1}      \tau_k \sum_{j=k+1}^L  \prod_{l=k+1}^{j-1} (I - \tau_l H_t)   (-  e_t^k ) \right\|_2^2  \mid \theta_t \right]  &= \tfrac{1}{(L+1)^2}  \Exp \left[  \sum_{k=0}^{L-1} \left\| \tau_k \sum_{j=k+1}^L  \prod_{l=k+1}^{j-1} (I - \tau_l H_t)   (-  e_t^k ) \right\|_2^2   \mid \theta_t \right]  \nonumber \\
& \blue{\text{ using \eqref{eq:append:proof:thm:spnd-bound:lem:hessian-taylor-approx:tmp5:bound}}} \nonumber \\
& \lesssim \tfrac{1}{(L+1)^2}  \Exp \left[ \sum_{k=0}^{L-1} \| e_t^k  \|_2^2  \mid \theta_t \right]   \nonumber \\ 
& \blue{\text{ using \eqref{eq:append:proof:thm:spnd-bound:lem:hessian-taylor-approx:tmp3:bound}  and \eqref{eq:proof:thm:spnd-bound:j-th-iter:bound}}} \nonumber \\
& \lesssim \tfrac{1}{L} \|g_t^0\|_2^2 . \label{eq:append:proof:thm:spnd-bound:lem:hessian-taylor-approx:tmp6:bound}
\end{align}
where the first equality is due to $a<b$, $\Exp[ {e_t^a}^\top e_t^b  \mid \theta_t ] = 0$, when we first condition on $b$.

For the term \ref{append:proof:thm:spnd-bound:lem:hessian-taylor-approx:tmp7}, its norm is bounded by:
\begin{align}
&\Exp \left[  \left\| \tfrac{1}{L+1} \sum_{k=0}^{L-1}   \tau_k \sum_{j=k+1}^L  \prod_{l=k+1}^{j-1} (I - \tau_l H_t)   \left(-\tfrac{\nabla f(\theta_t + {\delta_t^k} g_t^k) - \nabla f(\theta_t) }{\delta_t^k} + H_t g_t^k \right) \right\|_2^2   \mid \theta_t \right]    \nonumber \\
& \quad \quad \quad \quad =  \tfrac{1}{(L+1)^2} \Exp \Bigg[ \sum_{0 \leq a, b, \leq L- 1} 
	\Bigg\langle  \tau_a \sum_{j=a+1}^L  \prod_{l=a+1}^{j-1} (I - \tau_l H_t)   \left(-\tfrac{\nabla f(\theta_t + {\delta_t^a} g_t^a) - \nabla f(\theta_t) }{\delta_t^a} + H_t g_t^a \right) , \nonumber \\
& \quad \quad \quad \quad \quad \quad \quad \quad \quad \quad \quad \quad \tau_b \sum_{j=b+1}^L  \prod_{l=b+1}^{j-1} (I - \tau_l H_t)   \left(-\tfrac{\nabla f(\theta_t + {\delta_t^b} g_t^b) - \nabla f(\theta_t) }{\delta_t^b} + H_t g_t^b \right)
	\Bigg\rangle \mid \theta_t \Bigg]  \nonumber \\
& \quad \quad \quad \quad \leq \tfrac{1}{(L+1)^2} \Exp \Bigg[ \sum_{0 \leq a, b, \leq L- 1} 
	\left\|  \tau_a \sum_{j=a+1}^L  \prod_{l=a+1}^{j-1} (I - \tau_l H_t)   \left(-\tfrac{\nabla f(\theta_t + {\delta_t^a} g_t^a) - \nabla f(\theta_t) }{\delta_t^a} + H_t g_t^a \right) \right\|_2 \nonumber \\ 
& \quad \quad \quad \quad \quad \quad \quad \quad \quad \quad \quad \quad \cdot \left\| \tau_b \sum_{j=b+1}^L   \prod_{l=b+1}^{j-1} (I - \tau_l H_t)   \left(-\tfrac{\nabla f(\theta_t + {\delta_t^b} g_t^b) - \nabla f(\theta_t) }{\delta_t^b} + H_t g_t^b \right) \right\|_2 \mid \theta_t \Bigg] \nonumber \\ 
& \quad \quad \quad \quad  \blue{\text{using \eqref{eq:append:proof:thm:spnd-bound:lem:hessian-taylor-approx:tmp5:bound} and \Cref{append:proof:thm:spnd-bound:lem:hessian-taylor-approx} } } \nonumber \\ 
& \quad \quad \quad \quad \lesssim \tfrac{1}{(L+1)^2} \Exp\left[ \sum_{0 \leq a, b, \leq L- 1} \delta_t^a \bar{h} \|g_t^a\|_2 \delta_t^b \bar{h} \|g_t^b\|_2 \mid \theta_t  \right] \leq \tfrac{2\bar{h}^2}{(L+1)^2} \sum_{0 \leq a, b, \leq L- 1} \delta_t^a \delta_t^b \cdot  \Exp\left[ \|g_t^a\|_2^2 +  \|g_t^b\|_2^2  \mid \theta_t  \right] \nonumber \\
& \quad \quad \quad \quad  \lesssim \tfrac{\|g_t^0\|_2^2}{(L+1)^2} \sum_{0 \leq a, b, \leq L- 1} \delta_t^a \delta_t^b \lesssim \tfrac{\|g_t^0\|_2^2}{L^2} \left (\sum_{k=0}^L \delta_t^k \right)^2 \label{eq:append:proof:thm:spnd-bound:lem:hessian-taylor-approx:tmp7:bound:delta} \\ 
& \quad \quad \quad \quad  \blue{\text{using \eqref{eq:proof:thm:spnd-bound:j-th-iter:norm:bound} and our choice of $\delta_t^k$ \eqref{eq:proof:thm:spnd-bound:delta_j} } } \nonumber \\ 
& \quad \quad \quad \quad  \lesssim \tfrac{1}{L^2} {\delta_t^0}^2 \left(\sum_{k=0}^L \tau_k \right)^2 \cdot \|g_t^0\|_2^2 \lesssim \tfrac{1}{L^2} {\delta_t^0}^2 \left(\sum_{k=0}^L (k+1)^{-d_i}\right)^2 \cdot \|g_t^0\|_2^2 \nonumber \\
& \quad \quad \quad \quad  \blue{\text{because $ \left(\sum_{k=0}^L (k+1)^{-d_i} \right)^2 = O\left(L^{1-d_i} \right)$ and $d_i \in \left(\tfrac{1}{2}, 1\right)$} } \nonumber \\
& \quad \quad \quad \quad  \ll \tfrac{1}{L} \|g_t^0\|_2^2 .   \label{eq:append:proof:thm:spnd-bound:lem:hessian-taylor-approx:tmp7:bound}
\end{align}

Combining \eqref{eq:append:proof:thm:spnd-bound:lem:hessian-taylor-approx:tmp6:bound} and \eqref{eq:append:proof:thm:spnd-bound:lem:hessian-taylor-approx:tmp7:bound}, we have
\begin{align*}
\| \bar{g}_t - H_t^{-1} g_t^0 \|_2^2 = O\left(\tfrac{1}{L} \|g_t^0\|_2^2 \right). 
\end{align*}

\subsubsection{Proof of \eqref{eq:thm:spnd-bound:newton:4th-moment}}

Using \eqref{eq:append:proof:thm:spnd-bound:inner:eq-expand}, we have 
\begin{align}
& \Exp[ \| g_t^{j+1} - H_t^{-1} g_t^0 \|_2^4 \mid g_t^j] \nonumber \\  
& =  \Exp[ \|  g_t^j - H_t^{-1} g_t^0 
	- \tau_j \tfrac{\nabla f(\theta_t + {\delta_t^j} g_t^j) - \nabla f(\theta_t) }{\delta_t^j} +\tau_j g_t^0 - \tau_j e_t^j \|_2^4 \mid g_t^j] \nonumber \\ 
& = \Exp[ ( \| g_t^j - H_t^{-1} g_t^0 
	- \tau_j \tfrac{\nabla f(\theta_t + {\delta_t^j} g_t^j) - \nabla f(\theta_t) }{\delta_t^j} +\tau_j g_t^0  \|_2^2 \nonumber \\ 
	& -2 \langle \tau_j e_t^j , g_t^j - H_t^{-1} g_t^0   - \tau_j \tfrac{\nabla f(\theta_t + {\delta_t^j} g_t^j) - \nabla f(\theta_t) }{\delta_t^j} +\tau_j g_t^0 \rangle  + \tau_j^2 \|e_t^j\|_2^2 )^2 \mid g_t^j] \nonumber \\
& = \Exp[   \| g_t^j - H_t^{-1} g_t^0 
	- \tau_j \tfrac{\nabla f(\theta_t + {\delta_t^j} g_t^j) - \nabla f(\theta_t) }{\delta_t^j} +\tau_j g_t^0  \|_2^4 \nonumber \\
		& + 4( \langle \tau_j e_t^j , g_t^j - H_t^{-1} g_t^0   - \tau_j \tfrac{\nabla f(\theta_t + {\delta_t^j} g_t^j) - \nabla f(\theta_t) }{\delta_t^j} +\tau_j g_t^0 \rangle)^2 + \tau_j^4 \|e_t^j\|_2^4 \nonumber \\
		& + 2 \| g_t^j - H_t^{-1} g_t^0 
	- \tau_j \tfrac{\nabla f(\theta_t + {\delta_t^j} g_t^j) - \nabla f(\theta_t) }{\delta_t^j} +\tau_j g_t^0  \|_2^2 \tau_j^2 \|e_t^j\|_2^2 \nonumber \\
	& -4  \langle \tau_j e_t^j , g_t^j - H_t^{-1} g_t^0   - \tau_j \tfrac{\nabla f(\theta_t + {\delta_t^j} g_t^j) - \nabla f(\theta_t) }{\delta_t^j} +\tau_j g_t^0 \rangle \| g_t^j - H_t^{-1} g_t^0 
	- \tau_j \tfrac{\nabla f(\theta_t + {\delta_t^j} g_t^j) - \nabla f(\theta_t) }{\delta_t^j} +\tau_j g_t^0  \|_2^2 \nonumber \\ 
	& - 4 \langle \tau_j e_t^j , g_t^j - H_t^{-1} g_t^0   - \tau_j \tfrac{\nabla f(\theta_t + {\delta_t^j} g_t^j) - \nabla f(\theta_t) }{\delta_t^j} +\tau_j g_t^0 \rangle  \tau_j^2 \|e_t^j\|_2^2
	\mid g_t^j  ]  \label{eq:thm:spnd-bound:newton:4th-moment:append:proof:expand-tmp1}  .   
\end{align}

Because we have  
\begin{align*}
\Exp[e_t^j \mid g_t^j  ]  =  0,
\end{align*}
\begin{align*}
& \| g_t^j - H_t^{-1} g_t^0 
	- \tau_j \tfrac{\nabla f(\theta_t + {\delta_t^j} g_t^j) - \nabla f(\theta_t) }{\delta_t^j} +\tau_j g_t^0  \|_2^4 \nonumber \\
& = \| (I- \tau_j H_t)( g_t^j - H_t^{-1} g_t^0 )
	+ \tau_j (-\tfrac{\nabla f(\theta_t + {\delta_t^j} g_t^j) - \nabla f(\theta_t) }{\delta_t^j} + H_t g_t^j)  \|_2^4 \nonumber \\
& = (\|(I- \tau_j H_t)( g_t^j - H_t^{-1} g_t^0 )\|_2^2 \nonumber \\
	& + 2 \tau_j \langle (I- \tau_j H_t)( g_t^j - H_t^{-1} g_t^0 ),  -\tfrac{\nabla f(\theta_t + {\delta_t^j} g_t^j) - \nabla f(\theta_t) }{\delta_t^j} + H_t g_t^j \rangle \nonumber \\ 
	& + \tau_j^2 \|-\tfrac{\nabla f(\theta_t + {\delta_t^j} g_t^j) - \nabla f(\theta_t) }{\delta_t^j} + H_t g_t^j\|_2^2 )^2 \nonumber \\ 
& \blue{\text{using \Cref{append:proof:thm:spnd-bound:lem:hessian-taylor-approx}}} \nonumber \\
\leq & (\|(I- \tau_j H_t)( g_t^j - H_t^{-1} g_t^0 )\|_2^2  + 2 \tau_j \|I- \tau_j H_t\|_2 \| g_t^j - H_t^{-1} g_t^0 \|_2  \delta_t^j \| g_t^j\|_2   + \tau_j^2 {\delta_t^j}^2 \| g_t^j\|_2 ^2 )^2 \nonumber \\ 
& =\|(I- \tau_j H_t)( g_t^j - H_t^{-1} g_t^0 )\|_2^4  \nonumber \\ 
	& + 2 \tau_j \|(I- \tau_j H_t)( g_t^j - H_t^{-1} g_t^0 )\|_2^2 (2 \delta_t^j   \|I- \tau_j H_t\|_2 \| g_t^j - H_t^{-1} g_t^0 \|_2  \| g_t^j\|_2 + \tau_j {\delta_t^j}^2 \| g_t^j\|_2 ^2) \nonumber \\ 
	&	+ \tau_j^2(2 \delta_t^j   \|I- \tau_j H_t\|_2 \| g_t^j - H_t^{-1} g_t^0 \|_2  \| g_t^j\|_2 + \tau_j {\delta_t^j}^2 \| g_t^j\|_2 ^2)^2 \nonumber \\
& \blue{\text{by our choice of $\tau_j =\Theta((j+1)^{-d_i})=o(1)$ \eqref{eq:proof:thm:spnd-bound:tau_j} }} \nonumber \\ %
	& \blue{\text{and using $\|g_t^j\|_2 \leq \|g_t^j - H_t^{-1} g_t^0 \|_2 +\|H_t^{-1} g_t^0\|_2 \lesssim \|g_t^j - H_t^{-1} g_t^0 \|_2 +^4 \| g_t^0\|_2$ }} \nonumber \\
& = (1- \Theta(\tau_j))\| g_t^j - H_t^{-1} g_t^0 \|_2^4  \nonumber \\
	& + O(\tau_j \delta_t^j ( \|g_t^j - H_t^{-1} g_t^0\|_2^4+ \|g_t^j - H_t^{-1} g_t^0\|_2^3 \| g_t^0 \|_2) + 2\tau_j^2{\delta_t^j}^3(\|g_t^j - H_t^{-1} g_t^0\|_2^4 +  \|g_t^j - H_t^{-1} g_t^0\|_2^2 \| g_t^0 \|_2^2) \nonumber \\
	& \quad + \tau_j^2{\delta_t^j}^2 (\| g_t^j - H_t^{-1} g_t^0 \|_2^4+\| g_t^j - H_t^{-1} g_t^0 \|_2 ^2 \| g_t^0\|_2^2 + \tau_j \delta_t^j (\| g_t^j - H_t^{-1} g_t^0 \|_2^4+ \| g_t^0\|_2^4) ) ) , 
\end{align*}
\begin{align*}
&  \Exp[  \|e_t^j\|_2^4 \mid g_t^j  ] \nonumber\\ 
 =& \Exp[  \| \left( \frac{1}{S_i} \frac{1}{{\delta_t^j}} \sum_{k \in I_i}    (\nabla f_k(\theta_t +  {\delta_t^j} g_t^j)  - \nabla f_k(\theta_t) )\right) - 
	\tfrac{\nabla f(\theta_t + {\delta_t^j} g_t^j) - \nabla f(\theta_t) }{\delta_t^j} \|_2^4 \mid g_t^j  ]\nonumber\\ 
 =& \Exp[  \| \left( \frac{1}{S_i} \frac{1}{{\delta_t^j}} \sum_{k \in I_i}    ((\nabla f_k(\theta_t +  {\delta_t^j} g_t^j)  - \nabla f_k(\theta_t)) - H_t^k g_t^j + H_t^k g_t^j )\right) \nonumber \\
	&- 
	( \frac{1}{{\delta_t^j}} (\nabla f(\theta_t + {\delta_t^j} g_t^j) - \nabla f(\theta_t)) - H_t g_t^j + H_t g_t^j ) \|_2^4 \mid g_t^j  ]\nonumber\\
& \blue{\text{using \Cref{append:proof:thm:spnd-bound:lem:hessian-taylor-approx} and repeatedly applying the AM-GM inequality}} \nonumber \\
\lesssim & (1+{\delta_t^j}^4) \|g_t^j\|_2^4 \nonumber \\ 
\lesssim & (1+{\delta_t^j}^4) {\delta_t^j}^4  (\| g_t^j - H_t^{-1} g_t^0 \|_2^4+ \| g_t^0\|_2^4) , 
\end{align*}
and by our choice of $\tau_j =\Theta((j+1)^{-d_i})=o(1)$ \eqref{eq:proof:thm:spnd-bound:tau_j} and $\delta_t^j =O(\tau_j^4)$ \eqref{eq:proof:thm:spnd-bound:delta_j}, 
after repeatedly applying the AM-GM inequality, \Cref{append:proof:thm:spnd-bound:lem:hessian-taylor-approx}, triangle inequality, and \eqref{eq:proof:thm:spnd-bound:j-th-iter:bound}, 
we can bound \eqref{eq:thm:spnd-bound:newton:4th-moment:append:proof:expand-tmp1}  by 
\begin{align}
& \Exp[ \| g_t^{j+1} - H_t^{-1} g_t^0 \|_2^4 \mid g_t^j] \nonumber \\  
\leq &  (1- \Theta(\tau_j))\| g_t^j - H_t^{-1} g_t^0 \|_2^4 + O(\tau_j^3 \|g_t^0 \|_2^4)
	) .
\end{align}
Applying \Cref{lem:geometric-like-series-bound}, we have 
\begin{align}
\Exp[ \| g_t^{j+1} - H_t^{-1} g_t^0 \|_2^4 \mid \theta_t] = O((j+1)^{-2d_i}\|g_t^0 \|_2^4) ,  \label{eq:proof:thm:spnd-bound:j-th-iter:bound:4th-moment}
\end{align}
and using the AM-GM in equality we have 
\begin{align}
\Exp[ \| g_t^{j+1}  \|_2^4 \mid \theta_t] = O(\|g_t^0 \|_2^4) .  \label{eq:proof:thm:spnd-bound:j-th-iter:bound:norm:4th-moment}
\end{align}


\subsubsection{Proof of \eqref{eq:thm:spnd-bound:outer-iter}}

To prove bounds on $\|\theta_t - \htheta\|_2^2$, we will use the following lemma
\begin{lemma}
\label{append:proof:thm:spnd-bound:lem:g-t-L_dot_grad-lower-bound}
\begin{align*}
\Exp[ \langle \nabla  f(\theta_t) , -g_t^L  \rangle  \mid \theta_t ] \gtrsim &
	 \rho_t   \|\nabla f(\theta_t)\|_2^2 - \delta_t^0 \|\nabla f(\theta_t)\|_2 \|g_t^0\|_2 \nonumber \\
	 \gtrsim & \rho_t   \|\nabla f(\theta_t)\|_2^2  -  {\delta_t^0}^2\|g_t^0\|_2^2 . 
\end{align*}
\end{lemma}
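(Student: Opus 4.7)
The plan is to compare $-g_t^L$ with the ideal stochastic Newton direction $-H_t^{-1} g_t^0$ and show that the difference is small in conditional expectation. I would write
\[
-g_t^L \;=\; -H_t^{-1} g_t^0 \;+\; \bigl(H_t^{-1} g_t^0 - g_t^L\bigr).
\]
For the ``ideal'' piece, uniform-with-replacement sampling of $I_o$ gives $\Exp[g_t^0 \mid \theta_t] = -\rho_t\, \nabla f(\theta_t)$, so
\[
\Exp\!\bigl[\langle \nabla f(\theta_t),\, -H_t^{-1} g_t^0\rangle \,\big|\, \theta_t\bigr]
\;=\; \rho_t\, \nabla f(\theta_t)^\top H_t^{-1} \nabla f(\theta_t)
\;\geq\; \tfrac{\rho_t}{\beta}\,\|\nabla f(\theta_t)\|_2^2,
\]
using $H_t = \nabla^2 f(\theta_t) \preceq \beta I$ and hence $H_t^{-1} \succeq \beta^{-1} I$. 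This yields the main $\rho_t \|\nabla f(\theta_t)\|_2^2$ contribution.

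The substantive work is to bound the residual $R_t := \Exp[g_t^L - H_t^{-1} g_t^0 \mid \theta_t]$ in norm. I would reuse the unrolled recursion \eqref{eq:proof:thm:spnd-bound:j-th-iter:recursion:unroll} already established in the proof of \eqref{eq:thm:spnd-bound:newton}: with $r_t^k := \frac{\nabla f(\theta_t + \delta_t^k g_t^k) - \nabla f(\theta_t)}{\delta_t^k} - H_t g_t^k$ the finite-difference bias and $e_t^k$ the inner-loop mini-batch noise,
\[
g_t^L - H_t^{-1} g_t^0 \;=\; -\sum_{k=0}^{L-1}\Bigl(\prod_{l=k+1}^{L-1}(I-\tau_l H_t)\Bigr)\tau_k\bigl(e_t^k + r_t^k\bigr).
\]
The crucial point is that $\Exp[e_t^k \mid g_t^k, \theta_t] = 0$, so by a tower argument all $e_t^k$ contributions vanish in $R_t$, leaving only the deterministic finite-difference bias. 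Then combining (i) the operator-norm bound $\bigl\|\tau_k \sum_{j=k+1}^L \prod_{l=k+1}^{j-1}(I-\tau_l H_t)\bigr\|_2 = O(1)$ from \eqref{eq:append:proof:thm:spnd-bound:lem:hessian-taylor-approx:tmp5:bound}, (ii) the finite-difference estimate $\|r_t^k\|_2 \leq \bar{h}\,\delta_t^k\,\|g_t^k\|_2$ from \Cref{append:proof:thm:spnd-bound:lem:hessian-taylor-approx}, (iii) Jensen applied to the moment bound \eqref{eq:proof:thm:spnd-bound:j-th-iter:norm:bound} giving $\Exp[\|g_t^k\|_2 \mid \theta_t] \lesssim \|g_t^0\|_2$, and (iv) the decay $\delta_t^k = \delta_0 \rho_t^4 \tau_k^4 \lesssim \delta_t^0 (k{+}1)^{-4 d_i}$ with $d_i \in (\tfrac12,1)$ so $\sum_k \delta_t^k = O(\delta_t^0)$, I obtain $\|R_t\|_2 \lesssim \delta_t^0\,\|g_t^0\|_2$.

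Cauchy--Schwarz then gives $\bigl|\Exp[\langle \nabla f(\theta_t), H_t^{-1} g_t^0 - g_t^L\rangle \mid \theta_t]\bigr| \leq \|\nabla f(\theta_t)\|_2\,\|R_t\|_2 \lesssim \delta_t^0\,\|\nabla f(\theta_t)\|_2\,\|g_t^0\|_2$, which combined with the ideal-Newton lower bound proves the first inequality. The second inequality follows by AM--GM: $\delta_t^0\,\|\nabla f(\theta_t)\|_2\,\|g_t^0\|_2 \leq \tfrac{c}{2}\|\nabla f(\theta_t)\|_2^2 + \tfrac{1}{2c}(\delta_t^0)^2 \|g_t^0\|_2^2$, and choosing $c$ comparable to $\rho_t$ allows the first AM--GM term to be absorbed (up to a constant hidden in $\gtrsim$) into the leading $\rho_t\|\nabla f(\theta_t)\|_2^2$, leaving $(\delta_t^0)^2 \|g_t^0\|_2^2$ as the residual term. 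The main obstacle is the separation of zero-mean noise from deterministic bias in $R_t$ and then the bookkeeping of the geometric decay factors --- verifying that the specific choice $\delta_t^j = o((t{+}1)^{-2}(j{+}1)^{-2})$ from \eqref{eq:sec:unregularized-m-est:gradient:finiite-diference:delta} is small enough that the telescoping sum collapses to a single $\delta_t^0$ in front of $\|g_t^0\|_2$.
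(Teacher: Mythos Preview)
Your proposal is correct and follows essentially the same route as the paper: decompose $-g_t^L$ into the ideal direction $-H_t^{-1} g_t^0$ plus the unrolled inner-loop residual \eqref{eq:proof:thm:spnd-bound:j-th-iter:recursion:unroll}, use $\Exp[e_t^k\mid\theta_t]=0$ so only the finite-difference bias $r_t^k$ survives, bound $\|r_t^k\|_2 \leq \bar h\,\delta_t^k\,\|g_t^k\|_2$ via \Cref{append:proof:thm:spnd-bound:lem:hessian-taylor-approx} together with \eqref{eq:proof:thm:spnd-bound:j-th-iter:norm:bound}, sum using the decay of $\delta_t^k$, and finish with AM--GM. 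One cosmetic slip: for $g_t^L$ (as opposed to $\bar g_t$) the matrix coefficient multiplying $e_t^k+r_t^k$ is simply $\tau_k\prod_{l=k+1}^{L-1}(I-\tau_l H_t)$ with no outer sum over $j$, so \eqref{eq:append:proof:thm:spnd-bound:lem:hessian-taylor-approx:tmp5:bound} is not the relevant estimate --- the paper just uses the trivial contraction bound $\bigl\|\prod_{l}(I-\tau_l H_t)\bigr\|_2\leq 1$, which is all you need here.
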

\begin{proof}

Using \eqref{eq:proof:thm:spnd-bound:j-th-iter:recursion:unroll}, and because $\Exp[e_t^j \mid \theta_t = 0]$, we have 
\begin{align*}
&\Exp[ \langle \nabla  f(\theta_t) , -g_t^L  \rangle  \mid \theta_t ] \nonumber \\
& = \rho_t \nabla f(\theta_t)^\top H_t^{-1}  \nabla f(\theta_t)
	- \Exp \left[ \left\langle \left. \nabla  f(\theta_t) , \sum_{k=0}^{L-1}  (\prod_{l=k+1}^{L-1} (I - \tau_l H_t) ) \tau_k (   \tfrac{\nabla f(\theta_t + {\delta_t^k} g_t^k) - \nabla f(\theta_t) }{\delta_t^k} - H_t g_t^k ) \right\rangle \,\right|\, \theta_t \right]  \nonumber \\
& \blue{\text{using strong convexity and \Cref{append:proof:thm:spnd-bound:lem:hessian-taylor-approx}}} \nonumber \\ 
& \geq \frac{1}{\beta} \rho_t   \|\nabla f(\theta_t)\|_2^2 
	- \|\nabla f(\theta_t)\|_2 \underbrace{ \Exp \left[  \sum_{k=0}^{L-1}  \left. \prod_{l=k+1}^{L-1} \|I - \tau_l H_t \|_2 \tau_k \delta_t^k \| g_t^k \|_2  \, \right| \, \theta_t \right]  }_{\mytag{[8]}{append:proof:thm:spnd-bound:lem:hessian-taylor-approx:tmp8}} .
\end{align*}

By our choice of  $\tau_j =\Theta((j+1)^{-d_i})=o(1)$ \eqref{eq:proof:thm:spnd-bound:tau_j} and $\delta_t^j =O(\delta_t^0 \tau_j^4)$ \eqref{eq:proof:thm:spnd-bound:delta_j}, 
and using \eqref{eq:proof:thm:spnd-bound:j-th-iter:norm:bound}, 
term \ref{append:proof:thm:spnd-bound:lem:hessian-taylor-approx:tmp8} is bounded by 
\begin{align*}
& \Exp \left[  \sum_{k=0}^{L-1}  \prod_{l=k+1}^{L-1} \|I - \tau_l H_t \|_2 \tau_k \delta_t^k \| g_t^k \|_2  \mid \theta_t \right] \nonumber \\
\lesssim &   \sum_{k=0}^{L-1} \tau_k \delta_t^k  \nonumber \\
\lesssim & \|g_t^0\|_2 \delta_t^0 \underbrace{\sum_{k=0}^{L-1} \tau_k^5}_{\blue{=O(1)}}. 
\end{align*}

And we can conclude 
\begin{align*}
& \Exp[ \langle \nabla  f(\theta_t) , -g_t^L  \rangle  \mid \theta_t ] \nonumber \\
& \geq
	 C_1 \rho_t   \|\nabla f(\theta_t)\|_2^2 - C_2 \delta_t^0 \|\nabla f(\theta_t)\|_2 \|g_t^0\|_2 \nonumber \\
& = C_1 \rho_t   \|\nabla f(\theta_t)\|_2^2 
	 - \frac{C_1}{2} {\delta_t^0}^2 \left[ 2 \frac{\|\nabla f(\theta_t)\|_2 }{\delta_t^0}  \frac{ C_2}{C_1} \|g_t^0\|_2 \right] \nonumber \\
& \geq  C_1 \rho_t   \|\nabla f(\theta_t)\|_2^2  - \frac{C_1}{2} {\delta_t^0}^2 \left(\left(\frac{\|\nabla f(\theta_t)\|_2 }{\delta_t^0})^2 + (\frac{ C_2}{C_1} \|g_t^0\|_2 \right)^2\right) \nonumber \\ 
& = \frac{C_1}{2} \rho_t   \|\nabla f(\theta_t)\|_2^2  - \frac{ C_2^2}{2C_1} {\delta_t^0}^2 \|g_t^0\|_2^2   , 
\end{align*} 
for some (data dependent) positive constants $C_1, C_2$.

\end{proof}

Now, we continue our proof of \eqref{eq:thm:spnd-bound:outer-iter}. 

In \Cref{alg:stat-inf-spnd}, because $f$ is $\beta$ smooth, we have
\begin{align}
&\Exp[ f(\theta_{t+1}) - f(\htheta) \mid \theta_t] \nonumber \\
 & = \Exp[  f(\theta_t + g_t^L) - f(\htheta) \mid \theta_t] \nonumber \\
\leq & f(\theta_t) - f(\htheta) + \Exp\left[ \left\langle \nabla f(\theta_t) , g_t^L \right\rangle + \frac{\beta}{2} \| g_t^L\|_2^2 \mid \theta_t\right] \nonumber \\ 
& \blue{\text{using \Cref{append:proof:thm:spnd-bound:lem:g-t-L_dot_grad-lower-bound} and \eqref{eq:proof:thm:spnd-bound:j-th-iter:norm:bound}}} \nonumber \\
\leq & f(\theta_t) - f(\htheta) - \Omega( \rho_t \|\nabla f(\theta_t)\|_2^2 ) + \Exp[ O( \|g_t^0\|_2^2  + \delta_t^0 \|g_t^0\|_2 \|\nabla f(\theta_t)\|_2 ) \mid \theta_t] \label{eq:proof:thm:spnd-bound:outer-f-val-recursion:tmp1} . 
\end{align}

For $g_t^0$, we have 
\begin{align}
\frac{g_t^0}{\rho_t} & = \frac{1}{S_o} \sum_{i \in I_o} \nabla f_i(\theta_t) \nonumber \\
& =  \frac{1}{S_o} \sum_{i \in I_o} \nabla f_i(\htheta) + \frac{1}{S_o}  \sum_{i \in I_o} (\nabla f_i(\theta_t) - \nabla f_i(\htheta)) \label{eq:proof:thm:spnd-bound:sgd:value-decompose} ,  
\end{align}
which implies that 
\begin{align}
& \Exp\left[\left\| \frac{g_t^0}{\rho_t} \right\|_2^2 \mid \theta_t\right] \nonumber \\
\leq & 2 \Exp\left[ \left\| \frac{1}{S_o} \sum_{i \in I_o} \nabla f_i(\htheta) \|_2^2 \mid \theta_t  ] + 2 \Exp[ \| \frac{1}{S_o}  \sum_{i \in I_o} (\nabla f_i(\theta_t) - \nabla f_i(\htheta)) \right\|_2^2  \mid \theta_t  \right] \nonumber \\
& \blue{\text{because we sample uniformly with replacement and } \nabla f(\htheta) = 0} \nonumber \\ 
\leq &  \frac{2}{S_o}  \sum_{i=1}^n  \|\nabla f_i(\htheta)\|_2^2 + \Exp[ \|\nabla f_i(\theta_t) - \nabla f_i(\htheta)\|_2^2 \mid \theta_t  ] \nonumber \\
\leq & \frac{2}{S_o}  \sum_{i=1}^n  \|\nabla f_i(\htheta)\|_2^2 + \| \theta_t  -   \htheta \|_2^2 \Exp[  \beta_i^2  \mid \theta_t  ] \nonumber \\
\lesssim & 1 + \| \theta_t  -   \htheta \|_2^2 . \label{eq:proof:thm:spnd-bound:sgd:norm-decompose}  
\end{align}

Thus, continuing \eqref{eq:proof:thm:spnd-bound:outer-f-val-recursion:tmp1}, 
using \eqref{eq:proof:thm:spnd-bound:sgd:norm-decompose} and  strong convexity $\alpha^2 \|\theta_t -\htheta\|_2^2 \leq  \|\nabla f(\theta_t)\|_2^2$, 
we have 
\begin{align}
&\Exp[ f(\theta_{t+1}) - f(\htheta) \mid \theta_t] \nonumber \\
\leq & f(\theta_t) - f(\htheta) - C_1 \rho_t \|\nabla f(\theta_t)\|_2^2  
	 +  C_2 \rho_t\delta_t^0 (1+ \|\nabla f(\theta_t)\|_2) \|\nabla f(\theta_t)\|_2 
	 + C_3 \rho_t^2 (1+ \| \nabla f(\theta_t)\|_2^2 ) \nonumber \\
& = f(\theta_t) - f(\htheta) 
	 - \rho_t (C_1 - C_2  \delta_t^0 - C_3 \rho_t) \|\nabla f(\theta_t)\|_2^2 
	 + C_3 \rho_t^2   
	 +  C_2 \rho_t \delta_t^0  \|\nabla f(\theta_t)\|_2 \nonumber \\ 
  & \blue{ \text{because we have }} 
  	\blue{C_2 \rho_t \delta_t^0  \|\nabla f(\theta_t)\|_2 }  
  		\blue{ = \frac{1}{2} C_1 \rho_t {\delta_t^0}^2 2 \frac{\frac{C_2}{C_1} \|\nabla f(\theta_t)\|_2}{\delta_t^0}}  
  			\blue{ \leq \frac{1}{2} C_1 \rho_t {\delta_t^0}^2 \left(\left(\frac{C_2}{C_1})^2+ (\frac{ \|\nabla f(\theta_t)\|_2}{\delta_t^0}\right)^2 \right) } \nonumber \\ 
\leq & f(\theta_t) - f(\htheta)
	 - \rho_t (\tfrac{1}{2} C_1 - C_2 \delta_t^0 - C_3 \rho_t) \|\nabla f(\theta_t)\|_2^2  
	+ C_3  \rho_t^2    
	+ \tfrac{C_2^2 }{C_1} \rho_t {\delta_t^0}^2 \nonumber \\
& \blue{\text{using strong convexity } \frac{1}{2\alpha} \|\nabla f(\theta_t)\|_2^2 \geq f(\theta_t) - f(\htheta) \text{ and smoothness } \frac{1}{2\beta} \|\nabla f(\theta_t)\|_2^2 \leq f(\theta_t) - f(\htheta) } \nonumber \\
\leq & [f(\theta_t) - f(\htheta)] 
	  - \rho_t (\tfrac{1}{2} C_1 - C_2 \delta_t^0 - C_3 \rho_t) \tfrac{1}{2 \alpha} [f(\theta_t) - f(\htheta)] 
	 + C_3  \rho_t^2     
	 + \tfrac{C_2^2 }{C_1} \rho_t {\delta_t^0}^2  \nonumber \\ 
& \blue{ \text{when we set } \delta_t^0 = O(\rho_t) \text{ in \eqref{eq:proof:thm:spnd-bound:delta_j}} }  \nonumber \\
\leq &
	[f(\theta_t) - f(\htheta)] 
	 - \rho_t (\tfrac{1}{2} C_1 - C_2 \delta_t^0 - C_3 \rho_t) \tfrac{1}{2 \alpha} [f(\theta_t) - f(\htheta)] 
	+ (C_3 +O(1))  \rho_t^2 
 \label{eq:spnd-bound:sgd:outer:recursion:f} , 
\end{align}
for some (data dependent) positive constants $C_1, C_2, C_3$.

In \eqref{eq:spnd-bound:sgd:outer:recursion:f} we choose $\rho_t = \Theta((t+1)^{-d_o})$ for some $d_o \in (\frac{1}{2}, 1)$, and after applying \Cref{lem:geometric-like-series-bound} we have 
\begin{align}
& \Exp[ \|\theta_t - \htheta\|_2^2 ] \nonumber \\
\leq & \Exp[ \tfrac{2}{\alpha} (f(\theta_t) - f(\htheta))] \nonumber \\
\lesssim & t^{-d_o} + e^{-\Theta(t^{1-d_o})} \|\theta_0 - \htheta\|_2^2 , \label{eq:thm:spnd-bound:outer-iter:2nd:f-bound}
\end{align}
which is $O(t^{-d_o})$ when $\|\theta_0 - \htheta\|_2=O(1)$.

\subsubsection{Proof of \eqref{eq:thm:spnd-bound:outer-iter:4th-moment}}

In \Cref{alg:stat-inf-spnd}, because $f$ is $\beta$ smooth, 
and $\forall \theta$ $f(\theta) - f(\htheta) \geq 0 $, we have
\begin{align*}
& (f(\theta_{t+1}) - f(\htheta))^2  \nonumber \\
 & =  ( f(\theta_t + g_t^L) - f(\htheta)  )^2 \nonumber \\
\leq & ( f(\theta_t) - f(\htheta) +  \langle \nabla f(\theta_t) , g_t^L \rangle + \tfrac{\beta}{2} \| g_t^L\|_2^2 )^2 \nonumber \\
& = ( f(\theta_t) - f(\htheta) )^2 +  2 \langle \nabla f(\theta_t) , g_t^L \rangle( f(\theta_t) - f(\htheta) ) \nonumber \\
	 & + \langle \nabla f(\theta_t) , g_t^L \rangle^2 + \tfrac{\beta^2}{4} \| g_t^L\|_2^4 
	 + 2 ( f(\theta_t) - f(\htheta) + \langle \nabla f(\theta_t) , g_t^L \rangle)  \tfrac{\beta}{2} \| g_t^L\|_2^2 . 
\end{align*}

Because we have 
\begin{align*}
& \Exp[  \langle \nabla f(\theta_t) , g_t^L \rangle( f(\theta_t) - f(\htheta) )  \mid \theta_t]   \nonumber \\
\lesssim & - \rho_t \| \nabla f(\theta_t)\|_2^2 ( f(\theta_t) - f(\htheta) )  + \delta_t^0 \|g_t^0\|_2^2 ( f(\theta_t) - f(\htheta) ) , 
\end{align*}
\begin{align*}
& \Exp\left[ \left\| \tfrac{g_t^0}{\rho_t} \right\|_2^4  \mid \theta_t\right] \nonumber \\
& =  \Exp \left[ \left\| \tfrac{1}{S_o} \sum_{i \in I_o}  ( \nabla f_i(\theta_t) - \nabla f_i(\htheta) + \nabla f_i(\htheta) ) \right\|_2^4  \mid \theta_t \right] \nonumber \\ 
\lesssim & 1 + \|\theta_t - \htheta\|_2^4 , 
\end{align*}
\begin{align*}
f(\theta_t) - f(\htheta) = \Theta( \|\theta_t - \htheta\|_2^2) = \Theta(\|\nabla f(\theta_t) \|_2^2)  ,  
\end{align*}
and by our choice of $\rho_t = \Theta((t+1)^{-d_o}) = o(1)$  and $\delta_t^0 = O(\rho_t^4)$ \eqref{eq:proof:thm:spnd-bound:delta_j}, after repeatedly applying the AM-GM inequality and \eqref{eq:thm:spnd-bound:outer-iter:2nd:f-bound}, 
we have 
\begin{align*}
&\Exp[ (f(\theta_{t+1}) - f(\htheta))^2  \mid \theta_t ] \nonumber \\
\leq &  (1- \Theta(\rho_t)) (f(\theta_t) - f(\htheta))^2 +O(\rho_t^3) . 
\end{align*}
Applying \Cref{lem:geometric-like-series-bound}, we have 
\begin{align}
& \Exp[ \|\theta_t - \htheta\|_2^4 ] \nonumber \\
\leq & \Exp\left[ \tfrac{4}{\alpha^2} (f(\theta_t) - f(\htheta))^2 \right] \nonumber \\
\lesssim & t^{-2d_o}  . \label{eq:thm:spnd-bound:outer-iter:4th:f-bound}
\end{align}

\subsubsection{Proof of  \eqref{eq:thm:spnd-bound:covariance-bound}}
\label{subsec:proof:thm:spnd-bound:eq:thm:spnd-bound:covariance-bound}

For $\frac{\bar{g}_t}{\rho_t}$, we have 
\begin{align}
\frac{\bar{g}_t}{\rho_t} &= \underbrace{ -H^{-1}\frac{1}{S_o} \sum_{i \in I_o} \nabla f_i(\htheta) }_{\mytag{[1]}{append:proof:thm:spnd-bound:cov:tmp1}} \nonumber \\
	& +  \underbrace{ H^{-1}\frac{1}{S_o} \sum_{i \in I_o} \nabla f_i(\htheta) -H_t^{-1} \frac{1}{S_o} \sum_{i \in I_o} \nabla f_i(\htheta) +H_t^{-1} \frac{1}{S_o} \sum_{i \in I_o} \nabla f_i(\htheta) -H_t^{-1} \frac{1}{S_o} \sum_{i \in I_o} \nabla f_i(\theta_t)   }_{\mytag{[2]}{append:proof:thm:spnd-bound:cov:tmp2}}  \nonumber \\   
	&\underbrace{ - H_t^{-1} \frac{g_t^0}{\rho_t}
	+ \frac{\bar{g}_t}{\rho_t} }_{\mytag{[3]}{append:proof:thm:spnd-bound:cov:tmp3}}  .  \label{eq:thm:spnd-bound:outer-sgd:decompose:terms}
\end{align}

Thus, for the ``covariance'' of our replicates, we have
\begin{align*}
&    \left\| H^{-1} G H^{-1}  - \frac{S_o}{T} \sum_{t=1}^T \frac{\bar{g}_t \bar{g}_t^\top}{\rho_t^2}  
		 \right\|_2  \nonumber \\
\lesssim  & \left\| H^{-1} G H^{-1}  -\frac{S_o}{T} \sum_{t=1}^T    \ref{append:proof:thm:spnd-bound:cov:tmp1}_t  \ref{append:proof:thm:spnd-bound:cov:tmp1}_t^\top 
		 \right\|_2  \nonumber \\ 
	& +  \left\| \frac{S_o}{T} \sum_{t=1}^T \ref{append:proof:thm:spnd-bound:cov:tmp1}_t ( \ref{append:proof:thm:spnd-bound:cov:tmp2}_t + \ref{append:proof:thm:spnd-bound:cov:tmp3}_t )^\top \right\|_2 +   \left\|\frac{S_o}{T}\sum_{t=1}^T (\ref{append:proof:thm:spnd-bound:cov:tmp2}_t + \ref{append:proof:thm:spnd-bound:cov:tmp3}_t )  \ref{append:proof:thm:spnd-bound:cov:tmp1}_t^\top\right\|_2 \nonumber \\ 
	& +  \left\|\frac{S_o}{T}\sum_{t=1}^T (\ref{append:proof:thm:spnd-bound:cov:tmp2}_t + \ref{append:proof:thm:spnd-bound:cov:tmp3}_t ) ( \ref{append:proof:thm:spnd-bound:cov:tmp2}_t + \ref{append:proof:thm:spnd-bound:cov:tmp3}_t  )^\top \right\|_2 \nonumber \\ 
& \blue{\text{because for two vectors $a, b$ the operator norm $\|a b^\top\|_2 \leq \|a\|_2\|b\|_2 $}} \nonumber \\ 
\lesssim & \left\|  H^{-1} G H^{-1}  -\frac{S_o}{T} \sum_{t=1}^T    \ref{append:proof:thm:spnd-bound:cov:tmp1}_t  \ref{append:proof:thm:spnd-bound:cov:tmp1}_t^\top \right\|_2 \nonumber \\
	& + \frac{1}{T} \sum_{t=1}^T \|\ref{append:proof:thm:spnd-bound:cov:tmp1}_t\|_2( \|\ref{append:proof:thm:spnd-bound:cov:tmp2}_t\|_2+\|\ref{append:proof:thm:spnd-bound:cov:tmp3}_t\|_2 ) \nonumber \\ 
	& + \frac{1}{T}\sum_{t=1}^T( \|\ref{append:proof:thm:spnd-bound:cov:tmp2}_t\|_2^2+\| \ref{append:proof:thm:spnd-bound:cov:tmp3}_t\|_2^2) . 
\end{align*}

Because $\sum_{t=1}^T    \ref{append:proof:thm:spnd-bound:cov:tmp1}_t $ consists of $S_o \cdot T$ i.i.d. samples from $\{H^{-1} \nabla f_i(htheta) \}_{i=1}^n$ and the mean $H^{-1} \nabla f(\htheta) = 0$, using matrix concentration \cite{tropp2015introduction}, we know that 
\begin{align*}
\Exp\left[ \left\| H^{-1} G H^{-1}  -\frac{S_o}{T} \sum_{t=1}^T    \ref{append:proof:thm:spnd-bound:cov:tmp1}_t  \ref{append:proof:thm:spnd-bound:cov:tmp1}_t^\top 
		 \right\|_2 \right] \lesssim \frac{1}{\sqrt{T}}. 
\end{align*}

For term \ref{append:proof:thm:spnd-bound:cov:tmp3}, using \eqref{eq:append:proof:thm:spnd-bound:bar-g-diff:decompose}, 
because we have 
\begin{align*}
&\sum_{t=1}^T \ref{append:proof:thm:spnd-bound:cov:tmp3}_t \nonumber \\
& = \underbrace{ \sum_{t=1}^T  \frac{1}{L+1} \sum_{k=0}^{L-1}      \tau_k \sum_{j=k+1}^L  \prod_{l=k+1}^{j-1} (I - \tau_l H_t)   (-  e_t^k ) }_{\blue{\text{when $a\neq b$ } \Exp[\langle e_t^a, e_t^b \rangle] = 0 }}  \nonumber \\ 
	 & + \sum_{t=1}^T     \frac{1}{L+1} \sum_{k=0}^{L-1}   \tau_k \sum_{j=k+1}^L  \prod_{l=k+1}^{j-1} (I - \tau_l H_t)   (-\tfrac{\nabla f(\theta_t + {\delta_t^k} g_t^k) - \nabla f(\theta_t) }{\delta_t^k} + H_t g_t^k )  , 
\end{align*}
by using \eqref{eq:append:proof:thm:spnd-bound:lem:hessian-taylor-approx:tmp5:bound} and \eqref{eq:append:proof:thm:spnd-bound:lem:hessian-taylor-approx:tmp7:bound:delta}, 
we have 
\begin{align}
& \Exp\left[\left\|\frac{1}{\sqrt{T}}\sum_{t=1}^T\ref{append:proof:thm:spnd-bound:cov:tmp3}_t \right\|_2^2 \right] \nonumber \\
\lesssim & \Exp\left[ \frac{1}{T}(\sum_{t=1}^T \frac{1}{L} + (\sum_{t=1}^T \frac{\sum_{k=0}^L\delta_t^k}{L} )^2) \left\|\tfrac{g_t^0}{\rho_t}\right\|_2^2\right] \nonumber \\
& \blue{\text{using \eqref{eq:proof:thm:spnd-bound:sgd:norm-decompose}, and by our choice of $\delta_t^k =\delta_t^0 o((k+1)^{-2})$  and $\delta_t^0 =o((t+1)^{-2})$ \eqref{eq:proof:thm:spnd-bound:delta_j}}}\nonumber \\
\lesssim & \Exp\left[\left(  \tfrac{1}{L} + \tfrac{\sum_{t=1}^T {\delta_t^0}^2}{T} \right) \left(1+\|\theta_t - \htheta|\|_2^2\right) \right]\nonumber \\
\lesssim & \frac{1}{L} + \frac{1}{T}. \label{eq:thm:spnd-bound:covariance-bound:proof:bar-g-t-covariance-bound}
\end{align}

And because we have 
\begin{align*}
\Exp[\|\ref{append:proof:thm:spnd-bound:cov:tmp1}_t\|_2] 
=  \Exp[\|-H^{-1}\frac{1}{S_o} \sum_{i \in I_o} \nabla f_i(\htheta)\|_2]
=  O(1) , 
\end{align*}
\begin{align}
&\Exp[\|\ref{append:proof:thm:spnd-bound:cov:tmp2}_t\|_2^2 \mid \theta_t]  \nonumber \\
\lesssim& \Exp\left[\left\|(H^{-1} - H_t^{-1}) \frac{1}{S_o} \sum_{i \in I_o} \nabla f_i(\htheta) \right\|_2^2
	+\left\|H_t^{-1} \frac{1}{S_o} \sum_{i \in I_o} (\nabla f_i(\htheta) - \nabla f_i(\theta_t)) \right\|_2^2
	 \mid \theta_t\right] \nonumber \\ 
& \blue{\text{because $H^{-1} - H_t^{-1} = H^{-1} (H_t - H) H_t^{-1}$  and using \Cref{append:proof:thm:spnd-bound:lem:hessian-taylor-approx}}} \label{eq:append:proof:thm:spnd-bound:cov:tmp2:proof:hessian-approximation} \\ 
\lesssim &  \Exp[\|\theta_t - \htheta|\|_2^2 \mid \theta_t] \nonumber \\ 
\lesssim & (t+1)^{-d_o} ,  
\end{align}
by repeatedly applying Cauchy-Schwarz inequality and AM-GM inequality, we can conclude that 
\begin{align*}
& \Exp\left[ \left\| H^{-1} G H^{-1}  - \frac{S_o}{T}  \sum_{t=1}^T \frac{\bar{g}_t \bar{g}_t^\top}{\rho_t^2}  
		 \right\|_2 \right] \nonumber \\
\lesssim & \frac{1}{\sqrt{T}} +  \frac{1}{T}\sum_{t=1}^T (t+1)^{-\frac{d_o}{2}} + \frac{1}{T}\sum_{t=1}^T (t+1)^{-d_o} + \frac{1}{\sqrt{L}} + \frac{1}{L} \nonumber \\ 
& \blue{\text{because $\sum_{t=1}^T (t+1)^{-\frac{d_o}{2}}  = T^{1-\frac{d_o}{2}}$ for $d_o \in (\frac{1}{2},1)$}} \nonumber \\
\lesssim & \frac{1}{T^{\frac{d_o}{2}}}  + \frac{1}{\sqrt{L}}. 
\end{align*}

\subsection{Proof of \Cref{cor:foasnd:asymptotic-normality:outer-avg}}

For $\frac{g_t^L}{\rho_t}$, we have 
\begin{align}
\frac{g_t^L}{\rho_t} &= \underbrace{ -H^{-1}\frac{1}{S_o} \sum_{i \in I_o} \nabla f_i(\htheta) }_{\mytag{[1]}{append:proof:thm:spnd-bound:normality:tmp1}} \nonumber \\
	& +  \underbrace{ H^{-1}\frac{1}{S_o} \sum_{i \in I_o} \nabla f_i(\htheta) -H_t^{-1} \frac{1}{S_o} \sum_{i \in I_o} \nabla f_i(\htheta) +H_t^{-1} \frac{1}{S_o} \sum_{i \in I_o} \nabla f_i(\htheta) -H_t^{-1} \frac{1}{S_o} \sum_{i \in I_o} \nabla f_i(\theta_t) + H_t^{-1} \nabla f(\theta_t)  }_{\mytag{[2]}{append:proof:thm:spnd-bound:normality:tmp2}}  \nonumber \\   
	& \underbrace{- H_t^{-1} \nabla f(\theta_t) + (\theta_t - \htheta)}_{\mytag{[3]}{append:proof:thm:spnd-bound:normality:tmp3}}  \underbrace{ - H_t^{-1} \frac{g_t^0}{\rho_t}
	+ \frac{g_t^L}{\rho_t} }_{\mytag{[4]}{append:proof:thm:spnd-bound:normality:tmp4}}   - (\theta_t - \htheta),  \label{eq:thm:spnd-bound:outer-sgd:decompose:terms:normality}
\end{align}
which gives 
\begin{align*}
& \theta_t - \htheta \nonumber \\
& = (1-\rho_{t-1}) (\theta_{t-1} - \htheta) + \rho_{t-1} (\ref{append:proof:thm:spnd-bound:normality:tmp1}_{t-1} + \ref{append:proof:thm:spnd-bound:normality:tmp2}_{t-1} + \ref{append:proof:thm:spnd-bound:normality:tmp3}_{t-1} + \ref{append:proof:thm:spnd-bound:normality:tmp4}_{t-1}) \nonumber \\ 
& = (\prod_{i=0}^{t-1} (1-\rho_i)) (\theta_0  - \htheta) + \sum_{i=0}^{t-1} (\prod_{j=i+1}^{t-1}(1-\rho_j)) \rho_i (\ref{append:proof:thm:spnd-bound:normality:tmp1}_i + \ref{append:proof:thm:spnd-bound:normality:tmp2}_i + \ref{append:proof:thm:spnd-bound:normality:tmp3}_i + \ref{append:proof:thm:spnd-bound:normality:tmp4}_i) . 
\end{align*}
And we have
\begin{align}
& \sqrt{T}(\frac{\sum_{t=1}^T \theta_t}{T} - \htheta)  \nonumber \\ 
& = \frac{1}{\sqrt{T}}(\sum_{t=1}^T \prod_{i=0}^{t-1} (1-\rho_i))  (\theta_0  - \htheta) 
	+  \frac{1}{\sqrt{T}}  \sum_{t=1}^T \sum_{i=0}^{t-1} (\prod_{j=i+1}^{t-1}(1-\rho_j)) \rho_i (\ref{append:proof:thm:spnd-bound:normality:tmp1}_i + \ref{append:proof:thm:spnd-bound:normality:tmp2}_i + \ref{append:proof:thm:spnd-bound:normality:tmp3}_i + \ref{append:proof:thm:spnd-bound:normality:tmp4}_i)
 \nonumber \\ 
& = \frac{1}{\sqrt{T}}(\sum_{t=1}^T \prod_{i=0}^{t-1} (1-\rho_i))  (\theta_0  - \htheta) 
	+  \frac{1}{\sqrt{T}} \sum_{i=0}^{T-1} \sum_{t=i+1}^T (\prod_{j=i+1}^{t-1}(1-\rho_j)) \rho_i (\ref{append:proof:thm:spnd-bound:normality:tmp1}_i + \ref{append:proof:thm:spnd-bound:normality:tmp2}_i + \ref{append:proof:thm:spnd-bound:normality:tmp3}_i + \ref{append:proof:thm:spnd-bound:normality:tmp4}_i) .
	\label{eq:thm:spnd-bound:outer-sgd:decompose:sum-terms:normality}
\end{align}

For the first  term in \eqref{eq:thm:spnd-bound:outer-sgd:decompose:sum-terms:normality}, which is non-stochastic, we have
\begin{align*}
\left\| \frac{1}{\sqrt{T}}(\sum_{t=1}^T \prod_{i=0}^{t-1} (1-\rho_i))  (\theta_0  - \htheta) \right\|_2 \lesssim  \frac{1}{\sqrt{T}} . 
\end{align*} 

For the second  term in \eqref{eq:thm:spnd-bound:outer-sgd:decompose:sum-terms:normality}, which is stochastic, we first consider $\rho_i \sum_{t=i+1}^T \prod_{j=i+1}^{t-1}(1-\rho_j) $, which is $O(1)$ (similar to \eqref{eq:append:proof:thm:spnd-bound:lem:hessian-taylor-approx:tmp5:bound}) and satisfies 
\begin{align*}
& \rho_i \sum_{t=i+1}^T \prod_{j=i+1}^{t-1}(1-\rho_j) \nonumber \\
& = \sum_{t=i+1}^T \frac{\rho_i}{\rho_t} \rho_t \prod_{j=i+1}^{t-1}(1-\rho_j) \nonumber \\
\leq & \frac{\rho_i}{\rho_s} \sum_{t=i+1}^s  \rho_t \prod_{j=i+1}^{t-1}(1-\rho_j) + \rho_i  (\prod_{j=i+1}^{s}(1-\rho_j)) \sum_{t=s+1}^T \prod_{j=s+1}^{t-1}(1-\rho_j) \nonumber \\
& = (1 + \frac{\rho_i - \rho_s}{\rho_s}) (1-\prod_{t=i+1}^s(1-\rho_t))  + \rho_i  (\prod_{j=i+1}^{s}(1-\rho_j)) \sum_{t=s+1}^T \prod_{j=s+1}^{t-1}(1-\rho_j)  \nonumber \\
\leq & (1 + \frac{\rho_i - \rho_s}{\rho_s}) (1-(1-\rho_s)^{s-i}) + \rho_i (1-\rho_s)^{s-i} \sum_{t=s+1}^T \prod_{j=s+1}^{t-1}(1-\rho_j) \nonumber \\
\leq & 1+ ((1+ \frac{s-i}{i+1})^{d_o} - 1) + \rho_i e^{-(s-i)\rho_s} \sum_{t=s+1}^\infty \prod_{j=s+1}^{t-1}(1-\rho_j) \nonumber \\
\leq & 1 + \frac{s-i}{i} + \rho_i e^{-(s-i)\rho_s} \sum_{t=s+1}^\infty \prod_{j=s+1}^{t-1}(1-\rho_j) ,   
\end{align*}
for all $i \leq s \leq T $, and 
\begin{align*}
& \rho_i \sum_{t=i+1}^T \prod_{j=i+1}^{t-1}(1-\rho_j) \nonumber \\
& \geq \sum_{t=i+1}^T (\prod_{j=i+1}^{t-1}(1-\rho_j))\rho_t \nonumber \\ 
& = 1 - \prod_{t=i+1}^T(1-\rho_t) \nonumber \\
& \geq 1 - \exp(-\sum_{t=i+1}^T \rho_t) \nonumber \\
& \geq 1 - \exp(- \frac{1}{1-d_o}((T+2)^{1-d_o}- (i+2)^{1-d_o}))
\end{align*}
When we choose $s = i + \lceil (i+1)^{\frac{d_o+1}{2}} \rceil$, we have 
$\frac{s-i}{i} \lesssim i^{\frac{-1+d_o}{2}}$, $(s-i)\rho_s \gtrsim  (i+1)^{\frac{1-d_o}{2}}$, and $ \rho_i e^{-\frac{1}{2}(s-i)\rho_s} \lesssim \rho_s$. 
And these imply $| \rho_i \sum_{t=i+1}^T \prod_{j=i+1}^{t-1}(1-\rho_j) - 1|= O(\max\{(i+1)^{\frac{-1+d_o}{2}}, \exp(- \frac{1}{1-d_o}((T+2)^{1-d_o}- (i+2)^{1-d_o}) \})$. 
Thus, for term \ref{append:proof:thm:spnd-bound:normality:tmp1}, we have
\begin{align*}
&\frac{1}{\sqrt{T}} \sum_{i=0}^{T-1} \sum_{t=i+1}^T \prod_{j=i+1}^{t-1}(1-\rho_j) \rho_i \ref{append:proof:thm:spnd-bound:normality:tmp1}_i 
= \frac{1}{\sqrt{T}} \sum_{i=0}^{T-1} \ref{append:proof:thm:spnd-bound:normality:tmp1}_i 
	 +  \frac{1}{\sqrt{T}} \sum_{i=0}^{T-1} (\sum_{t=i+1}^T \prod_{j=i+1}^{t-1}(1-\rho_j) \rho_i -1) \ref{append:proof:thm:spnd-bound:normality:tmp1}_i , 
\end{align*}
where the first term weakly converges to $\Norm(0, \frac{1}{S_o} H^{-1} G H^{-1})$ by Central Limit Theorem, and the second term satisfies $\Exp[\|\frac{1}{\sqrt{T}} \sum_{i=0}^{T-1} (\sum_{t=i+1}^T \prod_{j=i+1}^{t-1}(1-\rho_j)) \rho_i -1) \ref{append:proof:thm:spnd-bound:normality:tmp1}_i\|_2^2]=\Exp[\frac{1}{T} \sum_{i=0}^{T-1}|(\sum_{t=i+1}^T \prod_{j=i+1}^{t-1}(1-\rho_j)) \rho_i -1)|^2 \| \ref{append:proof:thm:spnd-bound:normality:tmp1}_i\|_2^2] \lesssim T^{d_o-1} + \frac{1}{T}$.

For term \ref{append:proof:thm:spnd-bound:normality:tmp2}, we have
\begin{align*}
\|\ref{append:proof:thm:spnd-bound:normality:tmp2}_t\|_2 \lesssim \|\theta_t - \htheta\|_2, 
\end{align*}
and $\Exp[\langle \ref{append:proof:thm:spnd-bound:normality:tmp2}_a, \ref{append:proof:thm:spnd-bound:normality:tmp2}_b \rangle ]=0$ when $a\neq b$. 
Thus 
\begin{align*}
\Exp\left[ \left\| \frac{1}{\sqrt{T}} \sum_{i=0}^{T-1} \sum_{t=i+1}^T \prod_{j=i+1}^{t-1}(1-\rho_j) \rho_i \ref{append:proof:thm:spnd-bound:normality:tmp2}_i \right\|_2^2  \right] 
\lesssim \frac{1}{T} \sum_{i=0}^{T-1} \|\theta_t - \htheta\|_2^2 \lesssim T^{-d_o} .
\end{align*}

For term \ref{append:proof:thm:spnd-bound:normality:tmp3}, we have
\begin{align*}
\| - H_t^{-1} \nabla f(\theta_t) + (\theta_t - \htheta) \|_2 \lesssim \|\theta_t - \htheta\|_2^2 . 
\end{align*}
By using \eqref{eq:thm:spnd-bound:outer-iter:4th-moment} and  Cauchy-Schwarz inequality, we have 
\begin{align*}
\Exp\left[ \left\| \frac{1}{\sqrt{T}} \sum_{i=0}^{T-1} \sum_{t=i+1}^T \prod_{j=i+1}^{t-1}(1-\rho_j) \rho_i \ref{append:proof:thm:spnd-bound:normality:tmp3}_i \right\|_2^2 \right] \lesssim T^{1-2d_o} . 
\end{align*}

For term \ref{append:proof:thm:spnd-bound:normality:tmp4}, similar to similar to  \eqref{eq:thm:spnd-bound:covariance-bound:proof:bar-g-t-covariance-bound}, we have
\begin{align*}
\Exp\left[ \left\| \frac{1}{\sqrt{T}} \sum_{i=0}^{T-1} \sum_{t=i+1}^T \prod_{j=i+1}^{t-1}(1-\rho_j) \rho_i \ref{append:proof:thm:spnd-bound:normality:tmp4}_i \right\|_2^2 \right]  \lesssim \frac{1}{T} + \frac{1}{L}. 
\end{align*}

\subsection{Proof of \Cref{cor:svrg-foasnd:bounds}}

Using Theorem 6.5 of \cite{bubeck2015convex}, we have 
\begin{align*}
\Exp[\|\theta_t - \htheta\|_2^2] \lesssim 0.9^t.
\end{align*}

Similar to \eqref{eq:thm:spnd-bound:newton} in \Cref{thm:spnd-bound} (\Cref{subsec:proof:thm:spnd-bound:eq:thm:spnd-bound:newton}), we have 
\begin{align*}
\Exp \left[ \left\| \frac{\bar{g}_t}{\rho_t} - [\nabla^2 f(\theta_t)]^{-1} g_t^0 \right\|_2^2 \mid \theta_t \right] \lesssim \tfrac{1}{L} \|g_t^0\|_2^2 . 
\end{align*}

Similar to the proof of \eqref{eq:thm:spnd-bound:covariance-bound} in \Cref{thm:spnd-bound} (\Cref{subsec:proof:thm:spnd-bound:eq:thm:spnd-bound:covariance-bound}), using \eqref{eq:thm:spnd-bound:outer-sgd:decompose:terms}, we have
\begin{align*}
	  \Exp \left[ \left\| H^{-1} G H^{-1}  - \frac{S_o}{T} \sum_{t=1}^T \frac{\bar{g}_t \bar{g}_t^\top}{\rho_t^2}  
		 \right\|_2 \right] \lesssim L^{-\frac{1}{2}} .   
\end{align*}

For $\frac{g_t^L}{\rho_t}$, we have 
\begin{align}
\frac{\bar{g}_t}{\rho_t} &= \underbrace{ -H^{-1}\frac{1}{S_o} \sum_{i \in I_o} \nabla f_i(\htheta) }_{\mytag{[1]}{append:proof:thm:svrg-bound:normality:tmp1}} \nonumber \\
	& +  \underbrace{ H^{-1}\frac{1}{S_o} \sum_{i \in I_o} \nabla f_i(\htheta) -H_t^{-1} \frac{1}{S_o} \sum_{i \in I_o} \nabla f_i(\htheta) +H_t^{-1} \frac{1}{S_o} \sum_{i \in I_o} \nabla f_i(\htheta) -H_t^{-1} \frac{1}{S_o} \sum_{i \in I_o} \nabla f_i(\theta_t) + H_t^{-1} \nabla f(\theta_t)  }_{\mytag{[2]}{append:proof:thm:svrg-bound:normality:tmp2}}  \nonumber \\   
	& \underbrace{- H_t^{-1} \nabla f(\theta_t) }_{\mytag{[3]}{append:proof:thm:svrg-bound:normality:tmp3}}  \underbrace{ - H_t^{-1} \frac{g_t^0}{\rho_t}
	+ \frac{g_t^L}{\rho_t} }_{\mytag{[4]}{append:proof:thm:svrg-bound:normality:tmp4}}   .  \label{eq:thm:svrg-bound:outer-sgd:decompose:terms:normality}
\end{align}

For term \ref{append:proof:thm:svrg-bound:normality:tmp1}, we have
\begin{align*}
\frac{1}{\sqrt{T}} \sum_{i=1}^T \ref{append:proof:thm:svrg-bound:normality:tmp1}_t = \frac{1}{\sqrt{T}} \sum_{i=1}^T \left( -H^{-1}\frac{1}{S_o} \sum_{i \in I_o} \nabla f_i(\htheta)  \right)_t , 
\end{align*}
which consists of $S_o \cot T$ i.i.d samples from 0 mean set $\{H_{-1} \nabla f_i(\htheta)\}_{i=1}^n$, 
and weakly converges to $\Norm(0, \frac{1}{S_o} H^{-1} G H^{-1})$ by the Central Limit Theorem. 

For term \ref{append:proof:thm:svrg-bound:normality:tmp2}, similar to \eqref{eq:append:proof:thm:spnd-bound:cov:tmp2:proof:hessian-approximation},  we have
\begin{align*}
\Exp\left[\left\|\frac{1}{\sqrt{T}} \sum_{i=1}^T \ref{append:proof:thm:svrg-bound:normality:tmp2}_t \right\|_2^2 \right]  
	= \frac{1}{T}\Exp[ \sum_{i=1}^T\| \ref{append:proof:thm:svrg-bound:normality:tmp2}_t \|_2^2] 
 \lesssim \frac{1}{T} \sum_{t=1}^T\Exp[ \|\theta_t - \htheta\|_2^2] \lesssim  \frac{1}{T}  . 
\end{align*}

For term \ref{append:proof:thm:svrg-bound:normality:tmp3},   we have
\begin{align*}
\Exp \left[ \left\|\frac{1}{\sqrt{T}} \sum_{i=1}^T \ref{append:proof:thm:svrg-bound:normality:tmp3}_t \right\|_2 \right]  \lesssim \frac{1}{\sqrt{T}}  \Exp[ \|\theta_t - \htheta\|_2] \lesssim \frac{1}{\sqrt{T}} .
\end{align*}

For term \ref{append:proof:thm:svrg-bound:normality:tmp4}, similar to  \eqref{eq:thm:spnd-bound:covariance-bound:proof:bar-g-t-covariance-bound},  we have
\begin{align*}
\Exp\left[\left\|\frac{1}{\sqrt{T}} \sum_{i=1}^T \ref{append:proof:thm:svrg-bound:normality:tmp4}_t \right\|_2 \right]  \lesssim \frac{1}{\sqrt{T}}  + \frac{1}{\sqrt{L}} . 
\end{align*}

%
%

%
%

\subsection{Proof of \Cref{thm:lasso-mod-cov-bounds}}

The error bound proof is similar to standard LASSO proofs \cite{buhlmann2011statistics, negahban2012unified}.

We will use \Cref{lem:proof:lasso:linear:cov:soft-thresh:guarantees} for the  covariance estimate  using soft thresholding.

We denote ``soft thresholding by $\omega$''   as an element-wise procedure $\Shrink_\omega(A) = \sign(A)(|A| - \omega)_+$ , where $A$ is an  arbitrary number, vector, or matrix,  and $\omega$ is non-negative.

\begin{lemma}
\label{lem:proof:lasso:linear:cov:soft-thresh:guarantees}

Under our assumptions in \Cref{sec:high-dim:lasso:linear-regression},  
we choose soft threshold $\frac{1}{n} \sum_{i=1}^n X_i X_i^\top$ using 
\begin{align*}
\omega =  \Theta\left(\sqrt{\frac{\log p}{n}}\right) .
\end{align*} 

When $n \gg \log p $,
the matrix max norm of $\frac{1}{n}\sum_{i=1}^n x_i x_i^\top - \Sigma$ is bounded by
\begin{align*}
\max_{1 \leq i,j \leq p} \left|\left(\frac{1}{n}\sum_{i=1}^n x_i x_i^\top\right)_{ij} - \Sigma_{ij}\right| \lesssim \sqrt{\frac{\log p}{n}}, 
\end{align*}
with probability at least $1-p^{-\Theta(1)}$.

Under this event, 
$\ell_2$ operator norm of  $\widehat{S} - \Sigma$ satisfies  
\begin{align*}
\|\widehat{S} - \Sigma\|_2 \lesssim b \sqrt{\frac{\log p}{n}} , 
\end{align*}
$\ell_1$  and $\ell_\infty$ operator norm of  $\widehat{S} - \Sigma$ satisfies  
\begin{align*}
\|\widehat{S} - \Sigma\|_\infty = \|\widehat{S} - \Sigma\|_1 \lesssim b \sqrt{\frac{\log p}{n}} . 
\end{align*}

\end{lemma}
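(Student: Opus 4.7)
\medskip

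\noindent\textbf{Proof proposal.} The plan is to reduce everything to an element-wise concentration statement for $\frac{1}{n}\sum_{i=1}^n x_ix_i^\top - \Sigma$, and then read off the operator norm bounds from sparsity of $\Sigma$ and the soft-thresholding rule.

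First I would establish the max-norm bound. For fixed indices $j,k$, the quantity $(\frac{1}{n}\sum_i x_ix_i^\top)_{jk} - \Sigma_{jk} = \frac{1}{n}\sum_i \bigl( x_i(j)x_i(k) - \Sigma_{jk}\bigr)$ is a mean of i.i.d. sub-exponential random variables (products of jointly Gaussian coordinates with variance $\Sigma_{jj}, \Sigma_{kk} = O(1)$). A standard Bernstein inequality for sub-exponentials gives the tail
\[
\Pr\!\left[ \Bigl|\bigl(\tfrac{1}{n}\tsum_i x_ix_i^\top\bigr)_{jk} - \Sigma_{jk}\Bigr| > t \right] \;\leq\; 2\exp\!\left(-c\,n\min\{t,t^2\}\right),
\]
for an absolute constant $c>0$. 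Taking $t = C\sqrt{\log p/n}$ with $C$ large enough (using $n \gg \log p$ so that the quadratic term dominates) and union bounding over the $p^2$ pairs $(j,k)$ yields the first claim with probability at least $1 - p^{-\Theta(1)}$.

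Next I would pick the thresholding constant in $\omega = \Theta(\sqrt{\log p/n})$ so that, on the above event, $\omega$ dominates the element-wise error, i.e.\ $\max_{j,k}|(\tfrac{1}{n}\sum_i x_ix_i^\top)_{jk} - \Sigma_{jk}| \leq \omega$. Then I split into two cases for each entry $(j,k)$. If $\Sigma_{jk} = 0$, then $|(\tfrac{1}{n}\sum_i x_ix_i^\top)_{jk}| \leq \omega$, so soft-thresholding kills it and $\widehat{S}_{jk} - \Sigma_{jk} = 0$. If $\Sigma_{jk} \neq 0$, a triangle inequality gives
\[
\bigl|\widehat{S}_{jk} - \Sigma_{jk}\bigr| \;\leq\; \bigl|\widehat{S}_{jk} - (\tfrac{1}{n}\tsum_i x_ix_i^\top)_{jk}\bigr| + \bigl|(\tfrac{1}{n}\tsum_i x_ix_i^\top)_{jk} - \Sigma_{jk}\bigr| \;\leq\; \omega + \omega \;=\; 2\omega,
\]
since the shrinkage operator moves any entry by at most $\omega$.

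Finally I would convert this element-wise control into the operator norm bounds using sparsity of $\Sigma$. Each row and column of $\Sigma$ has at most $b$ nonzero entries, so only at most $b$ entries per row/column of $\widehat{S} - \Sigma$ are nonzero and each is bounded by $2\omega$. Therefore
\[
\|\widehat{S}-\Sigma\|_\infty \;=\; \max_j \tsum_k |\widehat{S}_{jk} - \Sigma_{jk}| \;\leq\; 2 b\,\omega \;\lesssim\; b\sqrt{\tfrac{\log p}{n}},
\]
and by symmetry $\|\widehat{S}-\Sigma\|_1 = \|\widehat{S}-\Sigma\|_\infty$. The $\ell_2$ operator norm then follows from the interpolation bound $\|A\|_2 \leq \sqrt{\|A\|_1\|A\|_\infty}$ applied to the symmetric matrix $\widehat{S} - \Sigma$, giving $\|\widehat{S}-\Sigma\|_2 \lesssim b\sqrt{\log p/n}$. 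The only genuinely technical step is the Bernstein tail for sub-exponential quadratic-in-Gaussian variables; everything after that is bookkeeping driven by the row/column sparsity $b$ and the choice of $\omega$.
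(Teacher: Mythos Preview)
Your proposal is correct and follows essentially the same route as the paper: sub-exponential concentration for Gaussian products via Bernstein, a union bound over the $p^2$ entries, then an element-wise comparison of $\widehat{S}$ and $\Sigma$ under soft-thresholding, and finally the row/column sparsity of $\Sigma$ to pass to operator norms. The paper makes the sub-exponential step concrete via the polarization identity $x(j)x(k) = \tfrac{1}{4}(x(j)+x(k))^2 - \tfrac{1}{4}(x(j)-x(k))^2$ (difference of scaled $\chi^2_1$'s), and it simply asserts ``this implies our bounds'' for the operator-norm step, whereas you spell out the $\|A\|_2 \le \sqrt{\|A\|_1\|A\|_\infty}$ interpolation; but these are cosmetic differences, not a different strategy.
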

\begin{proof}

The proof is similar to that of Theorem 1, \cite{bickel2008covariance}. 

Our assumption that $\Sigma$ is well conditioned implies that each  off diagonal entry is bounded, and each diagonal entry is $\Theta(1)$ and positive.

Omitting the subscript for the $i^\text{th}$ sample, 
for each i.i.d. sample $x=[x(1), x(2), \dots, x(p)]^\top \sim \Norm(0, \Sigma) $, each $x(j) x(k)$ satisfies 
\begin{align*}
x(j) x(k) = \frac{1}{4} (x(j) + x(k))^2 - \frac{1}{4} (x(j) - x(k))^2 , 
\end{align*}
where $x(j) \pm x(k)$ are Gaussian random variables with variance $\Sigma_{jj} \pm 2 \Sigma_{jk} + \Sigma_{kk} = \Theta(1)$, because all of $\Sigma$'s eigenvalues are upper and lower bounded. 
Thus, $x(j) \pm x(k)$ are  $\chi_1^2$ random variables scaled by $\Sigma_{jj} \pm 2 \Sigma_{jk} + \Sigma_{kk} = \Theta(1)$, and they are sub-exponential with parameters that are $\Theta(1)$ \cite{Wainwright2015concentration-lecture}. 
And this implies that, $x(j) x(k) - \Sigma_{jk} $ is sub-exponential 
\begin{align*}
\Pr[|x(j) x(k) - \Sigma_{jk} |>t] \lesssim \exp(-\Theta(\min\{t^2, t\})) , 
\end{align*} 
for all $1 \leq j,k\leq p$. 

Using  Bernstein inequality  \cite{Wainwright2015concentration-lecture}, we have
\begin{align*}
\Pr\left[ \left|\left(\frac{1}{n}\sum_{i=1}^n x_i x_i^\top\right)_{jk} - \Sigma_{jk}\right|  > t\right] \lesssim \exp(-n\Theta(\min\{t^2, t\})) ,
\end{align*}
for all $1 \leq j,k\leq p$.

Taking a union bound over all matrix entries, and using $n \gg \log p $, we have  
\begin{align*}
\max_{1 \leq j,k\leq p} \left|\left((\frac{1}{n}\sum_{i=1}^n x_i x_i^\top\right)_{jk} - \Sigma_{jk}\right| \lesssim \sqrt{\frac{\log p + \log \frac{1}{\delta}}{n}} , 
\end{align*}
with probability at least $1-\delta$. 

Under this event, the soft thresholding estimate  $\Shrink_\omega(\frac{1}{n}\sum_{i=1}^n x_i x_i^\top)_{ij}$ with $\omega=\Theta(\sqrt{\frac{\log p}{n}} )$ is $0$ when $\Sigma_{ij}=0$, 
and $|\Sigma_{ij}-\Shrink_\omega(\frac{1}{n}\sum_{i=1}^n x_i x_i^\top)_{ij}| \leq \omega$ (even when $|\Sigma_{ij}|\leq \omega$). 
And this implies our bounds.

\end{proof}

\Cref{lem:proof:lasso:linear:cov:soft-thresh:guarantees}  guarantees that the optimization problem \eqref{eq:lasso-mod-cov} is well defined with high probability when $n \gg b \sqrt{\frac{\log p}{n}}$.  
Because the $\ell_2$ operator norm $\| \widehat{S} - \Sigma \|_2 \lesssim b \sqrt{\frac{\log p}{n}}  \ll 1$, and the positive definite matrix $\Sigma$'s eigenvalues are all $\Theta(1)$, the symmetric matrix $\widehat{S}$ is positive definite, and  $\widehat{S}$'s eigenvalues are all $\Theta(1)$, and for all $v\in \Real^p$ we have 
\begin{align}
0 \leq v^\top \widehat{S} v = \Theta(\|v\|_2^2) . \label{eq:proof:thm:lasso-mod-cov-bounds:tmp5:cov-thresholded-PD}
\end{align}

Because $\htheta$ attains the minimum, by definition, we have 
\begin{align*}
& \tfrac{1}{2} \htheta^\top \left(\widehat{S}- \tfrac{1}{n} \sum_{i=1}^n x_i x_i^\top \right) \htheta 
	+ \tfrac{1}{n} \sum_{i=1}^n \tfrac{1}{2} (x_i^\top \htheta -y_i)^2 + \lambda \ltynorm{\htheta}{1} \nonumber \\
\leq & \tfrac{1}{2} {\theta^{\star}}^\top \left(\widehat{S}- \frac{1}{n} \sum_{i=1}^n x_i x_i^\top\right) \theta^{\star} 
	+ \tfrac{1}{n} \sum_{i=1}^n \tfrac{1}{2} \left(x_i^\top \theta^{\star} -y_i\right)^2 + \lambda \ltynorm{\theta^{\star}}{1} , 
\end{align*}
which, after rearranging terms, is equivalent to 
\begin{align}
\tfrac{1}{2} (\htheta - \theta^{\star})^\top \widehat{S} (\htheta - \theta^{\star})  
	+ \left\langle \left(\widehat{S}- \tfrac{1}{n} \sum_{i=1}^n x_i x_i^\top\right)\theta^{\star} + \tfrac{1}{n} \sum_{i=1} \epsilon_i x_i , \htheta - \theta^{\star} \right\rangle  \leq \lambda (\|\theta^{\star}\|_1 -\|\htheta \|_1) . 
	\label{eq:proof:thm:lasso-mod-cov-bounds:tmp1}
\end{align}

Because $\widehat{S} = \Shrink_\omega( \frac{1}{n} \sum_{i=1}^n x_i x_i^\top)$ soft thresholds each entry of $ \frac{1}{n} \sum_{i=1}^n x_i x_i^\top$ with $\omega =\Theta(\sqrt{\frac{\log p}{n}})$, each entry of $\widehat{S}- \frac{1}{n} \sum_{i=1}^n x_i x_i^\top$ will lie in the interval $[-\omega, \omega]$. And this implies , with probability at least $1-p^{-\Theta(1)}$, we have 
\begin{align*}
\left\| \left(\widehat{S}- \frac{1}{n} \sum_{i=1}^n x_i x_i^\top\right)\theta^{\star} \right\|_\infty \lesssim \sqrt{\frac{\log p}{n}} \|\theta^{\star}\|_1 \lesssim \sqrt{\frac{s \log p}{n}}, 
\end{align*}
where we used the assumption that $\theta^{\star}$ is $s$ sparse and $\|\theta^{\star}\|_2=O(1)$, which implies $\|\theta^{\star}\|_1 \lesssim \sqrt{s}$.

For the $j^{\text{th}}$ coordinate of $\epsilon_i x_i$, because $\epsilon_i$ and $x_i$ are independent Gaussian random variables, we know that it is sub-exponential \cite{Wainwright2015concentration-lecture}
\begin{align}
\Pr[|\epsilon_i x_i(j)| > t] \lesssim \exp\left(-\Theta\left(\min\left\{\tfrac{t^2}{\sigma^2},\tfrac{t}{\sigma}\right\}\right)\right) ,
	\label{eq:proof:thm:lasso-mod-cov-bounds:tmp2:inf-bound-1} 
\end{align}
for all $1 \leq i \leq n$ and $1 \leq j \leq p$. 

Using Bernstein inequality, we have 
\begin{align*}
\Pr[|\frac{1}{n}\sum_{i=1}^n \epsilon_i x_i(j)| > t] \lesssim \exp\left(-\Theta\left(n\min\left\{\tfrac{t^2}{\sigma^2},\tfrac{t}{\sigma}\right\}\right)\right) ,
\end{align*}
for all   $1 \leq j \leq p$. 

Taking a union bound over all $p$ coordinates, with probability at least $1 - p^{-\Theta(1)}$, we have
\begin{align}
\|\frac{1}{n}\sum_{i=1}^n \epsilon_i x_i\|_\infty \lesssim \sigma \sqrt{\frac{\log p}{n}} ,
	\label{eq:proof:thm:lasso-mod-cov-bounds:tmp2:inf-bound-2}
\end{align}
when $n \gg \log p $.

Thus, we set the regularization parameter 
\begin{align}
\lambda =& \Theta\left( (\sigma + \|\theta^{\star}\|_1) \sqrt{\frac{\log p}{n}} \right) \nonumber \\
\geq & 2\left\|\left(\widehat{S}- \frac{1}{n} \sum_{i=1}^n x_i x_i^\top\right)\theta^{\star} + \frac{1}{n} \sum_{i=1} \epsilon_i x_i\right\|_\infty, \label{eq:proof:thm:lasso-mod-cov-bounds:tmp3:reg:lambda}
\end{align}
which holds under the events  in \Cref{lem:proof:lasso:linear:cov:soft-thresh:guarantees} and \eqref{eq:proof:thm:lasso-mod-cov-bounds:tmp2:inf-bound-2}. 

For a vector $v \in \Real^p$, 
let $v^S$ indicate the sub-vector of on the support of $\theta^{\star}$, 
and $v^{\bar{S}}$ the sub-vector not on the support of $\theta^{\star}$. 

\eqref{eq:proof:thm:lasso-mod-cov-bounds:tmp1} and \eqref{eq:proof:thm:lasso-mod-cov-bounds:tmp3:reg:lambda} implies that 
\begin{align*}
- \tfrac{1}{2} \lambda (\|(\theta - \theta^{\star})^S\|_1 +\|\theta^{\bar{S}}\|_1) = - \tfrac{1}{2}  \lambda \|\theta - \theta^{\star}\|_1 \leq  \lambda (\|\theta^{\star}\|_1 -\|\htheta \|_1) \leq \lambda (\|(\theta - \theta^{\star})^S\|_1 -\|\theta^{\bar{S}}\|_1), 
\end{align*}
which is equivalent to 
\begin{align}
\|\theta^{\bar{S}}\|_1 \leq 3 \|(\theta - \theta^{\star})^S\|_1 , \label{eq:proof:thm:lasso-mod-cov-bounds:tmp4:RE-cone}
\end{align}
because $\lambda > 0$. 

For any vector $v \in \Real^p$, it satisfies $\|v\|_2^2 \geq  \|v^S\|_2^2 \geq \frac{1}{s} \|v^S\|_1^2$. 
Using this in \eqref{eq:proof:thm:lasso-mod-cov-bounds:tmp1}, we have 
\begin{align*}
\tfrac{1}{s} \|(\theta - \theta^{\star})^S\|_1^2 \lesssim \lambda \|(\theta - \theta^{\star})^S\|_1 , 
\end{align*}
which implies that 
\begin{align}
\|(\theta - \theta^{\star})^S\|_1 \lesssim s (\sigma + \|\theta^{\star}\|_1) \sqrt{\frac{\log p}{n}} . \label{eq:proof:thm:lasso-mod-cov-bounds:tmp5}
\end{align}
Combining \eqref{eq:proof:thm:lasso-mod-cov-bounds:tmp5} and \eqref{eq:proof:thm:lasso-mod-cov-bounds:tmp4:RE-cone}, we have proven \eqref{eq:thm:lasso-mod-cov-bounds:l1} 
\begin{align*}
\|\theta - \theta^{\star}\|_1 \lesssim s \left(\sigma + \|\theta^{\star}\|_1\right) \sqrt{\frac{\log p}{n}} \lesssim s \left( \sigma + \sqrt{s} \right)\sqrt{\frac{\log p}{n}} . 
\end{align*}

In \eqref{eq:proof:thm:lasso-mod-cov-bounds:tmp1} because $\langle (\widehat{S}- \frac{1}{n} \sum_{i=1}^n x_i x_i^\top)\theta^{\star} + \frac{1}{n} \sum_{i=1} \epsilon_i x_i , \htheta - \theta^{\star} \rangle \geq 0$ by convexity, and using \eqref{eq:proof:thm:lasso-mod-cov-bounds:tmp5:cov-thresholded-PD}, we have proven \eqref{eq:thm:lasso-mod-cov-bounds:l2} 
\begin{align*}
\|\theta - \theta^{\star}\|_2^2 \lesssim \lambda \|(\theta - \theta^{\star})^S\|_1 
	\lesssim  s \left(\sigma + \|\theta^{\star}\|_1\right)^2 \frac{\log p}{n} \lesssim s \left( \sigma + \sqrt{s} \right)^2 \frac{\log p}{n} . 
\end{align*}

\subsection{Proof of \Cref{thm:lasso-mod:de-bias:stat-inf}}

At the solution $\htheta$ of the optimization problem \eqref{eq:lasso-mod-cov}, 
using the KKT condition,  we have 
\begin{align}
\left(\widehat{S} - \frac{1}{n}\sum_{i=1}^n x_i x_i^\top\right) \htheta + \frac{1}{n} \sum_{i=1}^n x_i (x_i^\top \htheta - y_i) + \lambda \widehat{g} = 0, \label{eq:proof:thm:lasso-mod:de-bias:stat-inf:tmp1:kkt}
\end{align}
where $\widehat{g} \in \partial \|\htheta\|_1$. And this is equivalent to 
\begin{align}
\widehat{S} \htheta - \frac{1}{n} \sum_{i=1}^n x_i ( x_i^\top \theta^{\star} + \epsilon_i) + \lambda \widehat{g} = 0, \label{eq:proof:thm:lasso-mod:de-bias:stat-inf:tmp2:kkt} .
\end{align}

By \Cref{lem:proof:lasso:linear:cov:soft-thresh:guarantees}, we know that $\widehat{S}$ is invertible when $n \gg b^2 \log p$. 

Plugging \eqref{eq:lasso-mod-cov:de-bias:theta} into \eqref{eq:proof:thm:lasso-mod:de-bias:stat-inf:tmp2:kkt}, we have 
\begin{align*}
\widehat{S} (\htheta^\diff - \widehat{S}^{-1} \left[  \frac{1}{n} \sum_{i=1}^n y_i x_i - \left( \frac{1}{n}\sum_{i=1}^n x_i x_i^\top \right) \htheta   \right] )- \frac{1}{n} \sum_{i=1}^n x_i ( x_i^\top \theta^{\star} + \epsilon_i) + \lambda \widehat{g} = 0, 
\end{align*}
which is equivalent to 
\begin{align}
\widehat{S} ( \htheta^\diff- \theta^{\star}) - \frac{1}{n} \sum_{i=1}^n \epsilon_i x_i + \left( \frac{1}{n}\sum_{i=1}^n x_i x_i^\top  -\widehat{S} \right) (\htheta - \theta^{\star})= 0, \label{eq:proof:thm:lasso-mod:de-bias:stat-inf:tmp3} 
\end{align}
where we used the fact that $\lambda \widehat{g} = -\widehat{S} \htheta + \frac{1}{n} \sum_{i=1}^n x_i ( x_i^\top \theta^{\star} + \epsilon_i)$. 

Rewriting \eqref{eq:proof:thm:lasso-mod:de-bias:stat-inf:tmp3}, we have
\begin{align}
\htheta^\diff- \theta^{\star} = \widehat{S}^{-1}  \frac{1}{n} \sum_{i=1}^n \epsilon_i x_i  + \left(I - \widehat{S}^{-1} \left(\frac{1}{n}\sum_{i=1}^n x_i x_i^\top\right) \right) (\htheta - \theta^{\star}). \label{eq:proof:thm:lasso-mod:de-bias:stat-inf:tmp4:decomposed:de-bias:diff} 
\end{align}

For $\max_{1 \leq j,k \leq p} \left|\left(I - \widehat{S}^{-1} \left(\frac{1}{n}\sum_{i=1}^n x_i x_i^\top\right)\right)_{jk}\right|$, we have
\begin{align}
& \max_{1 \leq j,k \leq p} \left|\left(I - \widehat{S}^{-1} \left(\frac{1}{n}\sum_{i=1}^n x_i x_i^\top\right)\right)_{jk}\right| \nonumber \\
=& \max_{1 \leq j,k \leq p} \left| \left(\widehat{S}^{-1}\left(S - \frac{1}{n}\sum_{i=1}^n x_i x_i^\top\right)\right)_{jk}\right|  \nonumber \\
\leq & \|\widehat{S}^{-1}\|_\infty  \max_{1 \leq j,k \leq p}\left| \left(S - \frac{1}{n}\sum_{i=1}^n x_i x_i^\top\right)_{jk}\right| . \label{eq:proof:thm:lasso-mod:de-bias:stat-inf:tmp5:I-S_inv_cov}
\end{align}

Under the event in \Cref{lem:proof:lasso:linear:cov:soft-thresh:guarantees}, we have 
\begin{align}
 \max_{1 \leq j,k \leq p} \left| \left(S - \frac{1}{n}\sum_{i=1}^n x_i x_i^\top\right)_{jk}\right| \lesssim \sqrt{\frac{\log p}{n}}.  \label{eq:proof:thm:lasso-mod:de-bias:stat-inf:tmp6}
\end{align}

Also under the event in \Cref{lem:proof:lasso:linear:cov:soft-thresh:guarantees}, 
we have 
\begin{align*}
\widehat{S}_{ii} - \sum_{j \neq i} |\widehat{S}_{ij}| \geq \Sigma_{ii} - \Theta\left(\sqrt{\tfrac{\log p}{n}}\right) - \sum_{j \neq i} |\Sigma_{ij}| \geq D_\Sigma - \Theta\left(\sqrt{\tfrac{\log p}{n}}\right) , 
\end{align*}
where we used $\widehat{S}_{ii} > 0$ and $|\Sigma_{ij}| \geq |\widehat{S}_{ij}|$ by definition of the soft thresholding operation.  

Thus, when $n \gg   \tfrac{1}{{D_\Sigma}^2} \log p  $, 
we have
\begin{align*}
\widehat{S}_{ii} - \sum_{j \neq i} |\widehat{S}_{ij}|  \gtrsim D_\Sigma , 
\end{align*} 
which implies that $\widehat{S}$ is also diagonally dominant. 
Thus, using Theorem 1, \cite{varah1975lower}, when $n \gg \tfrac{1}{{D_\Sigma}^2} \log p$, we have
\begin{align}
\|\widehat{S}\|_\infty \lesssim \frac{1}{D_\Sigma} , \label{eq:proof:thm:lasso-mod:de-bias:stat-inf:tmp7}
\end{align}
with probability at least $1 - p^{-\Theta(1)}$

And using \eqref{eq:proof:thm:lasso-mod:de-bias:stat-inf:tmp6} and \eqref{eq:proof:thm:lasso-mod:de-bias:stat-inf:tmp7} in \eqref{eq:proof:thm:lasso-mod:de-bias:stat-inf:tmp5:I-S_inv_cov}, we have 
\begin{align}
\max_{1 \leq j,k \leq p} \left|\left(I - \widehat{S}^{-1} (\frac{1}{n}\sum_{i=1}^n x_i x_i^\top\right)_{jk}\right| 
	\lesssim \frac{1}{D_\Sigma} \sqrt{\frac{\log p}{n}} . \label{eq:proof:thm:lasso-mod:de-bias:stat-inf:tmp7:bias:mat-mult:infty}
\end{align}

Using \eqref{eq:proof:thm:lasso-mod:de-bias:stat-inf:tmp7:bias:mat-mult:infty} and the bound on $\|\htheta - \theta^{\star}\|_1$ \eqref{eq:thm:lasso-mod-cov-bounds:l1}, 
in \eqref{eq:proof:thm:lasso-mod:de-bias:stat-inf:tmp4:decomposed:de-bias:diff}, we have
\begin{align}
\left\| \left(I - \widehat{S}^{-1} \left(\frac{1}{n}\sum_{i=1}^n x_i x_i^\top\right) \right) (\htheta - \theta^{\star}) \right\|_\infty 
	\lesssim \frac{1}{D_\Sigma} s \left(\sigma + \|\theta^{\star}\|_1\right) \frac{\log p}{n} \lesssim \frac{1}{D_\Sigma} s \left(\sigma + \sqrt{s}\right) \frac{\log p}{n}  . \label{eq:proof:thm:lasso-mod:de-bias:stat-inf:tmp8:bias:l_infty-bound}
\end{align}

Combining \eqref{eq:proof:thm:lasso-mod:de-bias:stat-inf:tmp8:bias:l_infty-bound} and \eqref{eq:proof:thm:lasso-mod:de-bias:stat-inf:tmp4:decomposed:de-bias:diff}, 
we have proven \Cref{thm:lasso-mod:de-bias:stat-inf}, 
when $n \gg  \max\{ b^2,\tfrac{1}{{D_\Sigma}^2} \} \log p $, we have
\begin{align*}
\sqrt{n} (\htheta^\diff - \theta^{\star}) = Z + R,
\end{align*} 
where $Z$ $\mid$ $\{x_i\}_{i=1}^n$ $\sim$ $\Norm\left(0,  \sigma^2 \widehat{S}^{-1} \left( \frac{1}{n} \sum_{i=1}^n x_i x_i^\top \right) \widehat{S}^{-1} \right)$, 
and $\|R\|_\infty \lesssim \frac{1}{D_\Sigma} s \left(\sigma + \|\theta^{\star}\|_1\right) \frac{\log p}{\sqrt{n}} \lesssim \frac{1}{D_\Sigma} s \left(\sigma + \sqrt{s}\right) \frac{\log p}{\sqrt{n}} $ with probability at least $1-p^{-\Theta(1)}$.

\subsection{Proof of \Cref{thm:alg:stat-inf:high-dim:linear:proximal:svrg}}

We analyze the optimization problem conditioned on the data set $\{x_i\}_{i=1}^n$, which satisfies  \Cref{lem:proof:lasso:linear:cov:soft-thresh:guarantees} with probability at least $1 - p^{\Theta(-1)}$ when $n \gg b^2 \log p$. 

Here, we denote the objective function as 
\begin{align*}
P(\theta) = \frac{1}{2} \theta^\top \left(\widehat{S}- \frac{1}{n} \sum_{i=1}^n x_i x_i^\top\right) \theta 
	+ \frac{1}{n} \sum_{i=1}^n \frac{1}{2} \left(x_i^\top \theta -y_i\right)^2 + \lambda \ltynorm{\theta}{1} . 
\end{align*}

In \Cref{alg:stat-inf:high-dim:linear:proximal:svrg}, 
lines \ref{line-alg:stat-inf:high-dim:linear:proximal:svrg:newton:start} to \ref{line-alg:stat-inf:high-dim:linear:proximal:svrg:newton:end} are using SVRG \cite{johnson2013accelerating} to solve the  Newton step
\begin{align}
\min_\Delta \frac{1}{2} \Delta^\top \widehat{S} \Delta + \left\langle \frac{1}{S_o} \sum_{k \in I_o }  \nabla f_k(\theta_t ) , \Delta \right\rangle , \label{eq:proof:thm:alg:stat-inf:high-dim:linear:proximal:svrg:stat-inf:newton:obj}
\end{align}
and using proximal SVRG \cite{xiao2014proximal} to solve the proximal Newton step
\begin{align}
\min_\Delta \frac{1}{2} \Delta^\top \widehat{S} \Delta + \left\langle \frac{1}{n} \sum_{k =1}^n  \nabla f_k(\theta_t ) , \Delta \right\rangle + \lambda \|\theta + \Delta\|_1 .
	\label{eq:proof:thm:alg:stat-inf:high-dim:linear:proximal:svrg:opt:proximal-newton:obj}
\end{align}

The gradient of \eqref{eq:proof:thm:alg:stat-inf:high-dim:linear:proximal:svrg:stat-inf:newton:obj} is 
\begin{align*}
\widehat{S} \Delta + \frac{1}{S_o} \sum_{k \in I_o }  \nabla f_k(\theta_t ) 
	=  \underbrace{ \frac{1}{p}\sum_{k=1}^p \left[ p \widehat{S}_k \right] \Delta(k) }_{\text{sample \emph{by feature} in SVRG}} + \underbrace{ \frac{1}{S_o} \sum_{k \in I_o }  \nabla f_k(\theta_t ) }_{\text{compute exactly in SVRG}},
\end{align*}
where $\widehat{S}_k$ is the $k$\textsuperscript{th} column of $\widehat{S}$ and $\Delta(k)$ is the $k$\textsuperscript{th} coordinate of $\Delta$.

Line \ref{line-alg:stat-inf:high-dim:linear:proximal:svrg:newton:stat-inf:1} corresponds to SVRG's outer loop part that computes the full gradient.  
Line \ref{line-alg:stat-inf:high-dim:linear:proximal:svrg:newton:stat-inf:2} corresponds to SVRG's inner loop update. 

By \Cref{lem:proof:lasso:linear:cov:soft-thresh:guarantees}, when $n \gg b^2 \log p$, the $\ell_2$ operator norm of $\| \widehat{S} \|_2 = O(1)$. 
And this implies $\| \widehat{S}^\top \widehat{S} \|_2 = O(1)$. 
Because $\|\widehat{S}_k\|_2^2$ is the $k$\textsuperscript{th} diagonal element of $\widehat{S}^\top \widehat{S}$, we have $\|\widehat{S}_k\|_2^2 = O(1)$ for all $1\leq k \leq p$. 
Thus, each $\left[ p \widehat{S}_k \right] \Delta(k) $ is a $O(p)$-Lipschitz function. 

By  Theorem 6.5 of \cite{bubeck2015convex}, 
when conditioned on $\theta_t$, 
and choosing
\begin{align*}
\tau &= \Theta\left(\tfrac{1}{p}\right), \\
L_i & \gtrsim p , 
\end{align*}
after $L_o^t$ SVRG outer steps, we have 
\begin{align*}
\Exp\left[ \left\| \left. \bar{g}_t  + \widehat{S}^{-1} \left( \frac{1}{S_o} \sum_{k \in I_o }  \nabla f_k(\theta_t ) \right) \right\|_2^2 \, \right| \, \theta_t , \{x_i\}_{i=1}^n \right] \lesssim & 0.9^{L_o^t} \left\| \frac{1}{S_o} \sum_{k \in I_o }  \nabla f_k(\theta_t ) \right\|_2^2 \nonumber \\
\lesssim & 0.9^{L_o^t} (1+ \|\theta_t - \htheta\|_2) , 
\end{align*}
where $\bar{g}_t = \frac{1}{L_o^t} \sum_{j=0}^{L_o^t} g_t^j$. 

The gradient of the smooth component $\frac{1}{2} \Delta^\top \widehat{S} \Delta + \left\langle \frac{1}{n} \sum_{k =1}^n  \nabla f_k(\theta_t ) , \Delta \right\rangle $ in  \eqref{eq:proof:thm:alg:stat-inf:high-dim:linear:proximal:svrg:opt:proximal-newton:obj} is 
\begin{align*}
\widehat{S} \Delta + \frac{1}{n} \sum_{k =1}^n  \nabla f_k(\theta_t )  = \underbrace{ \frac{1}{p}\sum_{k=1}^p \left[ p \widehat{S}_k \right] \Delta(k) }_{\text{sample \emph{by feature} in proximal SVRG}} + \underbrace{ \frac{1}{n} \sum_{k =1 }^n  \nabla f_k(\theta_t ) }_{\text{compute exactly in proximal SVRG}} .  
\end{align*}

Line \ref{line-alg:stat-inf:high-dim:linear:proximal:svrg:newton:opt:1} corresponds to proximal SVRG's outer loop part that computes the full gradient. 
Line \ref{line-alg:stat-inf:high-dim:linear:proximal:svrg:newton:opt:2} corresponds to proximal SVRG's inner loop update. 

By Theorem 3.1 of \cite{xiao2014proximal},
 when conditioned on $\theta_t$, 
 and choosing 
 \begin{align*}
 \eta &= \Theta\left(\tfrac{1}{p}\right) , \\
 L_i & \gtrsim p , 
 \end{align*}
after $L_o^t$ proximal SVRG outer steps, we have 
\begin{align*}
\Exp[ P(\theta_{t+1} - P(\htheta))\mid \theta_t ] =& \Exp \left[  \left. P(\theta_t + \bar{d}_t - \htheta) - P(\htheta) \, \right| \, \theta_t , \{x_i\}_{i=1}^n \right] \nonumber  \\ 
\lesssim & 0.9^{L_o^t}(P(\theta_t) - P(\htheta)) , 
\end{align*}
where $\bar{d}_t = \frac{1}{L_o^t} \sum_{j=0}^{L_o^t} d_t^j$. 
And this implies 
\begin{align*}
\Exp[\|\theta_t -\htheta\|_2^2] \lesssim 0.9^{\sum_{i=0}^{t-1} L_o^t} (P(\theta_0) - P(\htheta)) . 
\end{align*}

At each $\theta_t$, we have
\begin{align*}
x_i  ( x_i^\top \theta_t - y_i) = x_i x_i^\top (\theta_t - \htheta) + x_i (x_i^\top \htheta - y_i) . 
\end{align*}

For the first term, we have 
\begin{align*}
\| x_i x_i^\top (\theta_t - \htheta) \|_\infty  \leq & |x_i^\top (\theta_t - \htheta)| \| x_i\|_\infty \nonumber \\ 
\leq & \|x_i\|_2 \|\theta_t - \htheta\|_2 \| x_i\|_\infty  \nonumber \\ 
\leq & \sqrt{p} \| x_i\|_\infty^2 \|\theta_t - \htheta\|_2  ,  
\end{align*}
which implies that 
\begin{align*}
\max_{1 \leq j,k \leq p} \left| \left[ \left(  x_i x_i^\top (\theta_t - \htheta)  \right) \left(  x_i x_i^\top (\theta_t - \htheta)  \right)^\top  \right]_{jk} \right| 
	\leq & \| x_i x_i^\top (\theta_t - \htheta) \|_\infty^2 \nonumber \\
\leq & p \| x_i\|_\infty^4 \|\theta_t - \htheta\|_2^2 . 
\end{align*}

For the second term, we have
\begin{align*}
\|x_i (x_i^\top \htheta - y_i)\|_\infty 
	\leq & \|x_i x_i^\top (\htheta - \theta^\star)\|_\infty + \| x_i \epsilon_i \|_\infty \nonumber \\
\leq & \|x_i\|_\infty^2 \|\htheta - \theta^\star \|_1 + |\epsilon_i| \|x_i\|_\infty
\end{align*}

Because when $n \gg \log p$, from \eqref{eq:proof:lem:sec:high-dim:lasso:linear-regression:plug-in:sandwich:tmp2} we have  with probability at least $1 - p^{-\Theta(1)}$ 
\begin{align*}
\max_{1 \leq i \leq n} \| x_i \|_\infty \lesssim \sqrt{\log p + \log n} , 
\end{align*} 
and  from \eqref{eq:proof:lem:sec:high-dim:lasso:linear-regression:plug-in:sandwich:tmp4:bound-2-1} we have with probability at least $1 - n^{-\Theta(1)}$ 
\begin{align*}
\max_{1\leq i \leq n} |\epsilon_i| \lesssim \sigma \sqrt{\log n} , 
\end{align*}
when conditioned on $\theta_t$ (and the data set $\{x_i\}_{i=1}^n$) we have 
\begin{align*}
& \max_{1\leq j,k \leq p} \left| \left[ \left(\widehat{S}^{-1} g_t^0\right) \left(\widehat{S}^{-1} g_t^0\right)^\top - \left( \widehat{S}^{-1}  \tfrac{1}{S_o} \tsum_{k \in I_o }  \nabla f_k(\theta_t )   \right) \left( \widehat{S}^{-1}  \tfrac{1}{S_o} \tsum_{k \in I_o }  \nabla f_k(\theta_t )  \right)^\top \right] _{jk} \right| \nonumber \\ 
\lesssim &  \frac{1}{{D_\Sigma}^2} (\| x_i x_i^\top (\theta_t - \htheta) \|_\infty^2 + 2 \| x_i x_i^\top (\theta_t - \htheta) \|_\infty \|x_i (x_i^\top \htheta - y_i)\|_\infty  ) \nonumber \\
\lesssim & \frac{1}{{D_\Sigma}^2} (p (\log p + \log  n)^2 \|\theta_t - \htheta\|_2^2 + \sqrt{p} (\log p + \log n)  \|\theta_t - \htheta\|_2 ((\log p + \log n)\| \htheta - \theta^\star \|_1+\sigma \sqrt{(\log p + \log n) \log n }) ) \nonumber \\ 
\lesssim & \frac{1}{{D_\Sigma}^2}  (p  \|\theta_t - \htheta\|_2^2 + \sqrt{p}   \|\theta - \htheta\|_2  (\sigma + \|\htheta - \theta^\star\|_1) ) \polylog(p, n)
\end{align*}
under the events of \eqref{eq:proof:lem:sec:high-dim:lasso:linear-regression:plug-in:sandwich:tmp2}, \eqref{eq:proof:thm:lasso-mod:de-bias:stat-inf:tmp7} , and \eqref{eq:proof:lem:sec:high-dim:lasso:linear-regression:plug-in:sandwich:tmp4:bound-2-1}, where we used the fact  \eqref{eq:proof:thm:lasso-mod:de-bias:stat-inf:tmp7} that the $\ell_\infty$ operator norm $ \| {\widehat{S}}^{-1} \|_\infty \lesssim \frac{1}{D_\Sigma}$ with probability at least $1-p^{-\Theta(1)}$ when $n \gg \max\{b^2 , \frac{1}{{D_\Sigma}^2}\} \log p $. 

Thus, we can conclude that, conditioned on the data set  $\{x_i\}_{i=1}^n$, 
and the events \eqref{eq:proof:lem:sec:high-dim:lasso:linear-regression:plug-in:sandwich:tmp2},  \eqref{eq:proof:lem:sec:high-dim:lasso:linear-regression:plug-in:sandwich:tmp4:bound-2-1}, and \eqref{eq:proof:thm:lasso-mod:de-bias:stat-inf:tmp7}, we have 
we  have an asymptotic normality result 
\begin{align*}
\tfrac{1}{\sqrt{t}}\left( \tsum_{t=1}^T \sqrt{S_o} \bar{g_t} + \tfrac{1}{n}\tsum_{i=1}^n x_i (x_i^\top \htheta - y_i)  \right) = W + R,
\end{align*}
where $W$ weakly converges to $\scriptstyle \Norm\left(0, \widehat{S}^{-1} \left[ \frac{1}{n}\sum_{i=1}^n (x_i^\top \htheta - y_i)^2 x_i x_i^\top  - \left( \frac{1}{n} \sum_{i=1}^n x_i (x_i^\top \htheta - y_i) \right) \left( \frac{1}{n} \sum_{i=1}^n x_i (x_i^\top \htheta - y_i) \right)^\top \right] \widehat{S}^{-1}  \right)$, 
and 
\begin{align*}
\|R\|_\infty 
\leq & \frac{1}{\sqrt{t}}\sum_{t=1}^T \left( \| \bar{g}_t - \widehat{S}^{-1} g_t^0 \|_\infty + \| \widehat{S}^{-1} g_t^0 - \tfrac{1}{S_o} \tsum_{k \in I_o} \nabla f_k(\htheta)\|_\infty \right) \nonumber \\
\leq & \frac{1}{\sqrt{t}}\sum_{t=1}^T \left( \| \bar{g}_t - \widehat{S}^{-1} g_t^0 \|_2 + \| \widehat{S}^{-1} g_t^0 - \tfrac{1}{S_o} \tsum_{k \in I_o} \nabla f_k(\htheta)\|_\infty \right) , 
\end{align*} 
which implies 
\begin{align*}
& \Exp\left[ \|R\|_\infty \left. \, \right| \, \{x_i\}_{i=1}^n, \eqref{eq:proof:lem:sec:high-dim:lasso:linear-regression:plug-in:sandwich:tmp2},  \eqref{eq:proof:lem:sec:high-dim:lasso:linear-regression:plug-in:sandwich:tmp4:bound-2-1},  \eqref{eq:proof:thm:lasso-mod:de-bias:stat-inf:tmp7} \right] \nonumber \\ 
\lesssim &  \Exp\left[   \frac{1}{\sqrt{t}}\sum_{t=1}^T 0.95^{L_o^t} (1+ \|\theta_t - \htheta\|_2) + \sqrt{p}(\log p + \log n)  \|\theta_t - \htheta\|_2   \left. \, \right| \, \{x_i\}_{i=1}^n, \eqref{eq:proof:lem:sec:high-dim:lasso:linear-regression:plug-in:sandwich:tmp2},  \eqref{eq:proof:lem:sec:high-dim:lasso:linear-regression:plug-in:sandwich:tmp4:bound-2-1},  \eqref{eq:proof:thm:lasso-mod:de-bias:stat-inf:tmp7} \right] \nonumber \\
\lesssim &  \frac{1}{\sqrt{T}}\sum_{t=1}^T 0.95^{L_o^t} (1+\sqrt{P(\theta_0)-P(\htheta)} 0.95^{\sum_{i=0}^{t-1} L_o^t}) + \sqrt{p}(\log p + \log n) \sqrt{P(\theta_0)-P(\htheta)} 0.95^{\sum_{i=0}^{t-1} L_o^t} .
\end{align*}

And, because $\left( \frac{1}{S_o} \sum_{k\in I_o} \nabla f_k(\htheta) \right)_t$ are i.i.d., and bounded when conditioned on the data set  $\{x_i\}_{i=1}^n$, 
and the events \eqref{eq:proof:lem:sec:high-dim:lasso:linear-regression:plug-in:sandwich:tmp2},  \eqref{eq:proof:lem:sec:high-dim:lasso:linear-regression:plug-in:sandwich:tmp4:bound-2-1}, and \eqref{eq:proof:thm:lasso-mod:de-bias:stat-inf:tmp7}, using a union bound over all matrix entries, and sub-Gaussian concentration inequalities \cite{Wainwright2015concentration-lecture} similar to \Cref{lem:sec:high-dim:lasso:linear-regression:plug-in:sandwich}'s proof, 
when $T \gg \left( (\log p + \log n) \|\htheta - \theta^\star\|_1 + \sigma \sqrt{(\log p + \log n) \log n} \right) \log p $, we also have 
\begin{align*}
&\left\| \tfrac{S_o}{T} \tsum_{t=1}^T \bar{g}_t \bar{g}_t^\top - {\widehat{S}}^{-1} \left( \tfrac{1}{n} \tsum_{i=1}^n (x_i^\top \htheta - y_i)^2 x_i x_i^\top \right) {\widehat{S}}^{-1}  \right\|_{\max } \nonumber \\
\lesssim & \sqrt{  \left( (\log p + \log n) \|\htheta - \theta^\star\|_1 + \sigma \sqrt{(\log p + \log n) \log n} \right) \tfrac{\log p}{T} } \nonumber \\
	& + \tfrac{1}{u} \left[ \tfrac{1}{\sqrt{T}}\tsum_{t=1}^T 0.95^{L_o^t} (1+ \sqrt{P(\theta_0)-P(\htheta)} 0.95^{\sum_{i=0}^{t-1} L_o^t}) + \sqrt{p}(\log p + \log n)  \sqrt{P(\theta_0)-P(\htheta)} 0.95^{\sum_{i=0}^{t-1} L_o^t} \right]  , 
\end{align*}
with probability at least $1-p^{-\Theta(-1)} - u$, where we used Markov inequality for the remainder term.

\subsection{Proof of \Cref{lem:sec:high-dim:lasso:linear-regression:plug-in:sandwich}}

We analyze the optimization problem conditioned on the data set $\{x_i\}_{i=1}^n$, which satisfies  \Cref{lem:proof:lasso:linear:cov:soft-thresh:guarantees} with probability at least $1 - p^{\Theta(-1)}$ when $n \gg b^2 \log p$.

Because we have 
\begin{align*}
& (x_i^\top \htheta - y_i)^2 \nonumber \\
= & (x_i^\top(\htheta - \theta^{\star}) - \epsilon_i)^2 \nonumber \\
= & \epsilon_i^2 - 2 \epsilon_i x_i^\top(\htheta - \theta^{\star}) + (x_i^\top(\htheta - \theta^{\star}))^2 , 
\end{align*}
we can write 
\begin{align}
& \sigma^2 \tfrac{1}{n} \tsum_{i=1}^n x_i x_i^\top - \tfrac{1}{n} \tsum_{i=1}^n (x_i^\top \htheta - y_i)^2 x_i x_i^\top  \nonumber \\
=&  \tfrac{1}{n} \tsum_{i=1}^n (\sigma^2 - \epsilon_i^2) x_i x_i^\top 
	 + \tfrac{1}{n} \tsum_{i=1}^n ( 2 \epsilon_i x_i^\top(\htheta - \theta^{\star}) - (x_i^\top(\htheta - \theta^{\star}))^2) x_i x_i^\top  . \label{eq:proof:lem:sec:high-dim:lasso:linear-regression:plug-in:sandwich:tmp6:decompose}
\end{align}

Conditioned on $\{x_i\}_{i=1}^n$, because $\epsilon_i \sim \Norm(0, \sigma^2)$ are i.i.d., and $\epsilon_i^2$ is sub-exponential, using Bernstein inequality \cite{Wainwright2015concentration-lecture}, we have 
\begin{align}
& \Pr \left[ \left| \tfrac{1}{n} \tsum_{i=1}^n \left(1 - \tfrac{\epsilon_i^2}{\sigma^2}\right) x_i(j) x_i(k) \right| > t \mid \{x_i\}_{i=1}^n \right] \nonumber \\
\lesssim & \exp\left(-n \min\left\{ \tfrac{t}{\max_{1\leq i \leq n} |x_i(j) x_i(k)|} , \left(\tfrac{t}{\max_{1\leq i \leq n} |x_i(j) x_i(k)|}\right)^2\right\}\right) , \label{eq:proof:lem:sec:high-dim:lasso:linear-regression:plug-in:sandwich:tmp1}
\end{align}
for $1 \leq j,k \leq p$, where $x_i(j)$ is the $j$\textsuperscript{th} coordinate of $x_i$. 

Because each $x_i(j)$ is $\Norm(0, \Theta(1))$ by our assumptions, using a union bound over all samples' coordinates we have 
\begin{align}
\max_{\substack{1 \leq i \leq n \\ 1 \leq j \leq p}} |x_i(j)| \lesssim \sqrt{\log p + \log n} , \label{eq:proof:lem:sec:high-dim:lasso:linear-regression:plug-in:sandwich:tmp2}
\end{align}
with probability at least $1 - (pn)^{-\Theta(1)}$ .

Combining \eqref{eq:proof:lem:sec:high-dim:lasso:linear-regression:plug-in:sandwich:tmp1} and \eqref{eq:proof:lem:sec:high-dim:lasso:linear-regression:plug-in:sandwich:tmp2}, 
and taking a union bound over all entries of the matrix $\tfrac{1}{n} \tsum_{i=1}^n (\sigma^2 - \epsilon_i^2) x_i x_i^\top $, 
when $n \gg \log p $, we have 
\begin{align}
\max_{1 \leq j, k \leq p } |(\tfrac{1}{n} \tsum_{i=1}^n (\sigma^2 - \epsilon_i^2) x_i x_i^\top)|_{jk} \lesssim \sigma^2 (\log p + \log n) \sqrt{\frac{\log p}{n}} , \label{eq:proof:lem:sec:high-dim:lasso:linear-regression:plug-in:sandwich:tmp3:bound-1}
\end{align}
with probability at least $(1 - (pn)^{-\Theta(1)})(1 - p^{-\Theta(1)}) = 1 - (pn)^{-\Theta(1)} - p^{-\Theta(1)}$. 

Because $\epsilon_i \sim \Norm(0, \sigma^2)$, by a union bound, we have 
\begin{align}
\max_{1 \leq i \leq n} |\epsilon_i| \lesssim \sigma \sqrt{\log n } , \label{eq:proof:lem:sec:high-dim:lasso:linear-regression:plug-in:sandwich:tmp4:bound-2-1}
\end{align}
with probability at least $1 - n^{-\Theta(1)}$. 

Using \eqref{eq:proof:lem:sec:high-dim:lasso:linear-regression:plug-in:sandwich:tmp2}, we have  
\begin{align}
& \max_{1 \leq i \leq n } |x_i^\top(\htheta - \theta^{\star})|  \nonumber \\
\leq & \| \htheta - \theta^\star \|_1 \max_{1 \leq i \leq n } \max_{1 \leq j \leq p} |x_i(j)| \nonumber \\ 
\lesssim & s \left(\sigma + \|\theta^{\star}\|_1\right) \sqrt{\tfrac{\log p}{n} (\log p + \log n)} \lesssim  s \left( \sigma + \sqrt{s} \right)\sqrt{\tfrac{\log p}{n} (\log p + \log n)} ,  \label{eq:proof:lem:sec:high-dim:lasso:linear-regression:plug-in:sandwich:tmp5:bound-2-2}
\end{align}
with probability at least $1- p^{-\Theta(1)} - (pn)^{-\Theta(1)}$.

Combining \eqref{eq:proof:lem:sec:high-dim:lasso:linear-regression:plug-in:sandwich:tmp2}, \eqref{eq:proof:lem:sec:high-dim:lasso:linear-regression:plug-in:sandwich:tmp3:bound-1}, \eqref{eq:proof:lem:sec:high-dim:lasso:linear-regression:plug-in:sandwich:tmp4:bound-2-1}, \eqref{eq:proof:lem:sec:high-dim:lasso:linear-regression:plug-in:sandwich:tmp5:bound-2-2}, 
and using \eqref{eq:proof:lem:sec:high-dim:lasso:linear-regression:plug-in:sandwich:tmp6:decompose}, 
when $n \gg \log p$, we have
\begin{align}
& \max_{1 \leq j, k \leq p} \left| \left(  \tfrac{1}{n} \tsum_{i=1}^n (x_i^\top \htheta - y_i)^2 x_i x_i^\top  -  \sigma^2 \tfrac{1}{n} \tsum_{i=1}^n x_i x_i^\top  \right)_{jk}  \right| \nonumber \\ 
\lesssim &  \sigma^2 (\log p + \log n) \sqrt{\tfrac{\log p}{n}} + \sigma s \left(\sigma + \|\theta^{\star}\|_1\right) (\log p + \log n)^{\frac{3}{2}} \sqrt{\tfrac{\log p \cdot \log n}{n} } \nonumber \\
	& + s^2 \left( \sigma +\|\theta^{\star}\|_1 \right)^2  (\log p + \log n)^2 \tfrac{\log p}{n} , \label{eq:proof:lem:sec:high-dim:lasso:linear-regression:plug-in:sandwich:tmp6}
\end{align}
with probability at least $1 - p^{-\Theta(1)} - n^{-\Theta(1)}$. 

Combining \eqref{eq:proof:lem:sec:high-dim:lasso:linear-regression:plug-in:sandwich:tmp6} and \eqref{eq:proof:thm:lasso-mod:de-bias:stat-inf:tmp7}, when $n \gg \max\{b^2,  \tfrac{1}{{D_\Sigma}^2}\} \log p$, we have
\begin{align*}
& \max_{1 \leq j, k \leq p} \left| \left( \widehat{S}^{-1} \left( \tfrac{1}{n} \tsum_{i=1}^n (x_i^\top \htheta - y_i)^2 x_i x_i^\top \right) \widehat{S}^{-1} -  \sigma^2 \widehat{S}^{-1} \left( \tfrac{1}{n} \tsum_{i=1}^n x_i x_i^\top \right) \widehat{S}^{-1} \right)_{jk}  \right| \nonumber \\ 
\lesssim & \tfrac{1}{{D_\Sigma}^2} \left( \sigma^2   + \sigma s \left(\sigma + \|\theta^{\star}\|_1\right) \sqrt{\log p + \log n} \sqrt{ \log n} + s^2 \left( \sigma +\|\theta^{\star}\|_1 \right)^2  (\log p + \log n) \sqrt{\tfrac{\log p}{n}} \right) (\log p + \log n) \sqrt{\tfrac{\log p}{n}} , 
\end{align*}
with probability at least $1 - p^{-\Theta(1)} - n^{-\Theta(1)}$.

%
%

%
%
%
%
%

\section{Technical lemmas}

\subsection{\Cref{lem:bubeck:strongly-convex-bound}}

Next lemma is a well known property of convex functions (Lemma 3.11  of \cite{bubeck2015convex}).
\begin{lemma}
\label{lem:bubeck:strongly-convex-bound}
For a $\alpha$ strongly convex and $\beta$ smooth function $F(x)$, 
we have 
\begin{align*}
\langle \nabla F(x_1) - \nabla F(x_2) , x_1 - x_2 \rangle 
	\geq& \frac{\alpha \beta}{\alpha+ \beta}\| x_1 - x_2 \|_2^2 + \frac{1}{\beta +\alpha} \| \nabla F(x_1) - \nabla F(x_2) \|_2^2  \nonumber \\
\geq & \frac{1}{2} \alpha \| x_1 - x_2 \|_2^2 + \frac{1}{2 \beta} \| \nabla F(x_1) - \nabla F(x_2) \|_2^2 . 
\end{align*}
\end{lemma}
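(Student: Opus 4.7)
The plan is to reduce the claim to the classical \emph{co-coercivity} inequality for convex smooth functions. Specifically, I would introduce the auxiliary function $G(x) := F(x) - \tfrac{\alpha}{2}\|x\|_2^2$. By the $\alpha$-strong convexity of $F$, the function $G$ is convex; by the $\beta$-smoothness of $F$ and the fact that subtracting the quadratic $\tfrac{\alpha}{2}\|x\|_2^2$ shifts the top of the (sub)gradient Lipschitz constant down by exactly $\alpha$, $G$ is $(\beta - \alpha)$-smooth. The edge case $\alpha = \beta$ makes $G$ affine, rendering both sides of the target inequality consistent, so it can be handled trivially.

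Next I would invoke the standard co-coercivity bound: for any convex, $L$-smooth function $G$,
\begin{equation*}
\langle \nabla G(x_1) - \nabla G(x_2),\, x_1 - x_2 \rangle \;\geq\; \tfrac{1}{L}\,\|\nabla G(x_1) - \nabla G(x_2)\|_2^2.
\end{equation*}
The short proof of this, which I would sketch for completeness, is: apply the descent lemma to the shifted function $\phi(x) := G(x) - \langle \nabla G(x_1), x\rangle$ (convex, $L$-smooth, minimized at $x_1$) at the point $y = x_2 - \tfrac{1}{L}(\nabla G(x_2) - \nabla G(x_1))$, obtain $\phi(x_1) \leq \phi(y) \leq \phi(x_2) - \tfrac{1}{2L}\|\nabla G(x_2) - \nabla G(x_1)\|_2^2$, and symmetrize by swapping $x_1 \leftrightarrow x_2$ and adding the two resulting inequalities.

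Substituting $\nabla G(x) = \nabla F(x) - \alpha x$ and $L = \beta - \alpha$, and writing $u := \nabla F(x_1) - \nabla F(x_2)$, $v := x_1 - x_2$, the co-coercivity bound becomes
\begin{equation*}
\langle u, v \rangle - \alpha \|v\|_2^2 \;\geq\; \tfrac{1}{\beta - \alpha}\bigl(\|u\|_2^2 - 2\alpha\langle u, v\rangle + \alpha^2 \|v\|_2^2\bigr).
\end{equation*}
Multiplying through by $\beta - \alpha$ and collecting like terms yields $(\alpha + \beta)\langle u, v\rangle \geq \|u\|_2^2 + \alpha\beta\,\|v\|_2^2$, which, divided by $\alpha + \beta$, is exactly the first claimed inequality. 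The second inequality is then purely arithmetic: since $\alpha \leq \beta$, we have $\tfrac{\alpha\beta}{\alpha+\beta} \geq \tfrac{\alpha}{2}$ and $\tfrac{1}{\alpha+\beta} \geq \tfrac{1}{2\beta}$, so each coefficient on the right-hand side of the first inequality dominates the corresponding coefficient in the second.

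There is no real obstacle here beyond careful bookkeeping in the substitution step; this lemma is essentially Theorem~2.1.12 of Nesterov and is stated verbatim in Bubeck's monograph (already cited in the excerpt). For a clean presentation I would likely just cite the monograph rather than reproduce the descent-lemma argument, since the reduction $G = F - \tfrac{\alpha}{2}\|\cdot\|_2^2$ plus co-coercivity is the standard recipe and the only arithmetic involved is the one-line rearrangement above.
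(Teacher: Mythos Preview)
Your proposal is correct and, in fact, goes further than the paper: the paper does not prove this lemma at all but simply cites it as Lemma~3.11 of Bubeck's monograph. Your reduction via $G(x)=F(x)-\tfrac{\alpha}{2}\|x\|_2^2$ followed by co-coercivity is precisely the standard argument behind that reference, and your final remark---that citing the monograph suffices---is exactly what the paper does.
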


\subsection{\Cref{lem:geometric-like-series-bound}}

Next lemma provides a bound on a geometric-like sequence.
\begin{lemma}
\label{lem:geometric-like-series-bound}

Suppose we have a sequence 
\begin{align*}
a_{t+1} = (1 - \kappa t^{-d}) a_t + C t^{-pd} ,
\end{align*}
where $a_1 \geq 0$, $ 0  < \kappa < 1$, $p\geq 2$ and $d \in (\frac{1}{2}, 1)  $ is the decaying rate.

Then, $\forall 1 \leq s \leq t$ we have
\begin{align*}
a_t &\leq C \frac{1}{pd-1} (1 - t^{1-pd} )\exp\left(-\kappa \frac{1}{1-d} \left((t+1)^{1-d} - (s+1)^{1-d}\right)\right)
	+ a_1 s^{-(p-1)d} \frac{1}{\kappa} . 
\end{align*}

When we assume that $a_1, C, \kappa, p, d$ are all constants, we have
\begin{align*}
a_t = O(t^{-(p-1)d}) . 
\end{align*}

\end{lemma}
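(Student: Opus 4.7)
The plan is to unroll the recursion and split the resulting sum at the index $s$, then bound each piece with standard geometric-sum/integral-comparison tricks. Concretely, iterating the recursion gives
\begin{equation*}
a_t = \Bigl(\prod_{k=1}^{t-1}(1-\kappa k^{-d})\Bigr) a_1 \;+\; \sum_{k=1}^{t-1} \Bigl(\prod_{j=k+1}^{t-1}(1-\kappa j^{-d})\Bigr)\, C\, k^{-pd}.
\end{equation*}
Using $1-x\le e^{-x}$ on the product and the integral estimate $\sum_{j=m}^{n} j^{-d}\ge \int_{m}^{n+1} x^{-d}\,dx = \tfrac{(n+1)^{1-d}-m^{1-d}}{1-d}$, I can control each factor $\prod_{j=k+1}^{t-1}(1-\kappa j^{-d})\le \exp\!\bigl(-\tfrac{\kappa}{1-d}((t)^{1-d}-(k+1)^{1-d})\bigr)$, and similarly for the leading product.

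Next I would split the source sum at $s$. For the head $k\le s$, every product is uniformly bounded by $\exp\!\bigl(-\tfrac{\kappa}{1-d}((t{+}1)^{1-d}-(s{+}1)^{1-d})\bigr)$, which pulls out in front, and what remains is $\sum_{k=1}^{s} k^{-pd}\le \int_{1}^{t} x^{-pd}\,dx = \tfrac{1-t^{1-pd}}{pd-1}$; this reproduces the first term in the claimed bound. For the tail $k\ge s+1$, the key trick is the telescoping identity
\begin{equation*}
\sum_{k=s+1}^{t-1} \Bigl(\prod_{j=k+1}^{t-1}(1-\kappa j^{-d})\Bigr)\kappa k^{-d} \;=\; 1-\prod_{j=s+1}^{t-1}(1-\kappa j^{-d}) \;\le\; 1,
\end{equation*}
combined with $k^{-pd}=k^{-(p-1)d}\cdot k^{-d}\le s^{-(p-1)d}k^{-d}$ for $k\ge s$; this yields $\tfrac{C}{\kappa}\,s^{-(p-1)d}$. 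The initial-condition term $\prod_{k=1}^{t-1}(1-\kappa k^{-d})a_1$ is dominated by the same exponential appearing in the head, so it can be absorbed into the first term of the bound (or, after a slight rearrangement, into the $a_1 s^{-(p-1)d}/\kappa$ piece).

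Finally, for the asymptotic corollary $a_t=O(t^{-(p-1)d})$, one chooses $s=\lfloor t/2\rfloor$: then $(t{+}1)^{1-d}-(s{+}1)^{1-d}=\Theta(t^{1-d})$, so the first term is exponentially small in $t^{1-d}$ (killing all polynomial factors), while the second term becomes $\tfrac{a_1}{\kappa}(t/2)^{-(p-1)d}=O(t^{-(p-1)d})$. The main obstacle is really bookkeeping: matching constants $C/(pd-1)$, the $(1-t^{1-pd})$ factor, and the exact exponential argument $(t{+}1)^{1-d}-(s{+}1)^{1-d}$ requires being careful about whether one integrates from $k$ or $k{+}1$ in the integral comparisons, and ensuring the $pd>1$ hypothesis (which follows from $p\ge 2$ and $d>1/2$) is used to make $\sum k^{-pd}$ convergent. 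No single step is deep — the proof is an exercise in careful summation — but choosing the right split and applying the telescoping identity in the tail are what give the correct polynomial rate rather than a weaker one.
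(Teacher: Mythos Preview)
Your proposal is correct and follows essentially the same route as the paper's own proof: unroll the recursion, split the source sum at $s$, bound the head via $1-x\le e^{-x}$ plus an integral comparison for $\sum k^{-pd}$, bound the tail via the telescoping identity $\sum_{k}\kappa k^{-d}\prod_{j>k}(1-\kappa j^{-d})\le 1$ after pulling out $s^{-(p-1)d}$, and set $s=\lfloor t/2\rfloor$ for the asymptotic. The only cosmetic difference is that the paper treats the $a_1\prod(1-\kappa i^{-d})$ term as a separate exponentially decaying piece rather than ``absorbing'' it, and (as you implicitly noticed) the stated bound's second term should really carry $C$ rather than $a_1$---both proofs in fact establish the slightly different three-term estimate, which suffices for the $O(t^{-(p-1)d})$ conclusion that is all the paper ever uses.
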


\begin{proof}

Unrolling the recursion, we have

\begin{align*}
a_t = C \underbrace{ \sum_{i=1}^{t-1} ( \prod_{j=i+1}^{t-1} (1-\kappa j^{-d}) ) i^{-pd} }_{\mytag{[1]}{eq:lem:geometric-like-series-bound:tmp1}}+ a_1  \underbrace{ \prod_{i=1}^{t-1} (1 - \kappa i^{-d}) }_{\mytag{[2]}{eq:lem:geometric-like-series-bound:tmp2}}  .
\end{align*}

Splitting term \ref{eq:lem:geometric-like-series-bound:tmp1} into two parts, we have 
\begin{align*}
& \sum_{i=1}^{t-1} \left( \prod_{j=i+1}^{t-1} (1-\kappa j^{-d}) \right) i^{-pd} \nonumber \\
=  & \sum_{i=1}^{s-1} \left( \prod_{j=i+1}^{t-1} (1-\kappa j^{-d}) \right) i^{-pd} + \sum_{i=s}^{t-1} \left( \prod_{j=i+1}^{t-1} (1-\kappa j^{-d}) \right) i^{-pd} . 
\end{align*}

For the first part, 
we have 
\begin{align*}
& \sum_{i=1}^{s-1} ( \prod_{j=i+1}^{t-1} (1-\kappa j^{-d}) ) i^{-pd}  \nonumber \\
\leq & \left( \prod_{j=s}^{t-1}   (1-\kappa j^{-d} ) \right) \sum_{i=1}^{s-1}  i^{-pd} \nonumber \\
\leq & \frac{1}{pd-1} (1 - t^{1-pd} )\exp\left(-\kappa \frac{1}{1-d} ((t+1)^{1-d} - (s+1)^{1-d})\right)
\end{align*}
where we used  
\begin{align*}
&\sum_{i=r}^s i^{-pd} \nonumber \\
\leq & \int_r^{s+1} u^{-pd}  \, du \nonumber \\
\leq & \frac{1}{pd-1} (r^{1-pd} -(s+1)^{1-pd}) . 
\end{align*}

For term \ref{eq:lem:geometric-like-series-bound:tmp2}, notice that for $1 \leq r \leq s$, using $1 - x \leq \exp(-x)$ when $x \in [0, 1]$, we have
\begin{align*}
\prod_{i=r}^s (1-\kappa i^{-d}) \leq \exp(-\kappa \tsum_{i=r}^s i^{-d}) ,
\end{align*}
and using the fact that 
\begin{align*}
\sum_{i=r}^s i^{-d} & \geq \int_{r}^{s+1} (u+1)^{-d} \,du \nonumber \\
& =  \frac{1}{1-d} \left((s+2)^{1-d} - (r+1)^{1-d}\right), 
\end{align*}
we have 
\begin{align*}
\prod_{i=1}^{t-1} (1 - \kappa i^{-d}) \leq& \exp\left(-\kappa \tfrac{1}{1-d} (t^{1-d} - 2^{1-d})\right) . 
\end{align*}

For the second part, 
we have
\begin{align*}
& \sum_{i=s}^{t-1} \left( \prod_{j=i+1}^{t-1} (1-\kappa j^{-d}) \right) i^{-pd} \nonumber \\
\leq & s^{-(p-1)d} \sum_{i=s}^{t-1} \left( \prod_{j=i+1}^{t-1} (1-\kappa j^{-d}) \right) i^{-d} \nonumber \\
= & s^{-(p-1)d} \frac{1}{\kappa}  \sum_{i=s}^{t-1} \left( \prod_{j=i+1}^{t-1} (1-\kappa j^{-d}) \right) \kappa i^{-d} \nonumber \\
= & s^{-(p-1)d}  \frac{1}{\kappa} \left(1 - \prod_{i=s}^{t-1} (1-\kappa i^{-d})\right) \nonumber\\ 
\leq& s^{-(p-1)d} \frac{1}{\kappa} ,
\end{align*}
where we used the fact that 
\begin{align*}
& \sum_{i=s}^{t-1} \kappa i^{-d}  \prod_{j=i+1}^{t-1} (1-\kappa j^{-d} ) \nonumber \\
= & 1-\prod_{i=s}^{t-1} (1-\kappa j^{-d} ) \nonumber \\
< & 1 . 
\end{align*}

When we assume that $a_1, C, \kappa, p, d$ are all constants, setting $s=\lfloor \frac{n}{2} \rfloor $, we have
\begin{align*}
a_t = O(t^{-(p-1)d}) .
\end{align*}


\end{proof}

%
%

\newpage

\section{Experiments}

\subsection{Synthetic data}

\subsubsection{Low dimensional problems}
\label{append:subsubsec:exp:sim:low-dim}

Here, we provide the exact configurations for linear/logistic regression examples  provided in \Cref{tab:exp:coverage:linear} and \Cref{tab:exp:coverage:logistic}.

\paragraph{Linear regression.}
We consider the model $y = \left\langle \sfrac{ [1, \cdots , 1]^\top }{\sqrt{10}} , x \right\rangle + \epsilon$, where $x \sim \Norm(0, \Sigma) \in \Real^{10}$ and $\epsilon \sim \Norm(0, 0.7^2)$, 
with  100 i.i.d. data points. 

\textbf{\textit{Lin1:}} We used $\Sigma = I$. 
For \Cref{alg:stat-inf-spnd}, we set $T=100$, $d_o =d_i =\sfrac{2}{3}$, $\rho_0 = 0.1$, $L = 200$, $\tau_0 = 20$, $S_o =S_i= 10$. 
In bootstrap we used 100 replicates.
For averaged SGD, we used 100 averages each of length 50, with step size $0.7 \cdot (t+1)^{-\sfrac{2}{3}}$ and batch size 10. 

\textbf{\textit{Lin2:}} We used $\Sigma_{jk} = 0.4^{|j-k|}$. 
For \Cref{alg:stat-inf-spnd}, we set $T=100$, $d_o =d_i =\sfrac{2}{3}$, $\rho_0 = 0.7$, $L = 100$, $\tau_0 = 1$, $S_o =S_i= 10$. 
In bootstrap we used 100 replicates.
For averaged SGD, we used 100 averages each of length 50, with step size $(t+1)^{-\sfrac{2}{3}}$ and batch size 10. 

\paragraph{Logistic regression.} 
Although logistic regression does not satisfy strong convexity, experimentally \Cref{alg:stat-inf-spnd} still gives valid confidence intervals (\cite{gadat2017optimal} recently has shown that SGD in logistic regression behaves similar to strongly convex problems). 
We consider the model $\Pr[y=1] = \Pr[y=0] = \sfrac{1}{2}$ 
and $x \mid y \sim \Norm(\sfrac{0.1}{\sqrt{10}} \cdot  [1, \cdots, 1]^\top, \Sigma) \in \Real^{10}$, 
with  100 i.i.d. data points.  
Because in bootstrap resampling the Hessian is singular for some replicates, 
we use jackknife and solve each replicate using Newton's method, 
which approximately needs 25 steps per replicate. 

\textbf{\textit{Log1:}} 
We used $\Sigma = I$. 
For \Cref{alg:stat-inf-spnd}, we set $T=50$, $d_o =d_i =\sfrac{2}{3}$, $\rho_0 = 0.1$, $L = 100$, $\tau_0 = 2$, $S_o =S_i= 10$, $\delta_0 = 0.01$.
For averaged SGD, we used 50 averages each of length 100, with step size $2 \cdot (t+1)^{-\sfrac{2}{3}}$ and batch size 10. 

\textbf{\textit{Log2:}}  
We used $\Sigma_{jk} = 0.4^{|j-k|}$. 
For \Cref{alg:stat-inf-spnd}, we set $T=50$, $d_o =d_i =\sfrac{2}{3}$, $\rho_0 = 0.1$, $L = 100$, $\tau_0 = 5$, $S_o =S_i= 10$, $\delta_0 = 0.01$
For averaged SGD, we used 50 averages each of length 100, with step size $ 5 \cdot (t+1)^{-\sfrac{2}{3}}$ and batch size 10.

\paragraph{Calibration.} 
\label{append:subsubsec:exp:sim:low-dim:calibration} 
Here, we give empirical results on calibrating confidence intervals (\cite{efron1994introduction}, Ch.18; \cite{politis2012subsampling}, Ch. 9) produced by our approximate Newton procedure.   
We consider the model $y = \left\langle \sfrac{ [1, \cdots , 1]^\top }{\sqrt{20}} , x \right\rangle + \epsilon$, where $x \sim \Norm(0, \Sigma) \in \Real^{20}$ and $\epsilon \sim \Norm(0, 0.7^2)$, 
with  200 i.i.d. data points. 
We ran 100 simulations.
In each simulation, 
we bootstrapped the dataset 100 times, and computed confidence intervals on each bootstrap replicate using our approximate Newton procedure, bootstrap, and inverse Fisher information. 
For each method, we then used grid search to find a multiplier such that the empirical point estimate is covered by the bootstrap confidence intervals 95\% of the time. 
Average 95\% confidence interval coverage and length after calibration are given in \Cref{tab:exp:coverage:calibrated}. 

\begin{table}[!t]
\rowcolors{2}{white}{black!05!white}
\centering
 \begin{tabular}{  c c c c c   }
 \toprule
   Approximate Newton & & Bootstrap & & Inverse Fisher information  \\
 \cmidrule{1-1} \cmidrule{3-3} \cmidrule{5-5}   
  (0.951, 0.224) & & (0.946  0.205) & & (0.966, 0.212)  \\
 \bottomrule
 \end{tabular}
\caption{Average 95\% confidence interval (coverage, length) after calibration}\label{tab:exp:coverage:calibrated}
\end{table}

\subsubsection{High dimensional linear regression}
\label{append:subsubsec:exp:sim:high-dim-linear}

For comparison with de-biased LASSO \cite{javanmard2015biasing,van2014asymptotically}, 
we use the de-biased LASSO estimator with known covariance (``oracle'' de-biased LASSO estimator)
\begin{align*}
\htheta^\diff_{\mathrm{oracle}} = \htheta_{\mathrm{LASSO}} 
	+ \tfrac{1}{n} \cdot \Sigma^{-1} \left(  \tfrac{1}{n} \sum_{i=1}^n y_i x_i 
	 - \sum_{i=1}^n x_i x_i^\top \htheta_{\mathrm{LASSO}}  \right), 
\end{align*} 
and its corresponding statistical error covariance estimate 
\begin{align*}
\sigma^2 \cdot \Sigma^{-1} \left( \tfrac{1}{n} \sum_{i=1}^n x_i x_i^\top \right) \Sigma^{-1} ,
\end{align*}
which assumes that the true inverse covariance $\Sigma^{-1}$ and observation noise variance $\sigma^2$ are known.

\paragraph{Confidence interval visualization.} 
We use 600 i.i.d. samples from a model with $\Sigma = I$, $\sigma = 0.7$, 
$\theta^\star = [  \sfrac{1}{\sqrt{8}}, \cdots,  \sfrac{1}{\sqrt{8}} , 0, \cdots, 0]^\top \in \Real^{1000}$ which is 8-sparse.  
\Cref{fig:exp:sim:high-dim-linear:CI:plot} shows 95\% confidence intervals for the first 20 coordinates. 
The average confidence interval length is 0.14 and average coverage is 0.83.
Additional experimental results, including p-value distribution   under the null hypothesis, are presented in \Cref{append:subsubsec:exp:sim:high-dim-linear}. 

\begin{figure}
  \begin{center}
    \includegraphics[width=0.5\textwidth]{./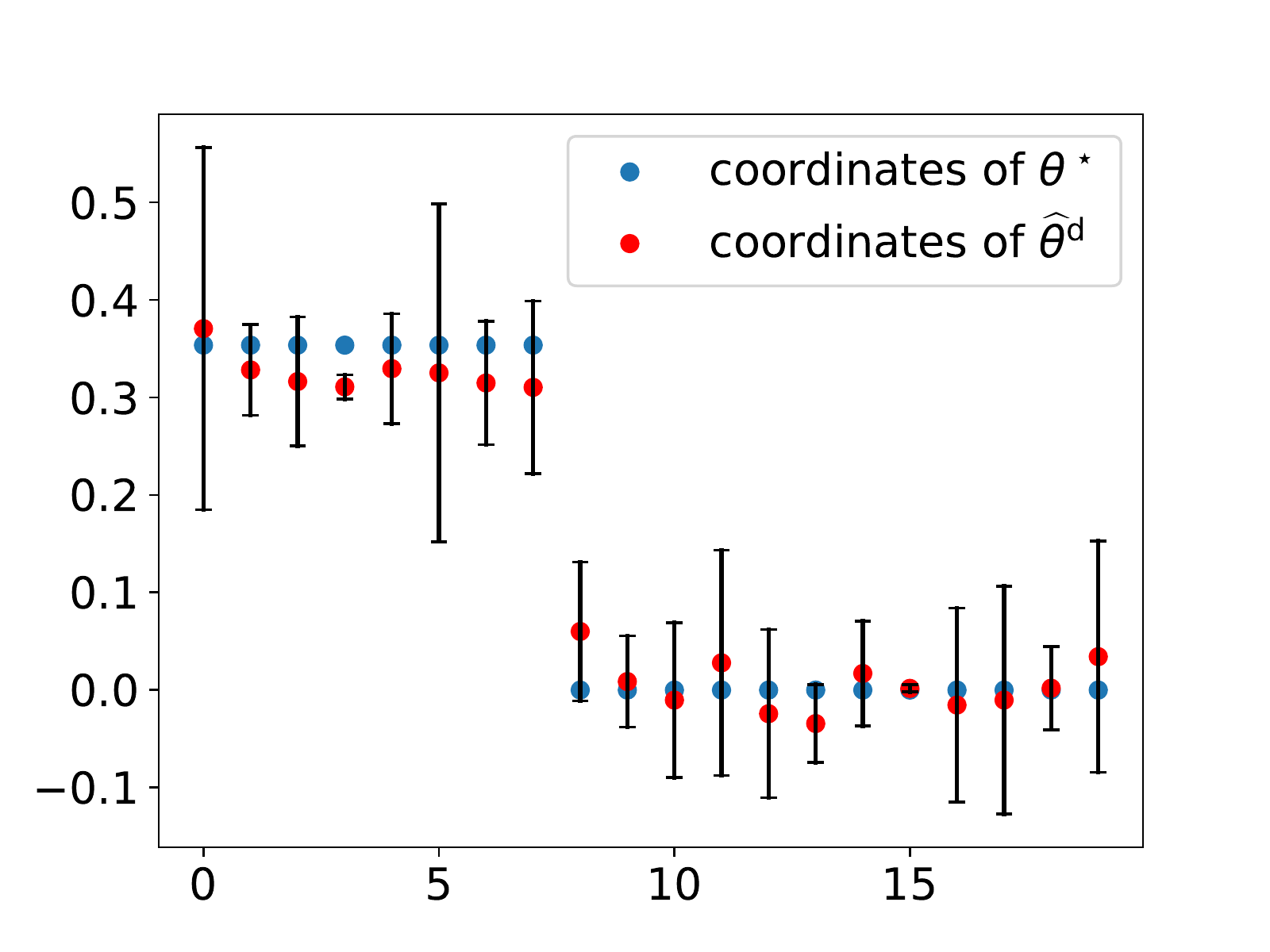}
  \end{center}
  \caption{95\% confidence intervals}
  \label{fig:exp:sim:high-dim-linear:CI:plot}
\end{figure}

\paragraph{Comparison with de-biased LASSO.}
We use 600 i.i.d. samples from a model with $\Sigma = I$, $\sigma = 0.7$, 
$\theta^\star = [  \sfrac{1}{\sqrt{8}}, \cdots,  \sfrac{1}{\sqrt{8}} , 0, \cdots, 0]^\top \in \Real^{1000}$ which is 8-sparse.

For our method, the average confidence interval length is 0.14 and average coverage is 0.83.
For the de-biased LASSO estimator with known covariance, the average confidence interval length is 0.11 and average coverage is 0.98.

\begin{figure}[!h]
\centering
\includegraphics[width=.5\textwidth]{./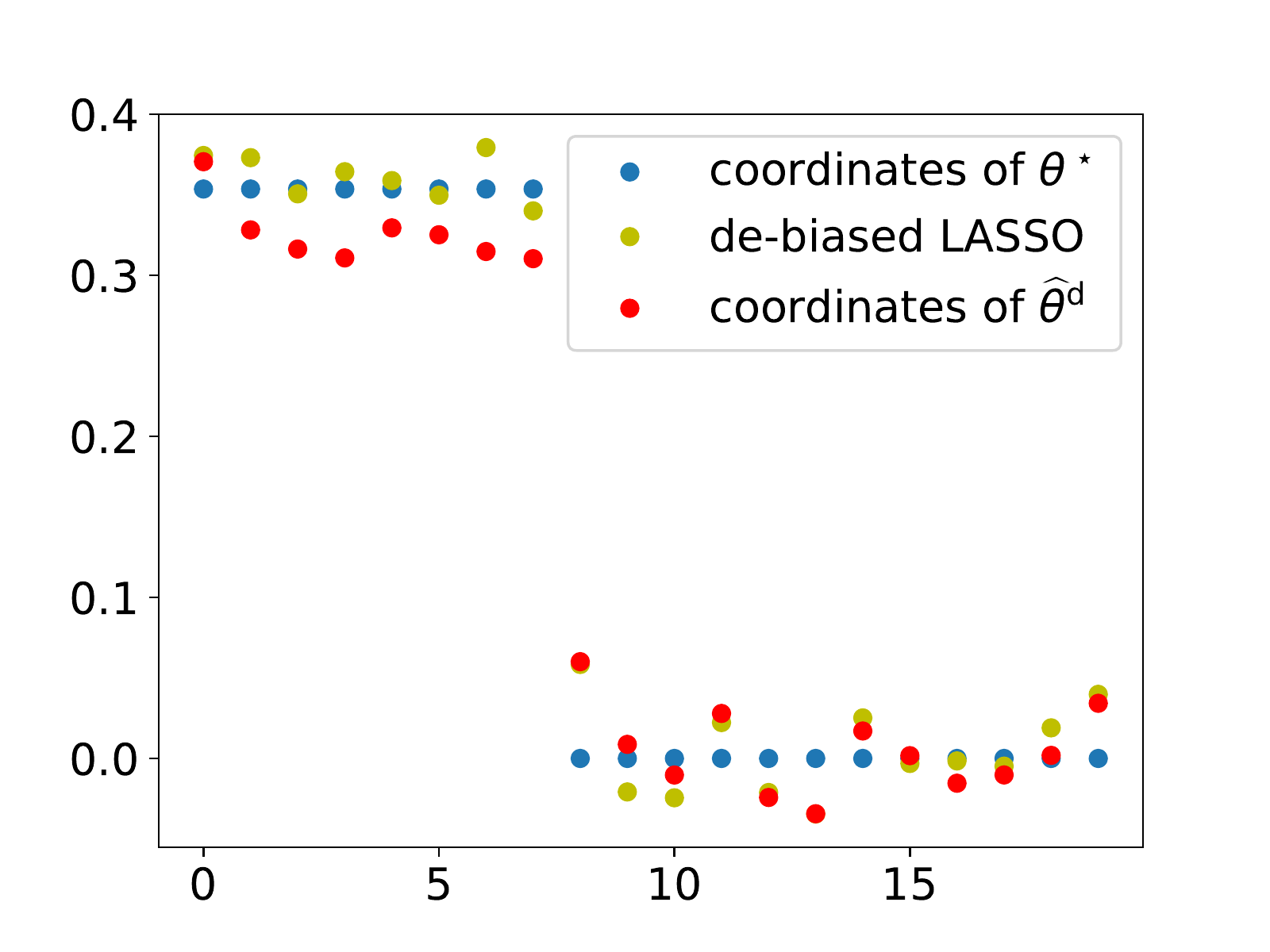}
\caption{Comparison of our de-biased estimator and oracle de-biased LASSO estimator}
\label{fig:exp:high-dim-linear:cmp-lasso-de-biased}
\end{figure}

%
%

\subsubsection{Time series analysis}
\label{append:subsubsec:exp:sim:time-series}

In our linear regression simulation, we generate i.i.d. random explanatory variables, 
and the observation noise is a 0-mean moving average (MA) process independent of the explanatory variables. 

For the linear model 
\begin{align*}
y_i = \langle x_i, \theta^\star \rangle + \epsilon_i ,
\end{align*}
$x_i \in \Real^{20}$ are i.i.d. samples generated from $\Norm \left( [1, 1, \dots, 1]^\top / \sqrt{k}, I \right)$ , 
and $\epsilon_i$ is a 0-mean moving average process
\begin{align*}
\epsilon_i = 0.6 \cdot z_i + 0.8 \cdot z_{i-1} , 
\end{align*}
where $z_i$ are i.i.d. $\Norm(0, 0.7^2)$. 

We ran 100 simulations, with each time series containing 200 samples. 
For our approximate Newton statistical inference procedure (\Cref{alg:stat-inf-spnd:time-series}), 
average 95\% confidence interval (coverage, length) is (0.929, 0.145), and it matches our theory.

\subsection{Real data}

\subsubsection{Neural network adversarial attack detection} 
\label{append:subsubsec:exp:nn:adversarial}

The adversarial perturbation  used in our experiments is shown in \Cref{fig:exp:real:nn:advsarial-pertub}. 
It is generated using the fast gradient sign method \cite{goodfellow2014explaining}
\Cref{fig:exp:real:nn:example:shirt} shows images in a ``Shirt'' example. 
\Cref{fig:exp:real:nn:example:tshirt-top} shows images in a ``T-shirt/top'' example.

\begin{figure}[!h]
\centering
\includegraphics[width=.3\textwidth]{./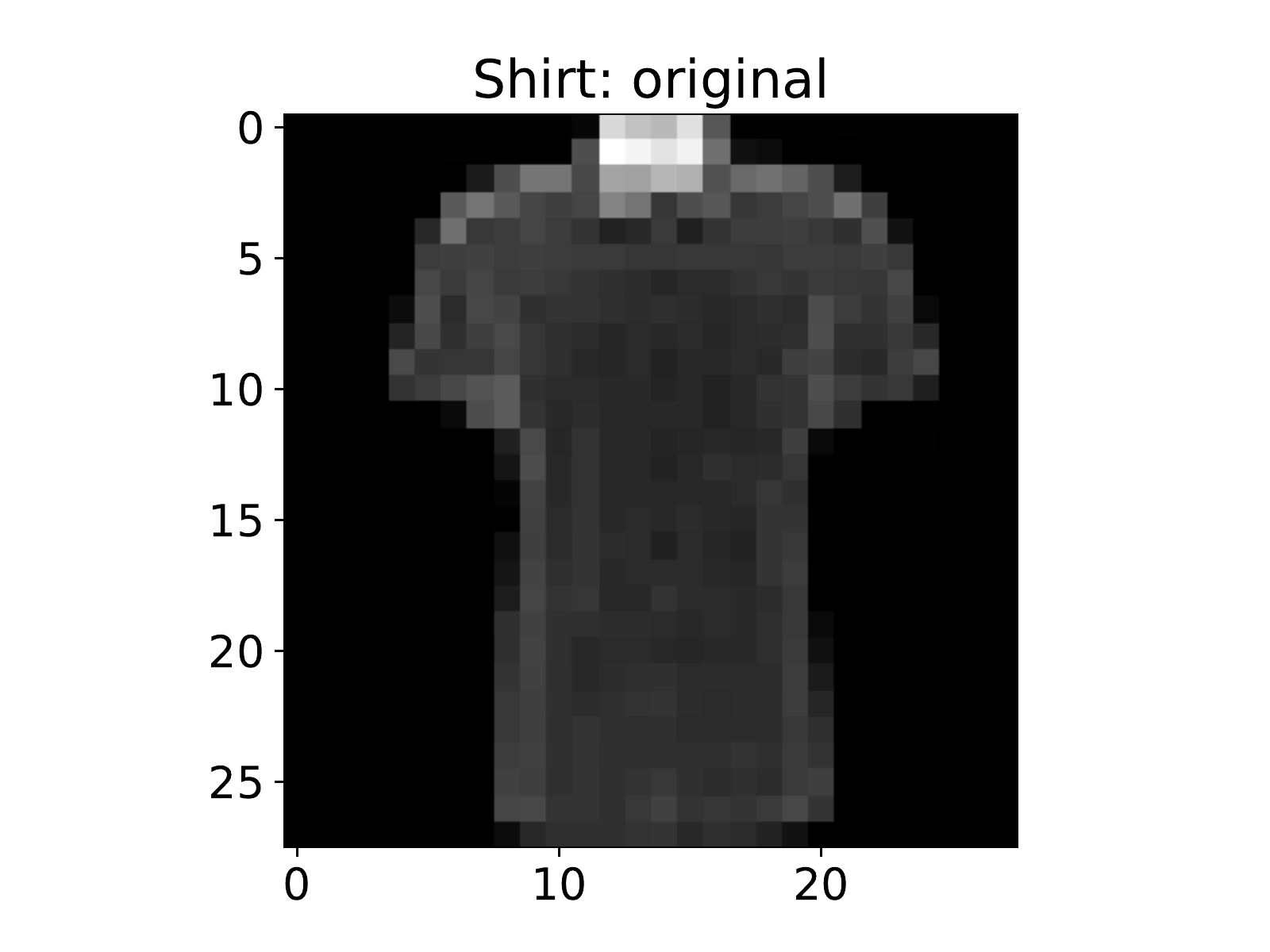}
\includegraphics[width=.3\textwidth]{./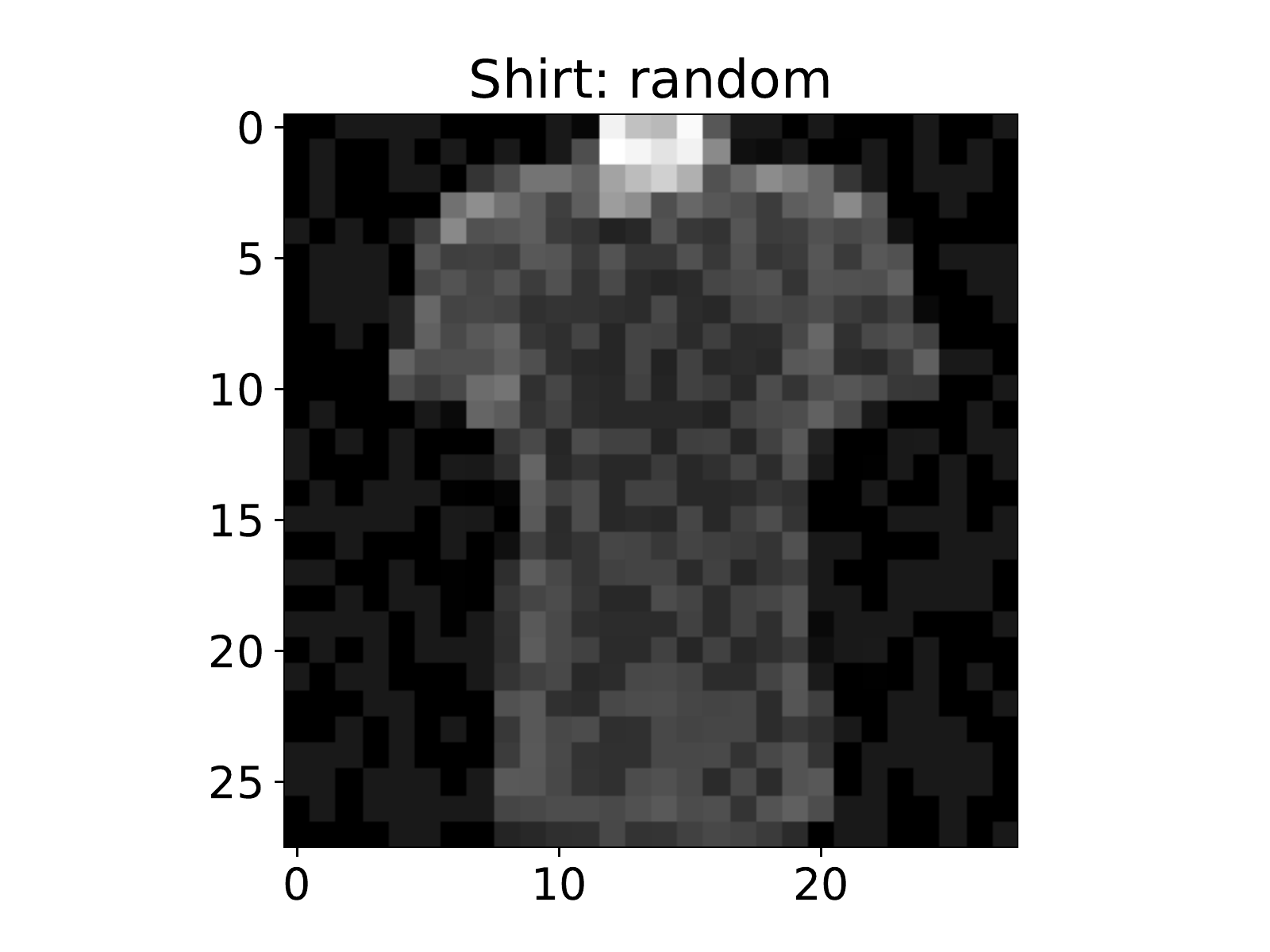}
\includegraphics[width=.3\textwidth]{./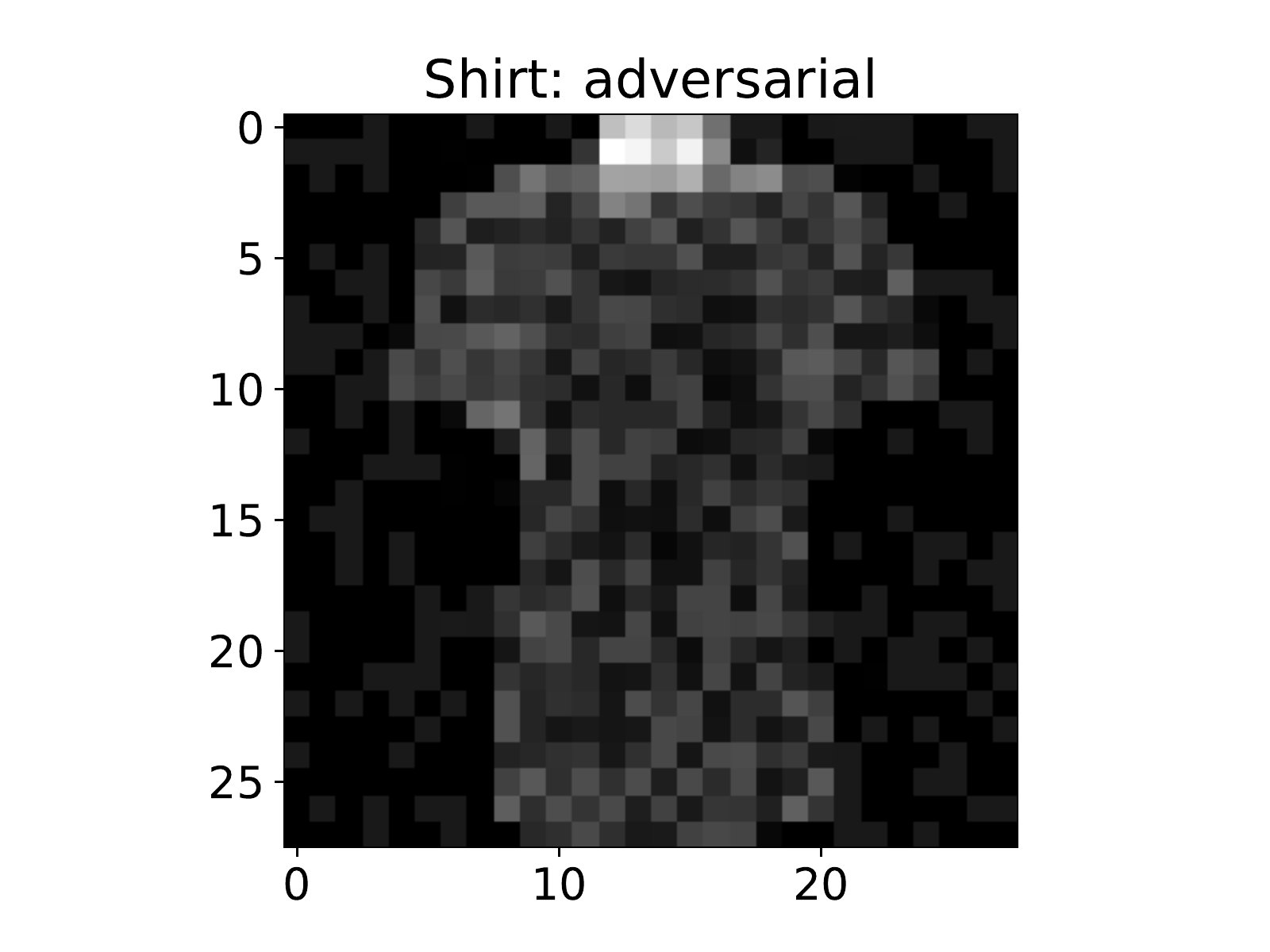}
\caption{``Shirt'' example}
\label{fig:exp:real:nn:example:shirt}
\end{figure}

\begin{figure}[!h]
\centering
\includegraphics[width=.3\textwidth]{./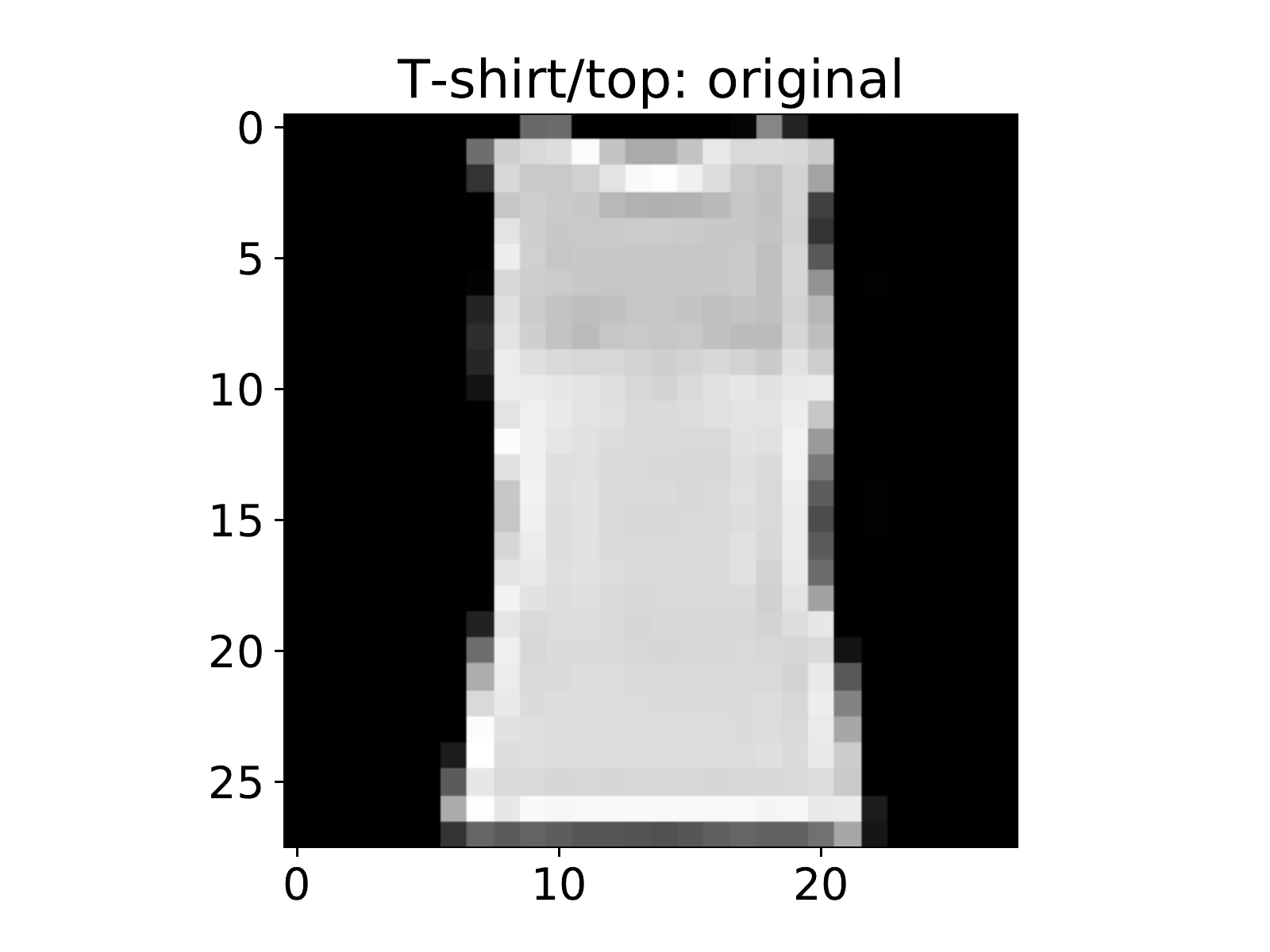}
\includegraphics[width=.3\textwidth]{./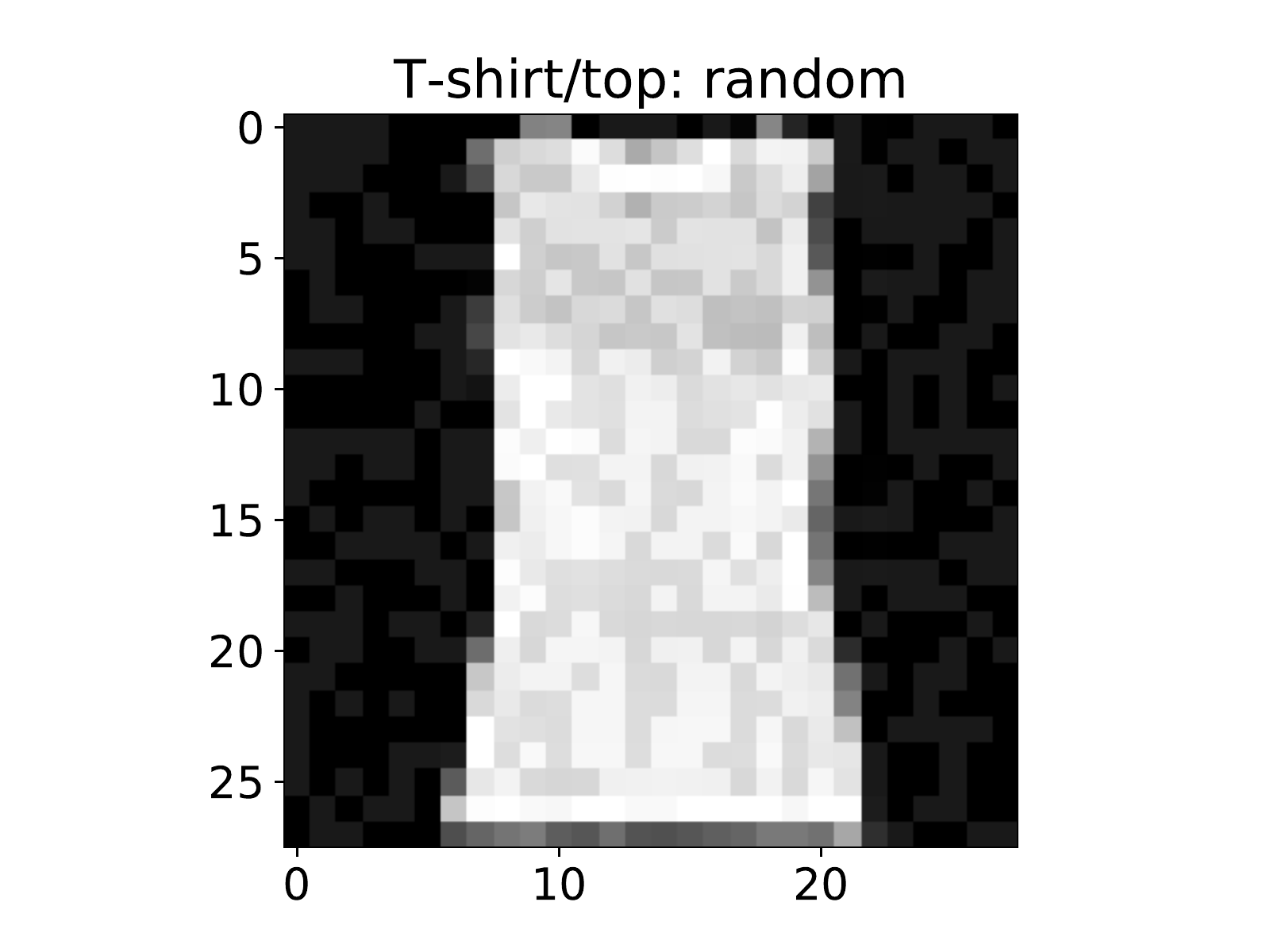}
\includegraphics[width=.3\textwidth]{./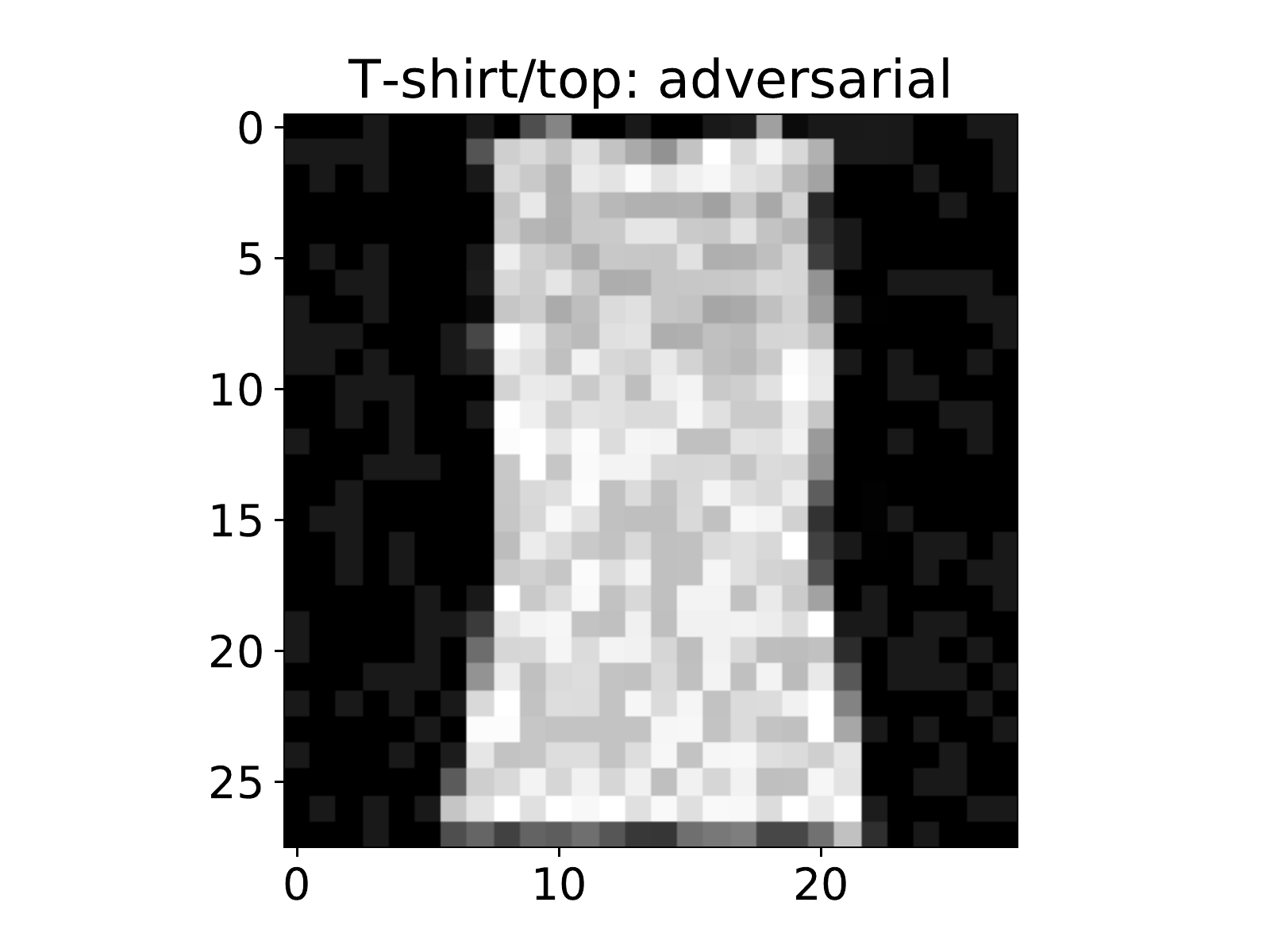}
\caption{``T-shirt/top'' example}
\label{fig:exp:real:nn:example:tshirt-top}
\end{figure}

\begin{figure}[!h]
\centering
\includegraphics[width=.3\textwidth]{./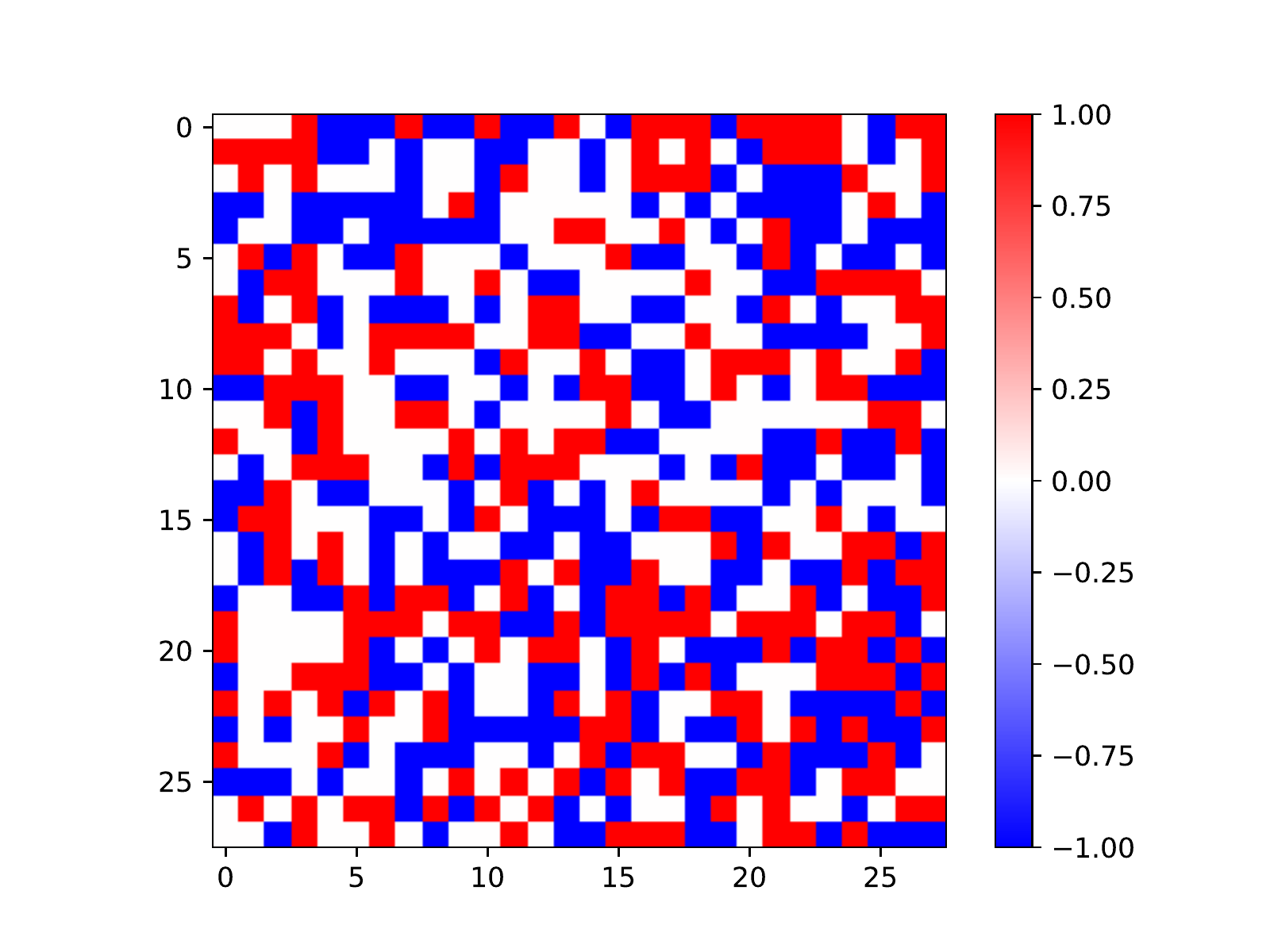}
\caption{Adversarial perturbation generated using the fast gradient sign method \cite{goodfellow2014explaining}}
\label{fig:exp:real:nn:advsarial-pertub}
\end{figure}

\subsubsection{High dimensional linear regression}
\label{append:subsubsec:exp:real:high-dim:riboflavin}

For both experiments, the hyper-parameters are chosen based on the results in \Cref{sec:high-dim:lasso:linear-regression}, 
where we estimate the true parameter's $\ell_1$ norm $\|\theta^\star\|_1$ and noise level $\sigma$ by vanilla LASSO with cross validation, 
using the LASSO solution's $\ell_1$ norm and LASSO residuals' 2\textsuperscript{nd} moment's square root. 
The covariance threshold is chosen so that it minimizes the thresholded covariance's condition number.

\paragraph{HIV drug resistance mutations dataset.} 
We apply our high dimensional inference procedure to the dataset in \cite{rhee2006genotypic}  to detect mutations related to HIV drug resistance. 
Our procedure is able to detect verified mutations in an expert dataset \cite{johnson2005update}, 
when we control the family-wise error rate (FWER) at $0.05$.

\begin{table}[!t]
\centering
\begin{tabular}{ | l | l | l  | }
\hline 
\multicolumn{2}{| c |}{Drug} & Mutations \\
\hline
\multirow{7}{*}{PI} & APV & 10F \\ \cline{2-3} 
 & ATV  & 33F, 43T, 84V \\  \cline{2-3} 
 & IDV  & 48V, 84A \\  \cline{2-3} 
 & LPV  & 46I \\  \cline{2-3} 
 & NFV  & 46L \\  \cline{2-3} 
 & RTV  & 10I, 54V \\  \cline{2-3}  
 & SQV   & 20R, 84V \\ 
 \hline
 \multirow{6}{*}{NRTI} &  3TC  & 184V \\ \cline{2-3} 
 & ABC  & 41L \\ \cline{2-3} 
 & AZT  & 41L, 210W \\ \cline{2-3} 
 & D4T  & 41L, 215Y \\ \cline{2-3} 
 & DDI  & 62V, 151M \\ \cline{2-3} 
 & TDF  & 41L, 75M  \\
 \hline
 \multirow{3}{*}{NNRTI} & DLV  & 228R \\ \cline{2-3} 
  & EFV  & 74V, 103N \\ \cline{2-3} 
  & NVP  & 103N, 181C \\ 
  \hline
\end{tabular}
\caption{HIV drug resistance related mutations detected by our high dimensional inference procedure}
\label{tab:exp:real:high-dim:hiv}
\end{table}

\paragraph{Riboflavin (vitamin B2) production rate data set.}
For the vanilla LASSO estimate on the high-throughput genomic data set concerning riboflavin (vitamin B2) production rate \cite{buhlmann2014high}, we set $\lambda = 0.021864$. 
\Cref{fig:exp:real:high-dim:riboflavin:CMP_LASSO}, and we see that our point estimate is similar to the vanilla LASSO point estimate. 

For statistical inference, 
in our method, we compute p-values  using two-sided Z-test. 
Adjusting FWER to 5\% signifi-cance level, 
our method does not find any significant gene. 
\cite{javanmard2014confidence,buhlmann2014high} report that \cite{buhlmann2013statistical} also does not find any significant gene, 
whereas \cite{meinshausen2009p} finds one significant gene (YXLD-at), and \cite{javanmard2014confidence} finds two significant genes (YXLD-at and YXLE-at). 
This indicates that our method is more conservative than  \cite{javanmard2014confidence, meinshausen2009p}.

\begin{figure}[!h]
\centering
\includegraphics[width=.4\textwidth]{./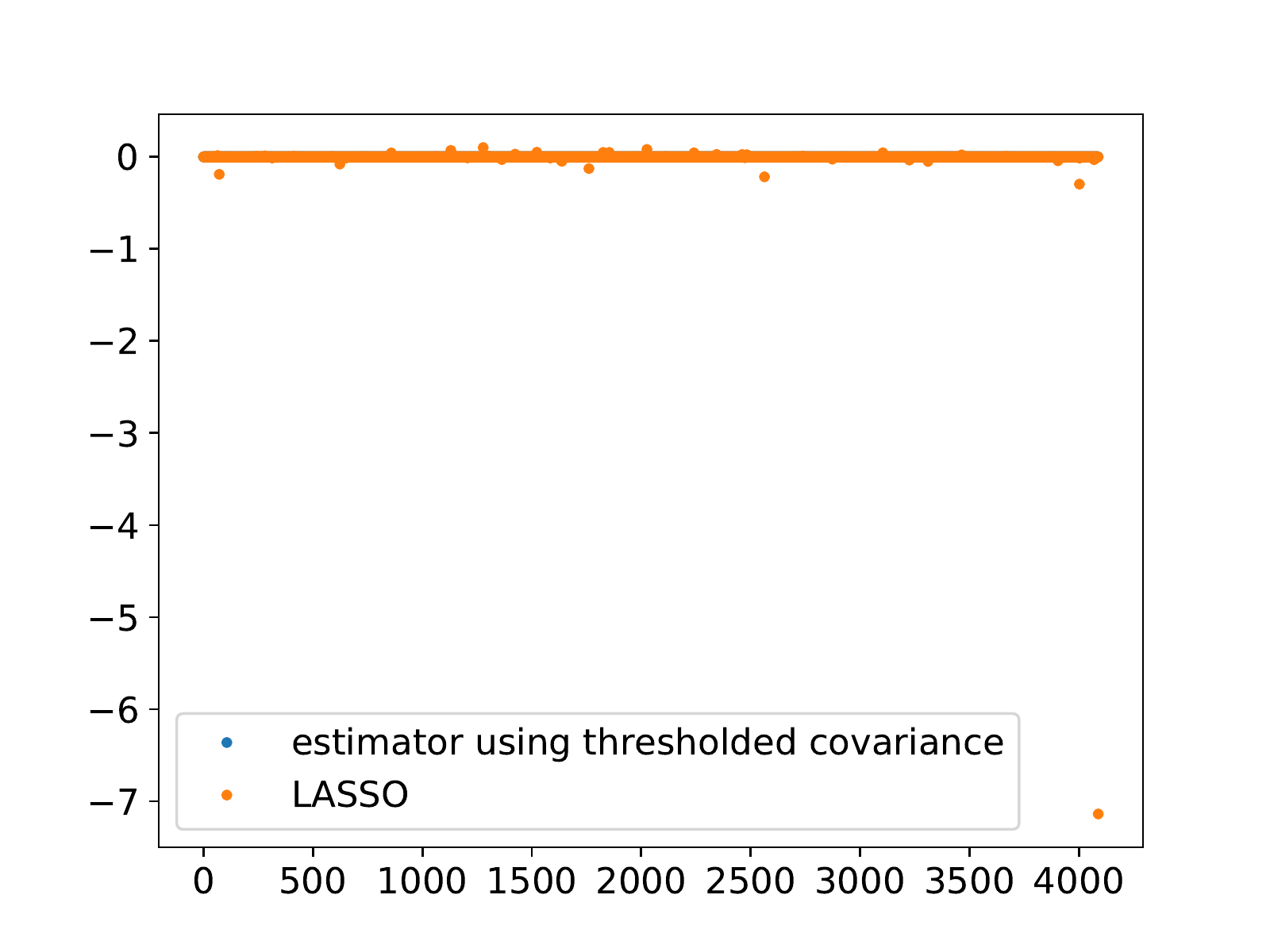}
\includegraphics[width=.4\textwidth]{./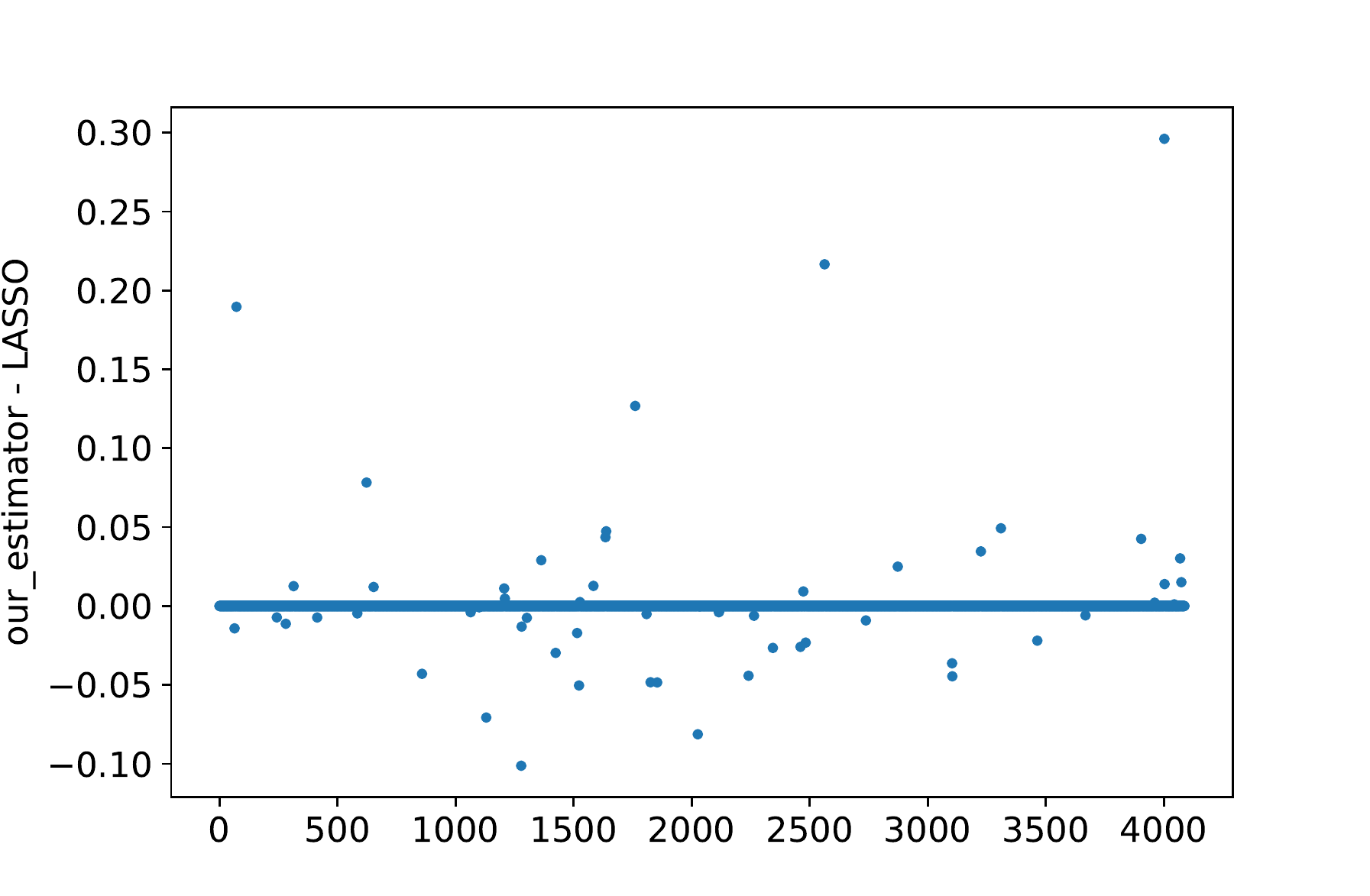}
\caption{Comparison of our high dimensional linear regression point estimate with the vanilla LASSO estimate}
\label{fig:exp:real:high-dim:riboflavin:CMP_LASSO}
\end{figure}

\subsubsection{Time series analysis}
\label{append:subsubsec:exp:real:time-series}

Using monthly equities returns data from \cite{frazzini2014betting}, 
we use our approximate Newton statistical inference procedure to show that the correlation between US equities market returns and non-US global equities market returns is statistically significant, 
which validates the capital asset pricing model (CAPM) \cite{sharpe1964capital, lintner1965valuation, fama2004capital}. 

We regress monthly US equities market returns from 1995 to 2018 against other countries' equities market returns, 
and each country's coefficient and its 95\% confidence interval is shown in \Cref{fig:exp:time-series:capm}. 
And we observe that the US market is highly positively correlated with Canada and other advanced economies such as Germany and UK.

\begin{figure}[!t]
\centering
\includegraphics[width=0.8\textwidth]{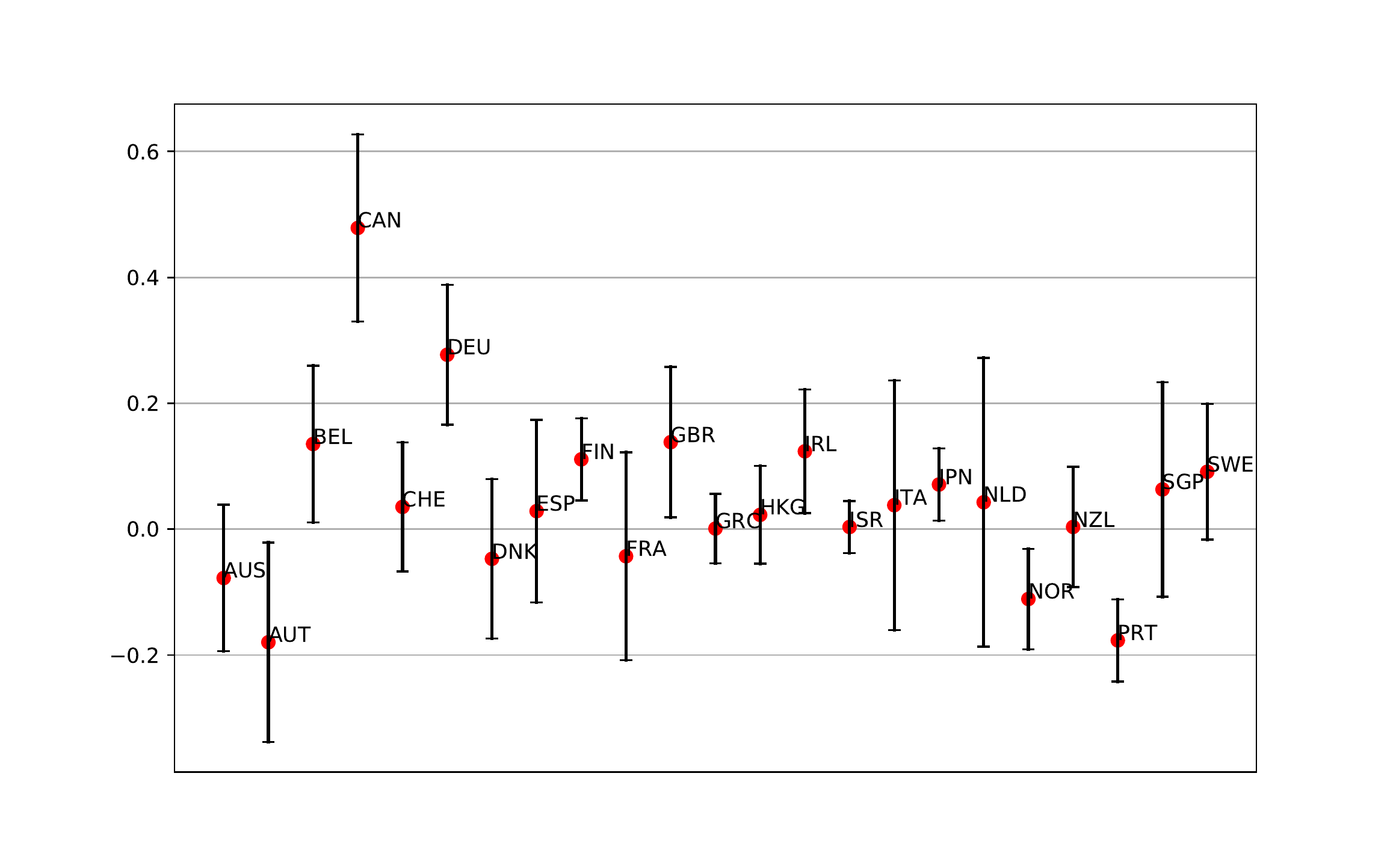}
\caption{Exposure of US equities market to equities markets of other countries}
\label{fig:exp:time-series:capm}
\end{figure}


\end{document}